\newtheorem{theorem}{Theorem}
\newtheorem{definition}{Definition}
\newtheorem{assumption}{Assumption}
\newtheorem{lemma}{Lemma}
\newtheorem{example}{Example}
\newtheorem{basic-eq}{}
\newtheorem{proofpart}{Part}
\def\w{\mathbf{w}}
\def\v{\mathbf{v}}
\def\W{\mathbf{W}}
\def\A{\mathbf{A}}
\def\P{\mathbf{P}}
\def\Q{\mathbf{Q}}
\def\U{\mathbf{U}}
\def\I{\mathbf{I}} 
\def\Ps{\mathbf{\Psi}}
\def\PO{\mathbb{P}_{\Omega}}
\def\om{\mathbf{1}_m}
\def\FE{{CEPS}}
\def\endthebibliography{%
	\def\@noitemerr{\@latex@warning{Empty `thebibliography' environment}}%
	\endlist
}
\begin{document}
\flushbottom

	\title{Sparse Decentralized Federated Learning}
	
	\author{Shan Sha,  Shenglong Zhou, Lingchen Kong, Geoffrey Ye Li, \IEEEmembership{Fellow, IEEE}
		\thanks{This work was supported by the Fundamental Research Funds for the Central Universities and the Talent Fund of Beijing Jiaotong University. {\it Corresponding author: Shenglong Zhou}.}%
		\thanks{Shan Sha, Lingchen Kong, and Shenglong Zhou are with the School of Mathematics and Statistics, Beijing Jiaotong University, Beijing, China. E-mail: \{20118023, lchkong, shlzhou\}@bjtu.edu.cn. Geoffrey Ye Li are with the Department of Electrical and Electronic Engineering, Faculty of Engineering, Imperial College London, London, U.K. E-mail: geoffrey.li@imperial.ac.uk.}%
	}

	
	\IEEEpubid{}
	
	\maketitle
	
	\begin{abstract}
		Decentralized Federated Learning (DFL) enables collaborative model training without a central server but faces challenges in efficiency, stability, and trustworthiness due to communication and computational limitations among distributed nodes. To address these critical issues, we introduce a sparsity constraint on the shared model, leading to Sparse DFL (SDFL), and propose a novel algorithm, CEPS. The sparsity constraint facilitates the use of one-bit compressive sensing to transmit one-bit information between partially selected neighbour nodes at specific steps, thereby significantly improving communication efficiency. Moreover, we integrate differential privacy into the algorithm to ensure privacy preservation and bolster the trustworthiness of the learning process. Furthermore, CEPS is underpinned by theoretical guarantees regarding both convergence and privacy. Numerical experiments validate the effectiveness of the proposed algorithm in improving communication and computation efficiency while maintaining a high level of trustworthiness.  

	\end{abstract}

	\begin{IEEEkeywords}
		SDFL, one-bit compressive sensing, communication and computational efficiency,  differential privacy
	\end{IEEEkeywords}

	\section{Introduction}
	\IEEEPARstart{F}{ederated} learning (FL \cite{mcmahan2017communication}, \cite{kairouz2021advances}) is a machine learning framework that enables collaborative model training across multiple nodes (e.g., devices or entities) by exchanging local parameters instead of raw data. 
	This approach has gained widespread attention with the fast development of Internet of Things, where data are stored in distributed devices and it is impossible to collect raw data as in traditional machine learning settings due to the restriction of communication resource, data privacy concern, or law regulations.  
	In the most predominate FL setting, a central server or coordinator is required to aggregate parameters from all nodes, update a global model, and broadcast it back to the nodes for further training, which is referred to as centralized federated learning (CFL). 
	In contrast, decentralized federated learning (DFL \cite{beltran2023decentralized}, \cite{yuan2024decentralized}), as a new setting of FL, eliminates the need for a central server, allowing nodes to communicate directly with their neighbours based on a predefined network topology and perform local update. DFL addresses the communication bottleneck inherent in CFL by distributing information exchange among all nodes, which avoids the limitation of single-point failure and improves fault tolerance. Moreover, DFL can enhance the robustness of the learning process because nodes constantly update their knowledge within the communication network and can achieve comparable or even superior accuracy and computational efficiency compared to CFL \cite{lian2017can}.
	
	Despite these advantages, the decentralized setting of DFL also introduces new challenges. Communication efficiency remains a primary concern, involving the selection of neighbour nodes for communication and the transmission of information without a central server. The absence of the central aggregator complicates theoretical analysis, as it requires managing multiple local models at each step instead of a single global model. Trustworthiness is another critical issue, although FL inherently prevents raw data sharing, it has been shown in \cite{mothukuri2021survey} that local model parameters can still leak private information. Therefore, implementing privacy-preserving techniques is essential in FL, and even more so in DFL, where decentralized communication is more susceptible to attacks compared to communication with a protected central server.
	
	\begin{table*}[t]
		\centering\renewcommand\arraystretch{1.1}\addtolength{\tabcolsep}{2pt}
				\caption{Comparisons of Different Algorithms.}
		\label{algcompare}
		\begin{tabular}{lllllll}
			\hline
			& Ref. & Local Updates & Device Participation & Communication Topology & Information Compression & Privacy Protection \\ \hline
			D-PSGD & \cite{lian2017can} & Single & Full & Mixing Matrix & None & None \\ 
			DCD-PSGD & \cite{tang2018communication} & Single & Full & Mixing Matrix & Difference Compression & None \\
			D-FedAvgM & \cite{sun2022decentralized} & Multiple & Full & Mixing Matrix & Quantization & None \\
			DFedSAM & \cite{shi2023improving} & Single & Full & Mixing Matrix & None & None \\
			MATCHA & \cite{wang2019matcha} & Multiple & Partial & Dynamic Network & None & None \\
			DP-DSGT & \cite{bayrooti2023differentially} & Single & Full & Mixing Matrix & None & Differential Privacy \\
			{CEPS} & Our & Multiple & Partial & Dynamic Network & 1BCS & Differential Privacy \\
			\hline
		\end{tabular}
	\end{table*}	
	\subsection{Prior Arts}
	We begin by reviewing foundational DFL algorithms and then discuss important DFL topics such as communication efficiency, communication topology, and privacy preservation.
	\subsubsection{Foundational DFL Algorithms}
	 A decentralized parallel stochastic gradient descent (D-PSGD) method has been introduced by \cite{lian2017can} to address DFL by performing SGD scheme synchronously and parallelly across all nodes, with the communication protocol defined by a symmetric doubly stochastic matrix. Subsequently, various methods based on this decentralized SGD scheme have been proposed.
	\cite{lalitha2018fully} presented a Bayesian-like approach by introducing of a belief over the model parameter space on local nodes, enabling nodes to communicate with fewer neighbours.
	\cite{pappas2021ipls} offered a decentralized framework based on the Interplanetary File System to overcome limitations of CFL, such as single points of failure and bandwidth bottlenecks.
	\cite{ye2022decentralized} developed a robust solution for DFL in unreliable communication environments by updating model parameters with partially received messages and optimizing mixing weights.
	To address the overfitting issue inherent in DFL, \cite{shi2023improving} incorporated Sharpness-Aware Minimization to improve generalization performance and considers multiple inter-node communications to enhance model consistency. Other related research includes \cite{li2021decentralized, kalra2023decentralized}.
	
	\subsubsection{Communication Efficiency}
	Communication efficiency is a primary goal in federated learning research, whether in centralized or decentralized settings. Two main strategies to reduce communication costs are data compression and reducing the number of communication rounds.
	In \cite{sun2022decentralized}, a decentralized version of the federated averaging algorithm with momentum (DFedAvgM) has been proposed. It performed multiple local updates with momentum and included one communication step per round to enhance efficiency. A quantized version of DFedAvgM was also investigated to reduce communication overhead further.
	\cite{liu2022decentralized} proposed a general framework of DFL to achieve a better balance between communication efficiency and model consensus with a similar strategy.
	Several works have explored different compression strategies in DFL. In \cite{tang2018communication}, two variants based on D-PSGD were proposed to enhance communication efficiency using extrapolation compression and difference compression strategies. \cite{koloskova2019decentralized} and \cite{koloskovadecentralized} provided a general quantized communication framework for DFL with arbitrary compression operators. \cite{dai2022dispfl} also discussed communication-efficient methods in DFL.
	
	\subsubsection{Communication Topology}
	Communication topology significantly affects the convergence speed and efficiency of DFL, as it determines which neighbours each node communicates within each round. Optimizing the communication topology can lead to faster convergence and reduced communication overhead.
	MATCHA \cite{wang2019matcha} speeded up decentralized SGD by prioritizing critical communication links and allowing parallel communication over disjoint links to minimize communication delays while maintaining fast convergence rates. GossipFL \cite{tang2022gossipfl} proposed a sparsification model to reduce communication traffic and introduced an adaptive peer-selection mechanism to improve bandwidth utilization while preserving model convergence. Other works such as \cite{hegedHus2019gossip, hegedHus2021decentralized} also addressed the impact of communication topology on DFL performance.
	\subsubsection{Privacy Preservation}
	Privacy preservation is a key feature of FL, ensuring that raw data is not shared with a central server or other nodes. However, if transmitted information (e.g., gradients) is intercepted by adversaries, the raw data can be reconstructed, leading to privacy breaches \cite{zhu2019deep, huang2021evaluating}. The importance of privacy preservation, along with other trustworthy artificial intelligence (AI) components such as fairness and robustness, is increasingly recognized in the development of secure and reliable AI systems (\cite{hu2020personalized,ur2021trustfed,ali2022federated}).
	Existing work in FL mainly focuses on deploying privacy-preserving techniques like differential privacy (DP) in centralized FL \cite{wei2020federated, naseri2020local, ryu2022differentially, lowy2023private}. In the context of DFL, research on privacy preservation is relatively limited. \cite{wang2023decentralized} proposed a differential privacy decentralized optimization algorithm based on SGD, providing privacy guarantees without relying on a central server. \cite{wang2023tailoring} considered static and dynamic consensus-based gradient methods where DP is combined with gossip algorithms to ensure privacy. \cite{bayrooti2023differentially} provided a generalization of D-PSGD with privacy guarantee.

	\subsection{Our contribution}

	The main contribution of this paper is the development of a novel SDFL algorithm, CEPS (standing for a \textbf{C}ommunication- \textbf{E}fficient and \textbf{P}rivate method for \textbf{S}DFL), designed to address multiple critical challenges in DFL. Table \ref{algcompare} compares CEPS with six DFL algorithms, highlighting its ability to deal with a wider range of challenges simultaneously. This capability stems from several key advantages outlined below.
	\begin{itemize}[leftmargin=12pt]
		\item \textit{Communication efficiency.} Due to the integration of a sparsity constraint in the learning model, we harness the 1BCS technique to exchange one-bit signals when two neighbour nodes communicate with each other. Consequently, the transmitted content is significantly lessened. To the best of our knowledge, we have not come across similar efforts to incorporate 1BCS into DFL. Simultaneously, each node communicates with its neighbours only at specific steps rather than every step to update its parameters, thereby greatly reducing the number of communication rounds.
		\item \textit{Computational efficiency.}  It results from two factors. Every node solves subproblems without much computational endeavour using closed-form solutions. Complex elements, such as gradients, are computed at designated steps and remain unchanged during other steps.
		\item \textit{Robustness against stragglers.} Each node can selectively pick partial neighbour nodes to collect messages to update its parameter, so neighbours who suffer from unreliable and imperfect communication links can be paid less attention.
		\item \textit{Privacy Protection.} By incorporating the Gaussian mechanism, we provide a rigorous privacy guarantee for data across all nodes under a certain noise perturbation level.
		\item \textit{Theoretical Guarantee.}  We provide a convergence bound of {CEPS} under common assumptions in DFL analysis, which ensures the algorithm gives an approximation solution in terms of the average global gradient.
		\item \textit{Desirable numerical performance.} Numerical experiments have demonstrated that the 1BCS technique enables effective communication and thus allows the proposed algorithm to achieve desirable training accuracy. We also compare {CEPS} with several leading DFL algorithms, highlighting its great potential in dealing with the challenges of imperfect and unreliable communication environments.	
	\end{itemize}
	
		We point out that part of our work has been previously published in \cite{zhou2024communication}. In this paper, we extend the original study with three major contributions: First, we enhance the algorithm by incorporating DP to ensure privacy protection. Second, we provide rigorous convergence proofs, offering strong theoretical guarantees. Third, extensive numerical experiments and comparisons with other DFL algorithms validate the effectiveness and performance of our approach. These improvements significantly advance the initial ideas and demonstrate the enhanced capabilities of  CEPS.	
	\subsection{Organization}
	The paper is organized as follows. Section \ref{Preliminaries} presents all notation, the models of interest, and the definition of DP. In Section \ref{DFL}, we develop the algorithm and highlight its several advantageous properties. In section \ref{Theory}, we provide theoretical analysis including the convergence and privacy guarantee.
	In Section \ref{Numerical}, we conduct some numerical experiments to demonstrate the performance of the proposed algorithm. Concluding remarks are given in the last section.

	\section{Preliminaries}
	\label{Preliminaries}
	In this section, we introduce the notation used in this paper,  the problems of DFL and SDFL, and the definition of DP.
	\subsection{Notation}
	We use plain, bold, and capital letters to present scalars, vectors, and matrices, respectively, e.g., $n$ and $\sigma$ are scalars, $\mathbf{w}$ and $\boldsymbol{\xi}$ are vectors, $\mathbf{W}$ and $\mathbf{\Xi}$ are matrices. Let $\mathbb{R}^n$ denote the $n$-dimensional Euclidean space equipped with inner product $\langle\cdot, \cdot\rangle$, defined by ${\langle\mathbf{w}, \mathbf{v}\rangle:=\sum_{i=1}^{m} w_i v_i}$, where `$:=$' means `define'. We use $\|\cdot\|$ to denote the Euclidean norm for vectors and the Frobenius norm for matrices, namely, ${\|\mathbf{w}\| = \sqrt{\sum_{i} w_i^2}}$  and ${\|\W\| = \sqrt{\sum_{ij}  W_{ij}^2}}$. 
	The sign function is defined as ${\operatorname{sign}(t)=1}$ if ${t>0}$ and $-1$ otherwise and is applied element-wisely for vectors: $\operatorname{sign}(\mathbf{w}) = \left(\operatorname{sign}(w_1), \operatorname{sign}(w_2), \dots, \operatorname{sign}(w_n)\right)^\top$, where $^\top$ represents the transpose.  The projection of a point $\mathbf{w}$ onto $\Omega$ is defined by
	$$
	\mathbb{P}_{\Omega}(\mathbf{w}) \in \operatorname{argmin}_{\mathbf{z} \in \Omega}\|\mathbf{z}-\mathbf{w}\|^2,
	$$
	which keeps the first $s$ largest (in absolute value) entries in $\Omega$ and sets the remaining entries to be zero. Finally, by letting $\mathbf{W}:=\left(\mathbf{w}_1, \mathbf{w}_2, \cdots, \mathbf{w}_m\right)$, the sparse projection operator on matrix $\mathbf{W}$ is defined as $$\mathbb{P}_{\Omega}(\mathbf{W}) = \left(\mathbb{P}_{\Omega}(\mathbf{w}_1),\mathbb{P}_{\Omega}(\mathbf{w}_2),\cdots,\mathbb{P}_{\Omega}(\mathbf{w}_m)\right).$$
	
	\subsection{Decentralized Federated Learning}
	Given $m$ nodes
	${{V}:=\{1,2, \cdots, m\}}$, each node $i$ has a loss function ${f_i(\cdot):=f_i\left(\cdot ; {D}_i\right)}$ associated with private dataset ${D}_i$, where ${f_i: \mathbb{R}^n \rightarrow \mathbb{R}}$ is continuous and bounded from below. In a DFL scenario,  a general setting of communication topology can be given as follows: Each node $i$ only exchanges messages with its neighbours ${j \in {N}_i}$, where ${{N}_i\subseteq {V}}$ consists of node $i$ and the nodes connected to node $i$. Overall, we obtain a graph ${{G} = ({V}, {E})}$, where ${{E} = \{(i, j): j \in {N}_i, i \in {V}\}}$ is the set of edges. Based on this setting, the target graph is undirected, namely if ${i\in{N}_j}$ then ${j\in{N}_i}$. Hereafter, we always assume graph ${G}$ is connected. The task of DFL is to train a model by solving the following optimization problem	 
	\begin{equation}\label{objfunc}
	\begin{aligned}
	f^*:=\min _{\mathbf{w}_i \in \mathbb{R}^n}~& \sum_{i \in {V}} f_i(\mathbf{w}_i),\\
		\text { s.t. } ~& \mathbf{w}_i=\mathbf{w}_j, ~(i, j) \in {E},
	\end{aligned}\tag{DFL}
	\end{equation}
	where ${\mathbf{w}_i, i \in {V}}$ are the parameters to be learned. Since graph ${G}$ is connected, it follows $\mathbf{w}_1=\mathbf{w}_2=\cdots=\mathbf{w}_m$. As each $f_i$ is bounded from below, we have ${f^*>-\infty}.$ Under the setting of DFL, each node trains its own local model $\mathbf{w}_i$, and the model consensus is realized by communication among nodes.

	\subsection{Sparse DFL}
	 The objective of our work is to train a model by solving the following sparsity constrained DFL model,
	\begin{equation}\label{dfl}
		\begin{aligned}
			\min _{\mathbf{W}} ~& \sum_{i \in {V}} f_i\left(\mathbf{w}_i\right) \\
			\text { s.t. } ~&  \mathbf{w}_i=\mathbf{w}_j, ~(i, j) \in {E},~~\mathbf{w}_i \in \Omega,~i \in {V}, 
		\end{aligned}\tag{SDFL}
	\end{equation}
	where ${\mathbf{W}:=\left(\mathbf{w}_1, \mathbf{w}_2, \cdots, \mathbf{w}_m\right) \in \mathbb{R}^{n \times m}}$ and $\Omega$ is a sparsity constraint defined by
	$$
	\Omega:=\left\{\mathbf{w} \in \mathbb{R}^n:\|\mathbf{w}\|_0 \leq s\right\}.
	$$
	Here, $\|\mathbf{w}\|_0$ is known as a zero norm that counts the number of non-zero entries of $\mathbf{w}$ and $s \ll n$ is a given sparsity level.   The motivation for studying \eqref{dfl} arises from two aspects. First, the incorporation of a sparsity constraint drastically reduces the volume of transmitted messages, thereby significantly enhancing communication efficiency, as demonstrated in Section \ref{Com-eff} and supported by our numerical experiments (see Fig. \ref{example1_data}). Second, model \eqref{dfl} is a special case of sparse optimization, rooted in the principles of compressive sensing \cite{candes2005decoding, donoho2006compressed}. As stated in  \cite{bach2012optimization, baraniuk2010applications, zhou2021global},  leveraging sparsity provides computationally feasible ways to reveal the structure of massive high-dimensional data, giving rise to a large repertoire of efficient algorithms for fast computation. Overall, \eqref{dfl} ensures both high communication efficiency and computational feasibility.

	\subsection{Differential Privacy}
	DP is intrinsically a property of a randomized mechanism \cite{dwork2006calibrating}.  We adopt the definition of DP from \cite{abadi2016deep}. 
	\begin{definition} (Differential Privacy)
		A randomized mechanism: ${\mathcal{M}: {D} \rightarrow \mathbb{R}}$ is $(\epsilon, \delta)$-differentially private if, for any two adjacent datasets $H,H' \subseteq {D}$ and any $S \subseteq \mathbb{R}$ satisfies
		\begin{equation}
			\operatorname{Pr}[\mathcal{M}(H) \in S] \leq e^{\varepsilon} \operatorname{Pr}\left[\mathcal{M}\left(H'\right) \in S\right]+\delta,
			\label{DP}
		\end{equation}
		where adjacent datasets refer to $H$ and $H'$ are at the same size and differ in exactly one record. 
	\end{definition}	
	From the perspective of algorithmic design, an algorithm is of DP if it generates a randomized response that satisfies condition (\ref{DP}). This ensures that the outputs are statistically similar for adjacent datasets, thereby preventing adversaries from inferring specific information about individual data points.		
	The DP of an algorithm is typically achieved by adding noise drawn from a specific probability distribution. The magnitude of this noise depends on the functional sensitivity, which quantifies how much the output can change in response to variations in the input. 
	\begin{definition} (Function Sensitivity)
		The global sensitivity of an algorithm (function) $
		{\mathcal{A}:{D} \rightarrow \mathbb{R}}$ is the maximum output difference for any two adjacent datasets $H$ and $H'$,
		\begin{equation}
			\eta(\mathcal{A})=\max _{\substack{H,H' \subseteq {D} }}\|\mathcal{A}(H)-\mathcal{A}(H')\|_2 .
			\label{GS}
		\end{equation}
	\end{definition}
	It is obvious from (\ref{DP}) and (\ref{GS}) that smaller values of $\varepsilon$ and $\delta$ provide stronger privacy protection, while a larger $\eta(\mathcal{A})$ indicates higher sensitivity to input changes, necessitating more noise. A popular mechanism $\mathcal{M}$ to ensure DP of algorithm $\mathcal{A}$  is the Gaussian mechanism, 
	\begin{equation*}
		\mathcal{M}({D})=\mathcal{A}({D})+\mathcal{N}\left(0, \frac{2 \ln (1.25 / \delta)}{\varepsilon^2}\eta^2(\mathcal{A})\right),
	\end{equation*}
	where $\mathcal{N}(0, \sigma^2)$ denotes a Gaussian distribution with mean zero and variance $\sigma^2$.

	\section{DFL via Inexact ADM}
	\label{DFL}

	\begin{algorithm*} [!th]
		Initialize $\mathbf{W}^0=0$ and $\mu>0$.
		
		\For{every node $i\in{V}=\{1,2,\cdots,m \}$}{  
			Initialize  an integer $ \kappa_i>0$, three real numbers $\varrho_i>0, \gamma_i>0, \sigma_i >0$, and an encoding matrix $\mathbf{\Phi}_i\in\mathbb{R}^{d_i\times n}$. 
			
			Initialize  $m_i^{-1}=|N_i|$ and $\mathbf{u}_i^{-1}=-\nabla f_i(\w_i^0)$. 
			
			Share  $\gamma_i$ and $\mathbf{\Phi}_i$ with neighbours nodes in ${N}_i$ through certain agreements.
			
		} 
		
		\For{$k=0,1,2,3,\cdots $}{
			
			\For{every node $i\in{V}$}{ 				
			
				\If{$k \in\mathcal{K}_i :=\{\kappa_i,2\kappa_i,3\kappa_i,\cdots \}$} 
				{
					
					 Randomly select a subset ${N}_i^{k}\subseteq {N}_i$ and  send request to neighbour nodes in ${N}_i^{k}$. 
					

					Employ Algorithm \ref{algorithm-1bcs} to recover $\mathbf{z}_j^{k}$ by $\mathbf{z}_j^{k}= 
					\texttt{1BCS}(\mathbf{w}_j^{k},\mathbf{\Phi}_j,\gamma_j)$ for each neighbour $j\in{N}_i^{k}$.

					Generate noise $\boldsymbol{\xi}_i^{k}  \sim \mathcal{N}\left(\mathbf{0}, \varrho_i \mathbf{I}\right)$ and update parameters by
					\begin{eqnarray}\label{def-w-bar-tilde} m_i^{k}=\left|{N}_i^k\right|,~~~\overline{\mathbf{w}}_i^{k}=\frac{1}{m_i^{k}} \sum_{j\in{N}_i^{k}}\mathbf{z}^{k}_j,~~~\mathbf{u}_i^{k}= \sigma_i m_i^{k} \overline{\mathbf{w}}_i^k-\nabla f_i(\overline{\mathbf{w}}_i^k),~~~{\mathbf{w}}_{i}^{k+1}=\mathbb{P}_{\Omega} \left(\frac{ {\mathbf{u}}_{i}^{k}+\boldsymbol{\xi}_i^{k} }{ \sigma_i m_i^{k} }\right).
					\end{eqnarray} 
				}
				\Else{
				\begin{equation}\label{iceadmm-sub5}
					\begin{aligned}
						m_i^{k}=m_i^{k-1},\qquad {\mathbf{u}}_{i}^{k}=  {\mathbf{u}}_{i}^{k-1},\qquad {\mathbf{w}}_{i}^{k+1}=
						 \mathbb{P}_{\Omega} \left(\frac{ {\mathbf{u}}_{i}^{k}+\mu \mathbf{w}^{k}_i}{ \sigma_i m_i^{k} +\mu}\right).
					\end{aligned}\end{equation}				
				}
												 
			}
		}
		\caption{{\FE }: \textbf{C}ommunication-\textbf{E}fficient and \textbf{P}rivate method for \textbf{S}DFL.
			\label{algorithm-{CEPS}}}
		
	\end{algorithm*} 
	
	\subsection{Algorithm Design}
	 To address  (\ref{dfl}), we solve its personalized version,
	\begin{equation}
		\begin{aligned}
			\min _{\mathbf{W}} ~~& \sum_{i \in {V}}\Big(f_i\left(\mathbf{w}_i\right)+\frac{\sigma_i}{2} \sum_{j \in {N}_i}\left\|\mathbf{w}_i-\mathbf{w}_j\right\|^2\Big), \\
			\text { s.t. } ~~& \mathbf{w}_i \in \Omega,~~~ i \in {V},
		\end{aligned}
	\end{equation}
	where ${\sigma_i>0}$. We adopt the inexact alternating direction method (iADM) to solve the above problem. To be more specific, when point $\mathbf{W}^k=\left(\mathbf{w}_1^k, \mathbf{w}_2^k, \cdots, \mathbf{w}_m^k\right)$ is obtained, we update $\mathbf{w}_i^{k+1}$ for $i \in {V}$ by solving the following problem,
	\begin{equation}\label{subproblem-wk1}
		\min _{\mathbf{w}_i \in \Omega}~~ f_i\left(\mathbf{w}_i\right)+ \frac{\sigma_i}{2} \sum_{j \in {N}_i^k}\left\|\mathbf{w}_i-\mathbf{w}_j^k\right\|^2,
	\end{equation}
	where {${N}_i^k\subseteq{N}_i$} is a randomly selected subset and also contains node $i$. 	The overall algorithmic framework is presented in  Algorithm \ref{algorithm-{CEPS}}.  We would like to explain why we update $\w_i^k$ by \eqref{def-w-bar-tilde} and \eqref{iceadmm-sub5}. Let $a\in\{1,2,3,\ldots\}$. For each node $i$, we consider $\kappa_i$ consecutive iterations: $$(a-1)\kappa_i+1, ~(a-1)\kappa_i+2, ~\ldots,  a\kappa_i-1, ~a\kappa_i.$$ 
	\begin{itemize}[leftmargin=12pt]
		\item  When  $k=(a-1)\kappa_i+1,(a-1)\kappa_i+2,\ldots,a\kappa_i-1$ where no  communication occurs between node $i$ and its neighbour nodes $j\in{N}_i^k\equiv {N}_i^{(a-1)\kappa_i}$, we update $\mathbf{w}_i^{k+1}$ by  solving 
			\begin{align}		\label{update-w-no-commu}
				\mathbf{w}_i^{k+1}
				=~&\underset{\mathbf{w}_i\in\Omega}{\rm argmin}~    \left\langle\nabla f_i\left(\overline{\mathbf{w}}_i^{(a-1)\kappa_i}\right), \mathbf{w}_i \right\rangle\nonumber \\[1.25ex]
				+~&\frac{\sigma_i}{2} \sum_{j \in {N}_i^{k}} \left\|\mathbf{w}_i-\mathbf{z}_j^{(a-1)\kappa_i}\right\|^2+\frac{\mu}{2}\left\|\mathbf{w}_i- {\mathbf{w}}_i^{k}\right\|^2  \nonumber \\
				=~&\mathbb{P}_{\Omega} \left(\frac{\mathbf{u}_i^{(a-1)\kappa_i} +\mu \mathbf{w}^{k}_i}{ \sigma_i m_i^{(a-1)\kappa_i} +\mu}\right),
			\end{align}
	where $\overline{\mathbf{w}}_i^{k}$ and $\mathbf{u}_i^k$ are defined in \eqref{def-w-bar-tilde}, and $\mathbf{z}_j^{k}$ is the recovered version of $\mathbf{w}_j^k$ by node $i$ from node $j$. This leads to    \eqref{iceadmm-sub5}. One can observe that  for these ${\kappa_i-1}$ iterations, $\mathbf{u}_i^{(a-1)\kappa_i}$	is fixed, lessening the computational complexity. 
					\item  When  $k=a\kappa_i$ where communication occurs between node $i$ and node  $j\in{N}_i^k$, we update $\mathbf{w}_i^{k+1}$ by  solving 

			\begin{align}		\label{update-w-commu}
				\mathbf{w}_i^{k+1}
				=&\underset{\mathbf{w}_i\in\Omega}{\rm argmin}    \left\langle\nabla f_i\left(\overline{\mathbf{w}}_i^{k}\right)-\boldsymbol{\xi}_i^{k} , \mathbf{w}_i \right\rangle + \frac{\sigma_i}{2} \sum_{j \in {N}_i^{k}} \left\|\mathbf{w}_i-\mathbf{z}_j^{k}\right\|^2 \nonumber \\
				=&\mathbb{P}_{\Omega} \left(\frac{\mathbf{u}_i^{k} +\boldsymbol{\xi}_i^{k} }{ \sigma_i m_i^{k} }\right),
			\end{align}
	where $\boldsymbol{\xi}_i$ is the noise which aims to  perturb $\nabla f_i(\overline{\mathbf{v}}_i^k)$ so as to ensure the data privacy for node $i$.  This gives rise to updating $\mathbf{w}_i^{k+1}$ by  \eqref{def-w-bar-tilde}.			
	\end{itemize}

	\subsection{One-bit Compressive Sensing} 
	To ensure the great efficiency of communication among neighbour nodes, our aim is to exchange information in the form of 1-bit data. However, recovering accurate information (i.e., the trained local parameter) from 1-bit observations is quite challenging without additional information. To break through this limitation, we leverage the technique of 1BCS, which involves two phases.
	\begin{itemize}[leftmargin=13pt] 
		\item \textit{Phase I: Encoding.} Suppose each node ${i \in {V}}$ has a encoding matrix ${\boldsymbol{\Phi}_i \in \mathbb{R}^{d_i \times n}}$, where ${d_i>0}$. This coding matrix is only accessible to node $i$ 's neighbours ${N}_i$ through certain agreements. When a parameter ${\mathbf{w}_j \in \Omega}$ is trained at node $j \in {N}_i$, node $j$ encodes it as presented in Algorithm \ref{algorithm-1bcs}, where ${\gamma_j>0}$ (e.g.,  $10$ or $100$) and $\odot$ is the Hadamard product. We point out Step 2 in Algorithm 2 aims to rescale $\mathbf{w}_j$ so that $\mathbf{x}_j$ does not have tiny magnitudes of non-zero entries. It is recognized that small values in the magnitude of $\mathbf{w}_j$ can cause failures of many algorithms to recover $\mathbf{w}_j$.
		\item \textit{Phase II: Decoding.} After receiving 1-bit signal $\mathbf{c}_j$ and length $\left\|\mathbf{w}_j\right\|$, node $i$ tries to decode $\mathbf{w}_j$ as presented in Algorithm \ref{algorithm-1bcs}. This is the well-known 1BCS \cite{boufounos20081}. The decoding process (i.e., solving problem (\ref{1bcs})) can be accomplished using a gradient projection subspace pursuit (GPSP) algorithm proposed in \cite{zhou2022computing}. It has been shown in \cite{zhou2022computing}, \cite{jacques2013robust} that solution $\mathbf{w}_j$ can be decoded successfully with high probability if $\mathbf{\Phi}_i$ is a randomly generated Gaussian matrix and $d_i$ exceeds a certain threshold. Moreover, both theoretical and numerical results have demonstrated that the sparser (i.e., the smaller $s$) solution $\mathbf{w}_j$, the easier the problem to be solved. This explains why we introduce a sparsity constraint in (\ref{objfunc}).
	\end{itemize}

	\begin{algorithm}
		\caption{$\mathbf{z}_j=1 \operatorname{BCS}\left(\mathbf{w}_j, \mathbf{\Phi}_i, \gamma_j\right)$}
		\textit{$--$ encoding by node $j--$} \\
		1. Compute $\left\|\mathbf{w}_j\right\|$. \\
		2. Let $\mathbf{x}_j=\operatorname{sign}\left(\mathbf{w}_j\right) \odot \log _{\gamma_j}\left(1+\left|\mathbf{w}_j\right|\right).$ \\
		3. Compress $\mathbf{x}_j$ by $\mathbf{c}_j:=\operatorname{sign}\left(\mathbf{\Phi}_i\left(\mathbf{x}_j /\left\|\mathbf{x}_j\right\|\right)\right)$. \\
		4. Send $\left(\left\|\mathbf{w}_j\right\|, \mathbf{c}_j\right)$ to node $i$. \\
		\textit{$--$ decoding by node $i--$} \\
		5. Find a solution $\mathbf{v}_j$ to 
		\begin{equation}
			\mathbf{c}_j=\operatorname{sign}\left(\mathbf{\Phi}_i \mathbf{v}\right), \quad\|\mathbf{v}\|_0 \leq s.
			\label{1bcs}
		\end{equation}
		
		\noindent6. Compute $\mathbf{v}_j=\operatorname{sign}\left(\mathbf{v}_j\right) \odot\left(\gamma_j^{\left|\mathbf{v}_j\right|}-1\right)$. \\
		7. Let $\mathbf{z}_j=\left(\left\|\mathbf{w}_j\right\| /\left\|\mathbf{v}_j\right\|\right) \mathbf{v}_j$ be an estimator to $\mathbf{w}_j$.
		\label{algorithm-1bcs}
	\end{algorithm}

	\subsection{Communication Efficiency}\label{Com-eff}
	Our approach enhances communication efficiency through two key factors. Firstly, communication among neighbouring nodes ${N}_i$ only occurs when ${k \in \mathcal{K}_i=\left\{\kappa_i, 2 \kappa_i, \cdots\right\}}$, meaning it does not happen at every iteration step. Therefore, a larger $\kappa_i$ allows node $i$ more steps to update its local parameters, leading to an improved communication efficiency. Such a scheme has been extensively employed in literature \cite{zheng2016asynchronous,yu2019parallel,wang2021cooperative,mcmahan2017communication,li2019convergence,li2020federated,zhou2023fedgia}. Most importantly, as shown in Algorithm \ref{algorithm-1bcs}, only 1-bit signal $\mathbf{c}_j \in \mathbb{R}^{d_i}$ and a positive scalar are transmitted among neighbour nodes, significantly enhancing communication efficiency. Moreover, if we take $d_i<n$, then the signal is a compressed version of $\mathbf{w}_j \in \mathbb{R}^n$, further reducing communication costs. Taking ${d_i=n=10^4}$ as an example, transmitting a dense vector $\mathrm{w} \in \mathbb{R}^n$ with double-precision floating entries directly between two neighbours would need $64 \times 10^4$ bits. In contrast, Algorithm \ref{algorithm-1bcs} only transmits $\left(64+10^4\right)$-bit content.
	
	\subsection{Computational Efficiency}
	While the decoding process in Algorithm \ref{algorithm-1bcs} for node $i$ may incur some computational cost, it only occurs at step ${k \in \mathcal{K}_i}$. From \cite{zhou2022computing}, the decoding complexity is about $O\left(c_1\left(c_2 d_i n+d_i s^2+s^3\right)\right)$, where $c_1$ and $c_2$ are two small integers. In general, $s^2 \leq \min \left\{d_i^2, n\right\}$, thereby the complexity reducing to $O\left(c_1 c_2 d_i n\right)$. If we generate sparse $\boldsymbol{\Phi}_i$ with $\alpha$ (e.g. $0.05)$ percent of entries to be non-zeros for each $i \in {V}$, then the complexity is further reduced to $O\left(\alpha c_1 c_2 d_i n\right)$. Moreover, as the communication only occurs at $k \in \mathcal{K}_i$, complex items $\mathbf{u}_i^{k}$ in (\ref{def-w-bar-tilde})  only need to be computed once for consecutive $\kappa_i$ iterations. While the computation for $\mathbf{w}_i^{k}$ is quite cheap, with computational complexity about $O\left(n m_i^{k}+s \log (s)\right)$.
	
	\subsection{Partial Device Participation and Partial Synchronization}
	Each node $i$ only selects a subset of neighbour nodes, ${N}_i^{k} \subseteq {N}_i$, for the training at every step, enabling node $i$ to deal with the straggler's effect. Following the approach in \cite{li2019convergence}, node $i$ sets a threshold $t_i \in [1, |{N}_i|)$ and collects signals from the first $t_i$ responding neighbour nodes to form ${N}_i^{k}$. Once $t_i$ signals are received, node $i$ proceeds without waiting for the remaining nodes, which are considered stragglers for that iteration. In practical applications, a neighbour node $j$ with unreliable and imperfect communication links can be treated as a straggler, indicating that node $i$ should not pick it, i.e., $j \notin {N}_i^{k}$. Note that such a strategy is known for partial device participation in centralized FL \cite{li2020secure,zhou2023federated,li2022federated}.
	
	Moreover, one can observe that Algorithm $\ref{algorithm-{CEPS}}$ enables node $i$ to keep using $\mathbf{z}_j^{k}$ for nodes $j \notin {N}_i^{k}$ as these parameters are previously stored at node $i$. Therefore, node $i$ only needs to wait for signals from nodes ${N}_i^{k}$ to synchronize.

		Finally, each node $i$ operates with a communication interval $\kappa_i$, allowing for asynchronous behavior across different nodes. This flexibility enables each node to control its local training schedule independently.
	
	\subsection{Privacy Guarantee}
	To ensure privacy protection, we apply the Gaussian mechanism to the gradient computations, adding random noise to achieve differential privacy. Since the raw dataset at each node is accessed only during the gradient calculation, adding noise at this step ensures that the entire algorithm adheres to differential privacy through the post-processing property \cite{abadi2016deep}. Specifically, in each iteration, the computation at node $i$ can be viewed as $h \circ \nabla f_i$, where $\nabla f_i: {D}_i \rightarrow \mathbb{R}^n$ is the gradient step involving the data, and $h: \mathbb{R}^n \rightarrow \mathbb{R}^n$ represents subsequent computations in (\ref{def-w-bar-tilde}) that do not access the data.
	
	We emphasize that this privacy-preserving mechanism not only provides rigorous differential privacy guarantees but also enhances robustness. Robustness, in this context, refers to the model's ability to maintain consistent predictions in the presence of small adversarial perturbations $\boldsymbol{\delta}$ to input data $\mathbf{x}$. 
As discussed in \cite{lecuyer2019certified, asi2023robustness}, the noise magnitude required for achieving a certain level of privacy (characterized by $(\epsilon, \delta)$) can be leveraged to provide robustness guarantees.

	\section{Theoretical Analysis}
	\label{Theory}
	In this section, we provide a theoretical analysis of our proposed algorithm, including the assumptions, matrix representations, convergence analysis, and privacy guarantees.
	\subsection{Convergence analysis}
	To simplify the convergence analysis, we choose the following parameters.
	\begin{itemize}[leftmargin=13pt]
		\item[1)] Set ${\W^0 = \mathbf{0}}$ and let ${\kappa_i= k_0}$ for any ${i\in{V}}$ for simplicity (In fact, we can let $k_0$ be the least common multiple of $\{\kappa_1,\kappa_2,\ldots,\kappa_m\}$ and then the subsequent analysis remains similar to the case of ${\kappa_i= k_0}$). This allows us to focus on steps $ak_0, a=0,1,2,3,\ldots$. Therefore, at $a k_0$th step, all nodes communicate with their neighbours. At this moment, we define an average point by
		\begin{equation}\label{avgpoint}
			\begin{aligned}
				\boldsymbol{\varpi}^{a k_0}  :=\frac{1}{m}\sum_{i=1}^m\mathbf{w}_i^{a k_0}.
			\end{aligned}
	\end{equation}
		\item[2)] Set $m_i = m_j, \sigma_i = {c}/{ t_i}, 	m_i^k = t_i $ for $\forall~ i,j, k$, where  $c>0$ can be chosen flexibly and $t_i\leq m_i$ is a positive integer.
	\end{itemize}
	To build the convergence, we make the following assumptions.
	\begin{assumption}\label{assump-gradientlip}
		For every $ {i \in {V}}$, gradient $\nabla f_i$ is  Lipschitz continuous with a  constant ${\ell_i>0}$. Denote $\ell:=\max_{i\in{V} }\ell_i$.
	\end{assumption}
	\begin{assumption}\label{assump-variancebound}
	Let $\mathcal{U}({V})$ be a uniform distribution on ${V}$. For any $\w$,	${\mathbb{E}_{i \sim \mathcal{U}({V})}\|\nabla f_i(\mathbf{w})- \frac{1}{m}\sum_{j\in{V}} \nabla f_j(\mathbf{w})\|^2 \leqslant \zeta^2}$.		
	\end{assumption}
	\begin{assumption}\label{assump-1bcs}
		1BCS technique accurately recovers $\mathbf{w}_j^k$.	
	\end{assumption}
Noted that Assumptions \labelcref{assump-gradientlip,assump-variancebound,} are standard in DFL, and  Assumption \ref{assump-1bcs} can be justified with high probability  \cite{zhou2022computing,jacques2013robust}.
	\begin{assumption}\label{assump-communicate} Let {${E}^k:=\{(i,j):j\in{N}_i^k,i\in{V}\}$} be the set of the selected edges at step $k$ and $B$ be a given integer. Suppose graph
	\begin{equation}\label{connection-G}
	{G}_t^B:=\left({V},{E}^{tk_0}\cup{ E}^{(t+1)k_0}\cup\ldots\cup{ E}^{(t+B)k_0} \right)
		\end{equation}
is connected for any $t\in\{0,1,2,\ldots\}$.
		\end{assumption} 
		The above assumption is related to the  uniformly
strong connection \cite{nedic2014distributed} and can be satisfied with a high probability. For example, a sufficient condition to ensure this assumption is that every pair of clients communicates at least once for each $B$ communication rounds, and one can check that this sufficient condition  holds with a probability 
		$$
		p \geq \prod_{i=1}^m \left[1 - \left(1 - \frac{t_i - 1}{m - 1}\right)^B \right]^{m - 1},
		$$	
		which is relatively high, e.g., ${p\approx 1}$ when ${m = 32}$, ${B\geq 20}$,  and ${t_i \geq  ({m+1})/2}$. Assumption \ref{assump-communicate} allows us to avoid imposing specific conditions on the communication matrix, thereby increasing its generality and extending its applicability. Finally, letting ${B>0}$  given in Assumption \ref{assump-communicate}, we define
			\begin{equation}\label{lower-bd-c}
			\begin{aligned}
			c_0 := \left(\frac{ 80m\sqrt{\ell} }{1-\tau}\right)^2,~ \tau :=\left(1-\frac{1}{m^{mB}}\right)^{\frac{1}{B}}.  
			\end{aligned}
		\end{equation}		
It is clear that ${\tau\in(0,1)}$. When $\mathbf{w}_1=\cdots=\mathbf{w}_m$, we write 
\begin{equation*} 
 f(\w):= \sum_{i \in {V}} f_i(\w). 
	\end{equation*} 
	\begin{theorem}		\label{theorem2}
		Under Assumptions \labelcref{assump-gradientlip,assump-variancebound,assump-1bcs,assump-communicate} and setting ${c\geq c_0}$, the following bound holds for Algorithm \ref{algorithm-{CEPS}},
			\begin{equation*}
			\begin{aligned}
					 \frac{1}{  T} \sum_{a=0}^{T-1} \mathbb{E}\left\|\nabla f\left(\boldsymbol{\varpi}^{ak_0}\right)\right\|^2 
				 {\leq}~&~  \frac{8mc(f\left(0\right) - f ^*)}{T}\\
				 + ~& ~ \frac{m^2c_0\zeta^2 + 16mck_0  e_{\infty}}{c}, 
			\end{aligned}
		\end{equation*}
		where $\varrho :=\sum_{i=1}^m\varrho_i$ and $e_{\infty}$ is an error bound relied on the sparse projection of each iteration and the added noise. 
	\end{theorem}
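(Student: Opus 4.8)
The plan is to track the averaged iterate $\boldsymbol{\varpi}^{ak_0}$ across successive communication rounds, establish an approximate descent inequality for $f$ along this sequence, and then telescope. First I would pass to a matrix formulation: stacking the local parameters into $\W^k$, the local averaging $\overline{\mathbf{w}}_i^k=\frac{1}{t_i}\sum_{j\in{N}_i^k}\mathbf{z}_j^k$ becomes multiplication by a random row-stochastic mixing matrix, and under \cref{assump-1bcs} the recovered signals satisfy $\mathbf{z}_j^k=\mathbf{w}_j^k$ exactly so no decoding error enters. With the choices $\sigma_i=c/t_i$ and $m_i^k=t_i$ we have $\sigma_i m_i^k=c$, so the communication update $\mathbf{w}_i^{k+1}=\PO\big((\mathbf{u}_i^k+\boldsymbol{\xi}_i^k)/(\sigma_i m_i^k)\big)$ collapses to a hard-thresholded gradient step $\PO\big(\overline{\mathbf{w}}_i^k-\tfrac1c\nabla f_i(\overline{\mathbf{w}}_i^k)+\tfrac1c\boldsymbol{\xi}_i^k\big)$ taken from the local average, while the $k_0-1$ intervening non-communication steps \eqref{iceadmm-sub5} are $\mu$-proximal refinements that reuse the frozen $\mathbf{u}_i^{ak_0}$.

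The second step is a one-round descent estimate. Using the $\ell$-smoothness of each $f_i$ from \cref{assump-gradientlip}, I would expand $f(\boldsymbol{\varpi}^{(a+1)k_0})$ about $\boldsymbol{\varpi}^{ak_0}$ and substitute the dynamics of the average, producing a leading term $-\frac{1}{mc}\langle\nabla f(\boldsymbol{\varpi}^{ak_0}),\sum_i\nabla f_i(\overline{\mathbf{w}}_i^{ak_0})\rangle$ together with quadratic remainders. I would then replace each $\nabla f_i(\overline{\mathbf{w}}_i^{ak_0})$ by $\nabla f_i(\boldsymbol{\varpi}^{ak_0})$, paying a consensus penalty $\ell^2\|\overline{\mathbf{w}}_i^{ak_0}-\boldsymbol{\varpi}^{ak_0}\|^2$ via Lipschitzness, and invoke \cref{assump-variancebound} to control the gradient-heterogeneity gap by $\zeta^2$; the Gaussian perturbation, whose aggregate variance is $\varrho=\sum_i\varrho_i$, together with the hard-thresholding residuals at each inner step accumulate into $e_\infty$. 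This yields an inequality of the shape $\mathbb{E}f(\boldsymbol{\varpi}^{(a+1)k_0})\le \mathbb{E}f(\boldsymbol{\varpi}^{ak_0})-\frac{1}{8mc}\mathbb{E}\|\nabla f(\boldsymbol{\varpi}^{ak_0})\|^2+\text{(consensus)}+\text{(heterogeneity)}+\text{(noise/projection)}$.

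The crux, and where I expect the main difficulty, is the consensus-error recursion. Unlike standard decentralized SGD with a fixed doubly-stochastic matrix mixing every round, here mixing uses random subsets and \cref{assump-communicate} guarantees connectivity only of the union graph ${G}_t^B$ over $B$ rounds. I would therefore bound the product of $B$ consecutive averaging operators to extract the uniform contraction factor $\tau=(1-m^{-mB})^{1/B}<1$ acting on the deviation $\sum_i\|\mathbf{w}_i^{ak_0}-\boldsymbol{\varpi}^{ak_0}\|^2$, while the gradient steps, the noise, and the failure of nonexpansiveness of $\PO$ inflate it. The sparse projection is the delicate point, since $\PO$ is not firmly nonexpansive; I would handle it through the best-$s$-term optimality $\|\PO(u)-u\|\le\|\mathbf{w}-u\|$ valid for any feasible $\mathbf{w}$, and absorb the slack into $e_\infty$. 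Summing the geometric recursion over a block gives $\sum_a\text{(consensus)}\le\frac{1}{1-\tau}(\ldots)$, and the choice $c\ge c_0=(80m\sqrt\ell/(1-\tau))^2$ is precisely what makes the step size $1/c$ small enough that the gradient-induced inflation cannot overcome the contraction, keeping the accumulated consensus error of order $\zeta^2/c$ and the noise/projection terms of order $k_0e_\infty$.

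Finally I would telescope the descent inequality over $a=0,\dots,T-1$. The boundary terms collapse to $f(\boldsymbol{\varpi}^0)-f^*=f(0)-f^*$ since $\W^0=\mathbf 0$, the accumulated consensus bound contributes the $m^2c_0\zeta^2/c$ term, and the Gaussian noise plus hard-thresholding slack contribute $16mck_0e_\infty/c$. Dividing by $T$ and clearing the descent coefficient $1/(8mc)$ isolates $\frac1T\sum_a\mathbb{E}\|\nabla f(\boldsymbol{\varpi}^{ak_0})\|^2$ and delivers the stated bound.
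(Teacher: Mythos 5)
Your proposal is correct and follows essentially the same route as the paper's proof: the matrix reformulation with $\sigma_i m_i^k=c$, the per-round descent inequality for $f$ along the averaged iterate, consensus control via the geometric mixing factor $\tau=(1-m^{-mB})^{1/B}$ under the $B$-connectivity assumption, the projection-plus-noise residuals collected into $e_\infty$ (with the within-block proximal steps contributing the $k_0$ factor), absorption of the gradient-inflation term through $c\ge c_0$, and telescoping from $f(0)-f^*$. The only cosmetic difference is that you run a block-contraction recursion on the consensus deviation, whereas the paper unrolls the full history from $\W^0=\mathbf{0}$ and applies the Nedi\'c--Olshevsky decay bound to the accumulated mixing products $\A^{i\to a}$ -- the same contraction idea in a different packaging.
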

    \begin{proof}
        The detailed proof is presented in Appendix A-C (available in supplemental material).
    \end{proof}
In the above theorem, the upper bound of the error consists of three components. The second error term, ${c_0m^2\zeta^2}/{c}$, arises from sampling to select neighbour nodes. The third error term, $16m k_0 e_{\infty}$, is due to the sparse projection and the added noise for privacy preservation. The detailed formulation of $e_{\infty}$ is provided in Lemma 2 in the Appendix.
	\subsection{Privacy Guarantee}
	We now demonstrate that our algorithm provides differential privacy guarantees at each iteration as well as the entire process for datasets ${{D}_1, {D}_2, \dots, {D}_m}$.  Hereafter, we set 
	\begin{equation}\label{variance-Gaussian}
		\varrho_i:=\frac{2 \ln (1.25 / \delta) u_{i }^2}{\varepsilon^2},~~ i\in\mathcal{V},
	\end{equation}
	and consider the algorithm as a query process ${\mathcal{A}: {D} \rightarrow \mathbb{R}^n}$, where ${{D} = ({D}_1, {D}_2, \dots, {D}_m)}$ represents the collection of all local datasets, and $D_i$ is the data of node $i$. Moreover, we need the following condition that has been frequently assumed in differential privacy analysis \cite{pappas2021ipls,koloskova2019decentralized,liu2022decentralized}.
		\begin{assumption}\label{assump-gradientbound}
		 ${\|\nabla f_i (\mathbf{w})\| \leq u_i/2}$ for any $\mathbf{w}$ and any $ {i \in {V}}$.  
	\end{assumption}
	Here we point out that the existing literature considering DP in decentralized FL only gives a privacy guarantee in each iteration, such as \cite{li2020secure}. Instead, we give the privacy guarantee by viewing the algorithm as a whole part, which is more realistic for various applications.

	\begin{theorem}\label{privacy-each-node}
		Given ${\varepsilon>0, \delta \in (0,1)}$, each step of Algorithm \ref{algorithm-{CEPS}} achieves $(\varepsilon,\delta)$-differential privacy under Assumption \ref{assump-gradientbound}.
	\end{theorem}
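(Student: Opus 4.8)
The plan is to isolate the single data-dependent operation performed in each step and recognize it as an instance of the Gaussian mechanism recalled in Section \ref{Preliminaries}, so that the rest of the update becomes pure post-processing. Fix a step $k$ and a node $i$. If $k\notin\mathcal{K}_i$, the dataset $D_i$ is never touched because \eqref{iceadmm-sub5} reuses the stored $\mathbf{u}_i^k=\mathbf{u}_i^{k-1}$, so such a step is a deterministic function of previously released private quantities and inherits privacy automatically. The substantive case is $k\in\mathcal{K}_i$. Inspecting \eqref{def-w-bar-tilde}, the only term that reads the raw data $D_i$ is the gradient $\nabla f_i(\overline{\mathbf{w}}_i^k)$ inside $\mathbf{u}_i^k=\sigma_i m_i^k\overline{\mathbf{w}}_i^k-\nabla f_i(\overline{\mathbf{w}}_i^k)$; the averaged iterate $\overline{\mathbf{w}}_i^k$ is assembled from neighbours' decoded signals and is treated as a fixed input at this step. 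I would therefore define the query $\mathcal{A}(D_i):=\nabla f_i(\overline{\mathbf{w}}_i^k;D_i)$ and view the noisy quantity $\mathbf{u}_i^k+\boldsymbol{\xi}_i^k$ as the Gaussian mechanism applied to $-\mathcal{A}$, shifted by the data-independent constant $\sigma_i m_i^k\overline{\mathbf{w}}_i^k$.

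The first computational step is to bound the global sensitivity \eqref{GS} of $\mathcal{A}$. For any two adjacent datasets $D_i,D_i'$ and any fixed $\mathbf{w}$, the triangle inequality together with Assumption \ref{assump-gradientbound} gives
\begin{equation*}
\|\nabla f_i(\mathbf{w};D_i)-\nabla f_i(\mathbf{w};D_i')\|\le \|\nabla f_i(\mathbf{w};D_i)\|+\|\nabla f_i(\mathbf{w};D_i')\|\le \tfrac{u_i}{2}+\tfrac{u_i}{2}=u_i,
\end{equation*}
so $\eta(\mathcal{A})\le u_i$. This is exactly where the factor $1/2$ in Assumption \ref{assump-gradientbound} is consumed, matching the sensitivity to the scale built into the variance \eqref{variance-Gaussian}.

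Next I would align the injected noise with the Gaussian mechanism. Since $\boldsymbol{\xi}_i^k\sim\mathcal{N}(\mathbf{0},\varrho_i\mathbf{I})$ with $\varrho_i=2\ln(1.25/\delta)u_i^2/\varepsilon^2$, substituting $\eta(\mathcal{A})=u_i$ into the prescribed Gaussian-mechanism variance $2\ln(1.25/\delta)\eta^2(\mathcal{A})/\varepsilon^2$ reproduces $\varrho_i$ verbatim. Invoking the standard guarantee, the randomized output $-\nabla f_i(\overline{\mathbf{w}}_i^k)+\boldsymbol{\xi}_i^k$ is $(\varepsilon,\delta)$-differentially private with respect to $D_i$. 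The final update $\mathbf{w}_i^{k+1}=\mathbb{P}_{\Omega}\big((\mathbf{u}_i^k+\boldsymbol{\xi}_i^k)/(\sigma_i m_i^k)\big)$ is obtained from this private quantity by adding the data-independent shift $\sigma_i m_i^k\overline{\mathbf{w}}_i^k$, scaling by $1/(\sigma_i m_i^k)$, and applying the sparse projection $\mathbb{P}_{\Omega}$, none of which accesses $D_i$. Writing the update as $h\circ\nabla f_i$ with $h$ data-free and invoking the post-processing property then yields that $\mathbf{w}_i^{k+1}$ is $(\varepsilon,\delta)$-DP as well.

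Finally, I would lift the per-node guarantee to the joint dataset $D=(D_1,\dots,D_m)$. Because the local datasets are disjoint and the noises $\boldsymbol{\xi}_i^k$ are drawn independently, any two adjacent copies of $D$ differ in a single record that lies in exactly one $D_i$; only node $i$'s output is affected, and it is $(\varepsilon,\delta)$-DP by the argument above, so parallel composition shows the whole step is $(\varepsilon,\delta)$-DP. I expect the only delicate point to be the bookkeeping that certifies $\overline{\mathbf{w}}_i^k$ (hence the constant shift and the projection) is genuinely independent of $D_i$ within a single step; once the update is cleanly factored so that $D_i$ enters solely through $\nabla f_i$, the remainder is a direct application of the Gaussian mechanism and post-processing.
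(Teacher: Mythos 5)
Your proposal is correct and follows essentially the same route as the paper's proof: identify the gradient $\nabla f_i(\overline{\mathbf{w}}_i^k)$ as the only data-touching operation, certify sensitivity $u_i$ from Assumption \ref{assump-gradientbound}, match the noise variance \eqref{variance-Gaussian} to the Gaussian mechanism, and combine post-processing with parallel composition across the disjoint local datasets. In fact you supply one detail the paper only asserts --- the triangle-inequality derivation showing why the gradient bound $u_i/2$ yields global sensitivity $u_i$ --- and you explicitly dispose of the steps $k\notin\mathcal{K}_i$, both of which the paper's proof leaves implicit.
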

	\begin{proof}
        The detailed proof is presented in Appendix D (available in supplemental material).
    \end{proof}
	
	\begin{theorem}\label{privacy-all}  Given ${\varepsilon>0, \delta \in (0,1)}$, Algorithm \ref{algorithm-{CEPS}} with $ak_0$ total iterations achieves $$\left(\sqrt{2 a \ln\left(1 / \delta\right)} \varepsilon+  a \varepsilon\left(e^{\varepsilon}-1\right),(a+1) \delta\right)-$$
		differentially privacy under Assumption \ref{assump-gradientbound}.
	\end{theorem}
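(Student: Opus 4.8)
The plan is to derive Theorem~\ref{privacy-all} from the per-iteration guarantee of Theorem~\ref{privacy-each-node} via the \emph{advanced composition} theorem for differential privacy. The structure of the target bound, namely privacy parameter $\sqrt{2a\ln(1/\delta)}\,\varepsilon + a\varepsilon(e^\varepsilon-1)$ together with failure probability $(a+1)\delta$, is exactly the signature of advanced composition applied $a$ times to an $(\varepsilon,\delta)$-mechanism (see \cite{abadi2016deep,dwork2006calibrating}). So the overarching idea is: (i) identify what constitutes one ``release'' of the mechanism, (ii) count how many such releases occur over $ak_0$ iterations, and (iii) invoke composition. The main subtlety, and what I expect to be the principal obstacle, is the bookkeeping in step (ii): although there are $ak_0$ total iterations, the raw data $D_i$ is accessed \emph{only} at the communication steps $k\in\mathcal{K}_i$, i.e. at the multiples of $k_0$. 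In the simplified setting $\kappa_i=k_0$, there are precisely $a$ such data-touching steps among the first $ak_0$ iterations, which is why the composition is over $a$ rounds rather than $ak_0$.

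First I would make the post-processing argument explicit. As noted in Section~\ref{DFL}, each data-accessing computation factors as $h\circ\nabla f_i$, where $\nabla f_i$ is the only step that reads $D_i$ and $h$ (the averaging, the division by $\sigma_i m_i^k$, and the sparse projection $\mathbb{P}_\Omega$ in \eqref{def-w-bar-tilde}) is data-independent. By the post-processing property of differential privacy, applying the data-independent map $h$ after the Gaussian-perturbed gradient cannot degrade the privacy level. Hence each communication step releases an $(\varepsilon,\delta)$-differentially private quantity by Theorem~\ref{privacy-each-node}, and the non-communication steps \eqref{iceadmm-sub5} reuse the stored $\mathbf{u}_i^{k-1}$ without fresh data access, so they incur no additional privacy cost.

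Next I would apply the advanced composition theorem. Treating the $a$ communication rounds as an adaptive sequence of $a$ mechanisms, each $(\varepsilon,\delta)$-differentially private, the composition theorem states that for any $\delta'>0$ the overall process is $\bigl(\varepsilon',\, a\delta+\delta'\bigr)$-differentially private with
\begin{equation*}
\varepsilon' = \sqrt{2a\ln(1/\delta')}\,\varepsilon + a\varepsilon(e^\varepsilon-1).
\end{equation*}
Choosing $\delta'=\delta$ yields exactly the stated privacy parameters $\bigl(\sqrt{2a\ln(1/\delta)}\,\varepsilon + a\varepsilon(e^\varepsilon-1),\,(a+1)\delta\bigr)$, since $a\delta+\delta=(a+1)\delta$. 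The whole collection $D=(D_1,\ldots,D_m)$ is handled jointly because the per-step guarantee of Theorem~\ref{privacy-each-node} already accounts for all nodes simultaneously through the independently generated noise $\boldsymbol{\xi}_i^k\sim\mathcal{N}(\mathbf{0},\varrho_i\mathbf{I})$ with $\varrho_i$ set in \eqref{variance-Gaussian}.

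The step I expect to require the most care is verifying that the composition is legitimately over $a$ rounds and that the mechanisms in the sequence are the right ones to compose. One must confirm that between two consecutive communication steps the intermediate updates \eqref{iceadmm-sub5} genuinely do not re-query the dataset (they depend only on the frozen $\mathbf{u}_i^{k-1}$ and the public iterate $\mathbf{w}_i^k$), so that no privacy budget is silently consumed there; and one must confirm that the adaptivity inherent in the sequence—later queries depending on earlier outputs—is permitted, which advanced composition indeed allows. Once these two points are nailed down, the bound follows immediately from the composition theorem applied to the per-iteration result, so the remainder is a direct substitution rather than a fresh estimate.
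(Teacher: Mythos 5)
Your proposal is correct and takes essentially the same approach as the paper's own proof: both identify the $a$ noise-adding communication steps among the $ak_0$ iterations (the intermediate updates being data-free post-processing), view the algorithm as an adaptive composition of $a$ mechanisms that are each $(\varepsilon,\delta)$-differentially private by Theorem~\ref{privacy-each-node}, and invoke the advanced composition theorem of Dwork--Roth. Your write-up merely makes explicit the choice $\delta'=\delta$ giving $(a+1)\delta$ and the post-processing bookkeeping, both of which the paper leaves implicit.
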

	\begin{proof}
        The detailed proof is presented in Appendix E (available in supplemental material).
    \end{proof}

	\section{Numerical Experiments}
	\label{Numerical}
	In this section, we present numerical experiments
	to evaluate the performance of {CEPS}. All experiments are implemented using MATLAB (R2021b) on a laptop with 32GB memory and 3.7GHz CPU.
	
	\subsection{Testing Example}
	\begin{example}{(Linear Regression)}.
		Consider a linear regression problem where each node ${i \in {V}}$ has an objective function
		$$f_i(\mathbf{w})=\frac{1}{2 m_i}\left\|\mathbf{A}_i \mathbf{w}-\mathbf{b}_i\right\|^2,$$
		where ${\mathbf{b}_i \in \mathbb{R}^{m_i}}$ and ${\mathbf{A}_i \in \mathbb{R}^{m_i \times n}}$ is a random Gaussian matrix. To assess the effectiveness of {CEPS}, we assume there is a `ground truth' solution ${\mathrm{w}^* \in \mathbb{R}^n}$ with $s$ non-zeros entries. Then ${\mathbf{b}_i:=\mathbf{A}_i \mathbf{w}^*+0.5 \mathbf{e}_i}$, where $\mathbf{e}_i$ is the noise. All entries of $\mathbf{A}_i$ and $\mathbf{e}_i$ are identically and independently distributed from a standard normal distribution, while the non-zero entries of $\mathrm{w}^*$ are uniformly generated from $[0.5,2] \cup[-2,-0.5]$.
		\label{eg5.1}
	\end{example}
	\begin{example}{(Logistic Regression).}
		Consider a logistic regression problem where each node $i$ has an objective function
		$$
		f_i(\mathbf{w})=\frac{1}{m_i} \sum_{t=1}^{m_i}\left(\ln \left(1+\mathrm{e}^{\left\langle\mathbf{a}_i^t, \mathbf{w}\right\rangle}\right)-b_i^t\left\langle\mathbf{a}_i^t, \mathbf{w}\right\rangle\right)+\frac{\lambda}{2}\|\mathbf{w}\|^2,
		$$
		where ${\mathbf{a}_i^t \in \mathbb{R}^n}, {b_i^t \in\{0,1\}}$, and ${\lambda>0}$ is a penalty parameter (e.g., ${\lambda=0.001}$ in our numerical experiments). We use two real datasets: `qot' (Qsar oral toxicity) from UCI \cite{asuncion2007uci} and `rls' (real-sim) from Libsvm \cite{chang2011libsvm} to generate $\mathbf{a}_i$ and $b_i$. The dimensions of two datasets are ${(m,n)=(1024, 8992)}$ and ${(m,n)=(20958, 72309)}$. The samples are randomly divided into $m$ groups, corresponding to $m$ nodes.
		\label{eg5.2}
	\end{example}

	\subsection{Implementations}
	The basic setup for {CEPS} in Algorithm \ref{algorithm-{CEPS}} is given as follows. We begin by randomly generating a graph and obtain ${\left\{{N}_i: i \in {V}\right\}}$. For each node ${i \in {V}}$, we randomly select subset ${N}_i^{k}$ such that ${t_i=\left|{N}_i^{k}\right|=r\left|{N}_i\right|}$, where ${r \in(0,1]}$ is the participation rate. Encoding matrix ${\boldsymbol{\Phi}_i \in \mathbb{R}^{d_i \times n}}$ is generated similarly to $\mathbf{A}_i$ with ${d_i=n/2}$. Other parameters are initialized as ${\gamma_i=5}$, ${\mu=0.1}$, ${\sigma_i=\lambda_{\max }\left(\mathbf{A}_i^{\top} \mathbf{A}_i\right) /\left(m(2r+0.1) d_i\right)}$, where ${\lambda_{\max }(\mathbf{A})}$ stands for the largest eigenvalue of $\mathbf{A}$, and $\kappa_i$ is randomly chosen from ${[10,15]}$. The privacy constants in \eqref{variance-Gaussian} are set as ${\delta=0.5}$, ${u_{i }=0.1}$, and ${\varepsilon\in[0.1,1]}$. 
	Finally, we terminate the algorithm if 
	$$\frac{1}{sm}\Big\|\W^k-(\boldsymbol{\varpi}^{k},\cdots,\boldsymbol{\varpi}^{k})\Big\|^2\leq  {\rm tol},$$
	where ${\rm tol}=0.005$ if no DP techniques are used in {CEPS} (i.e., $\varrho_i=0$) and ${\rm tol}=0.0025/\varepsilon$ otherwise. 
	
	\subsection{Self-comparison of {CEPS}}
	To assess the performance of the 1BCS and DP techniques in Algorithm \ref{algorithm-1bcs}, we compare four variants of {CEPS} with and without using these two techniques. As shown in Table \ref{self-{CEPS}},  `Perf' stands for perfect communication (i.e., without 1BCS).  That is, ${\mathbf{z}_j^{k}=1 \mathrm{BCS}\left(\mathbf{w}_j^k, \boldsymbol{\Phi}_i, \gamma_j\right)}$ is replaced by ${\mathbf{z}_j^{k}=\mathbf{w}_j^k}$ for every ${j \in {N}_i^{k}}$ in Algorithm \ref{algorithm-1bcs}. `No DP' means no noise added in \eqref{def-w-bar-tilde}, namely $\varrho_i=0$. Therefore, `DP-Perf' represents {CEPS} with DP but without 1BCS, and `NoDP-Perf' represents {CEPS} without DP and 1BCS. For Example \ref{eg5.1}, we set ${n=1000}$ and ${s=10}$ and generate each ${m_i \in[250,750]}$, ${i \in {V}}$ to see the impact of node  number $m$, noisy ratio $\varepsilon$, and participation rate $r$ on the convergence performance of {CEPS}. 
	
		\begin{table}[!t]
		\renewcommand{\arraystretch}{1.25}\addtolength{\tabcolsep}{3pt}
		\caption{Four variants of {CEPS}.}\vspace{-2mm}
		\begin{center}
			\begin{tabular}{c|cc}
				\hline 
				 &1BCS& No 1BCS\\\hline
 			 DP& DP-1BCS 	& DP-Perf \\  
			    No DP &NoDP-1BCS 	& NoDP-Perf \\
				\hline
			\end{tabular} 
			\label{self-{CEPS}}
		\end{center}
	\end{table}

 	\begin{table}[!t]
		\renewcommand{\arraystretch}{1.25}\addtolength{\tabcolsep}{-4.2pt}
		\caption{Self-Comparison of {CEPS}.}\vspace{-2mm}
		\begin{center}
			\begin{tabular}{lcccccccccccc}
				\hline
				&&\multicolumn{3}{c}{$m$} &&\multicolumn{3}{c}{$\varepsilon$}&&\multicolumn{3}{c}{$r$}\\\cline{3-5} \cline{7-9} \cline{11-13} 				
	&&	32 	&	64 	&	128 	&&	0.25 	&	0.50 	&	0.75 	&&	0.20 	&	0.50 	&	0.80 	\\\hline
	&&	\multicolumn{11}{c}{Objective}\\\hline																	
DP-1BCS	&&	0.127 	&	0.125 	&	0.125 	&&	0.129 	&	0.126 	&	0.126 	&&	0.127 	&	0.127 	&	0.129 	\\
NoDP-1BCS	&&	0.126 	&	0.125 	&	0.125 	&&	0.127 	&	0.126 	&	0.126 	&&	0.126 	&	0.127 	&	0.130 	\\
DP-Perf	&&	0.127 	&	0.125 	&	0.125 	&&	0.129 	&	0.126 	&	0.126 	&&	0.126 	&	0.130 	&	0.131 	\\
NoDP-Perf	&&	0.126 	&	0.125 	&	0.125 	&&	0.127 	&	0.125 	&	0.125 	&&	0.126 	&	0.126 	&	0.133 	\\\hline
	&&	\multicolumn{11}{c}{Time (seconds)}\\\hline																	
DP-1BCS	&&	1.225 	&	3.689 	&	14.20 	&&	1.101 	&	1.322 	&	1.432 	&&	1.224 	&	2.412 	&	2.713 	\\
NoDP-1BCS	&&	1.105 	&	3.551 	&	13.72 	&&	1.095 	&	1.171 	&	1.179 	&&	1.086 	&	2.292 	&	2.607 	\\
DP-Perf	&&	0.269 	&	0.587 	&	1.211 	&&	0.290 	&	0.287 	&	0.300 	&&	0.282 	&	0.290 	&	0.287 	\\
NoDP-Perf	&&	0.286 	&	0.589 	&	1.195 	&&	0.283 	&	0.294 	&	0.304 	&&	0.295 	&	0.291 	&	0.291 	\\\hline
	&&	\multicolumn{11}{c}{Iterations}\\\hline																	
DP-1BCS	&&	32	&	30	&	30	&&	30	&	32	&	35	&&	32	&	30	&	26	\\
NoDP-1BCS	&&	29	&	29	&	29	&&	29	&	29	&	29	&&	29	&	29	&	23	\\
DP-Perf	&&	28	&	28	&	28	&&	28	&	30	&	32	&&	28	&	28	&	24	\\
NoDP-Perf	&&	27	&	25	&	27	&&	27	&	27	&	27	&&	27	&	27	&	21	\\\hline
			\end{tabular}
			\label{tab1}
		\end{center}
	\end{table}
 
 	\begin{figure*}[t]
		\centering
			\includegraphics[width=.325\textwidth]{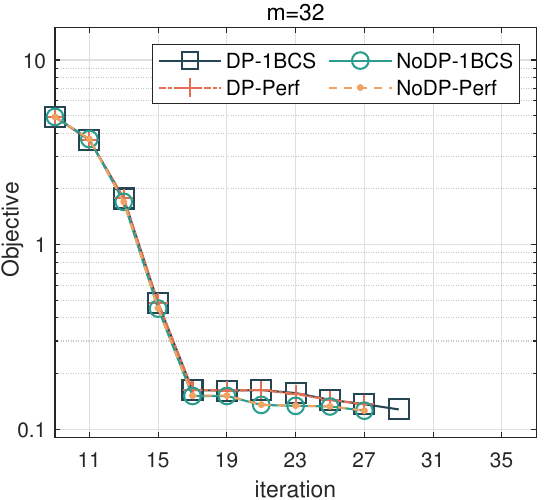}
			\includegraphics[width=.325\textwidth]{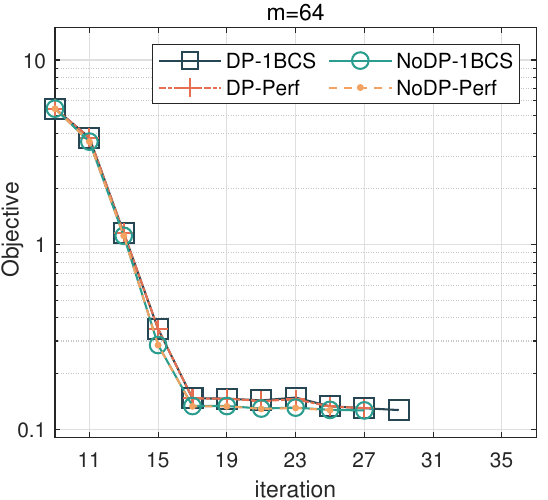}
			\includegraphics[width=.325\textwidth]{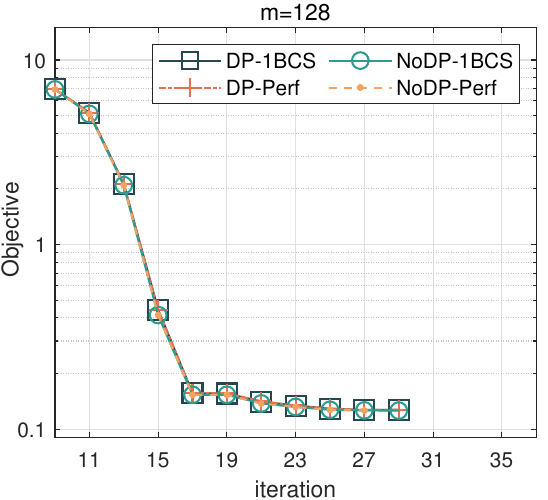}
		\caption{Self-comparison of {CEPS}: Effect of $m$.}
		\label{{CEPS}-m}
	\end{figure*}
		\begin{figure*}[t]
		\centering
			\includegraphics[width=.325\textwidth]{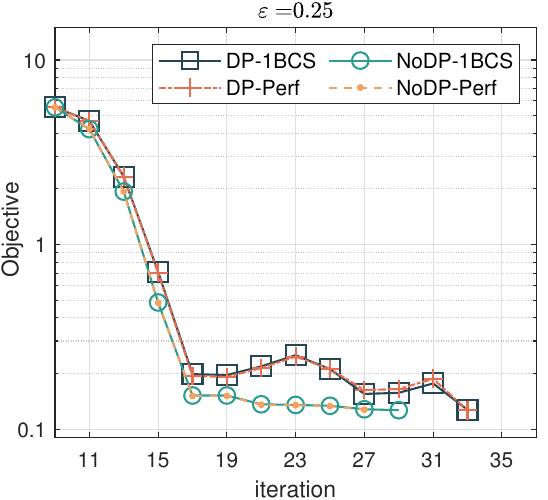}
			\includegraphics[width=.325\textwidth]{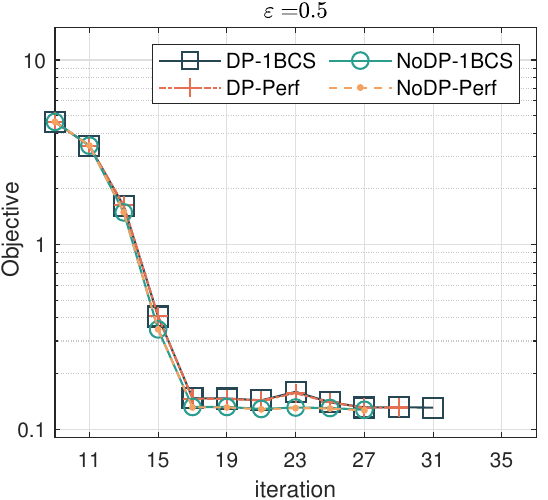}
			\includegraphics[width=.325\textwidth]{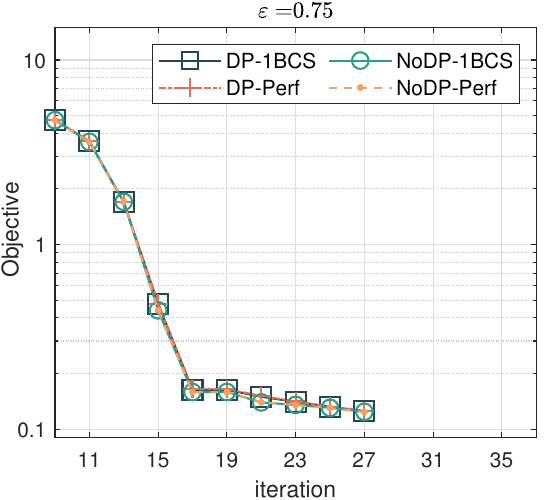}
		\caption{Self-comparison of {CEPS}: Effect of $\varepsilon$.}
		\label{{CEPS}-vareps}
	\end{figure*} 

		\begin{figure*}[t]
		\centering
			\includegraphics[width=.325\textwidth]{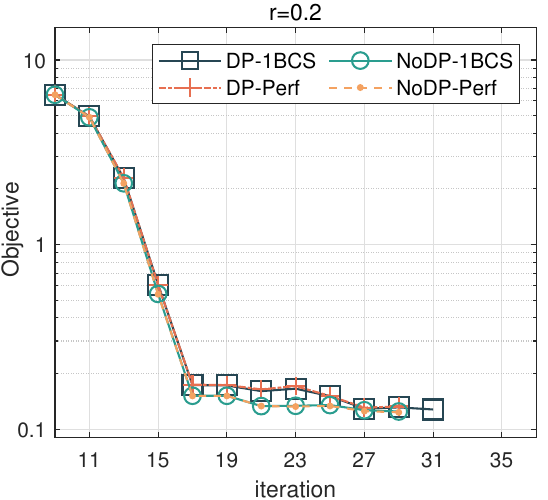}
			\includegraphics[width=.325\textwidth]{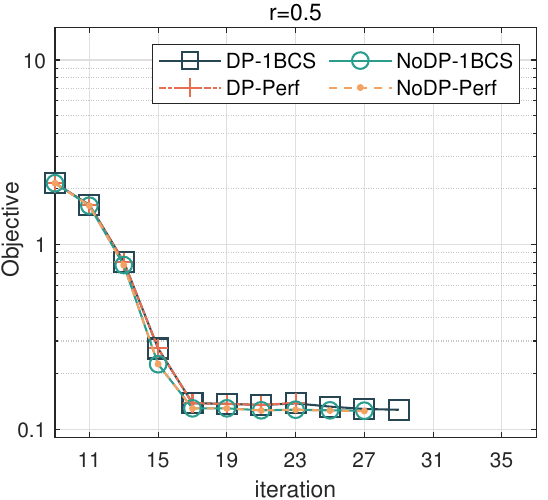}
			\includegraphics[width=.325\textwidth]{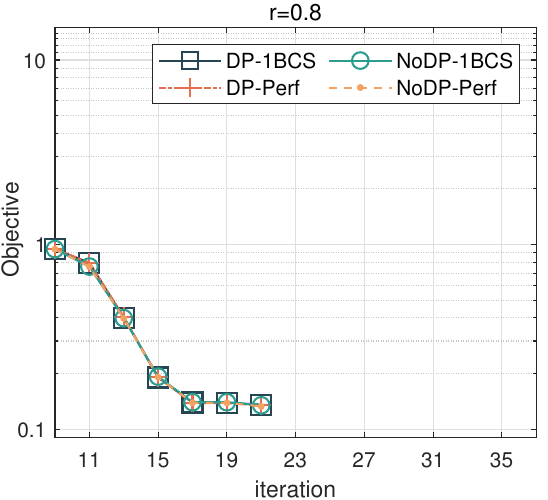}
		\caption{Self-comparison of {CEPS}: Effect of $r$.}
		\label{{CEPS}-r}
	\end{figure*} 
	
{\it 1) Effect of $m$.} Fig. \ref{{CEPS}-m} illustrates the effect of varying $m\in$ ${\{32,64,128\}}$ by fixing ${(\varepsilon,r)=(0.5,0.2)}$. For fewer nodes (e.g., ${m=32}$), DP-1BCS requires more iterations to achieve the optimal function values. As the number of nodes increases, the performance differences among the four {CEPS} variations diminish. From Table \ref{tab1}, no significant differences are observed in objective values or iteration counts among the four algorithms. However, it is evident that as $m$ increases, the computational time also rises.
 	
{\it 2) Effect of $\varepsilon$.} Fig. \ref{{CEPS}-vareps} demonstrates the effect of altering ${\varepsilon\in\{0.25,0.5,0.75\}}$ by fixing ${(m,r)=(32,0.2)}$. According to \eqref{def-w-bar-tilde}, smaller $\varepsilon$ introduces more noise to perturb the transmitted messages. This enhances privacy but also results in noticeable fluctuations during the training process, as observed in the figure. For ${\varepsilon = 0.25}$, NoDP-1BCS and NoDP-Perf exhibit much smoother behavior compared to DP-1BCS and DP-Perf. As $\varepsilon$ increases (e.g., ${\varepsilon \geq 0.5}$), the objective function values of all four algorithms follow a similar declining trend, indicating that the impact of noise becomes negligible. From Table \ref{tab1}, no significant differences are observed in objective values. However, more iterations and time are required for DP-1BCS and DP-Perf when $\varepsilon$ decreases. 

{\it 3) Effect of $r$.} Fig. \ref{{CEPS}-r} presents the effect of changing ${r\in\{0.2,0.5,0.8\}}$ by fixing ${(m,\varepsilon)=(32,0.5)}$. It appears that the range of ${r \in [0.2, 0.8]}$ has minimal impact on the convergence of these four variants, as the produced curves closely coincide.  From Table \ref{tab1}, their generated objective function values are comparable. However, as more neighbour modes are selected (i.e., as $r$ increases), the computational time increases while the number of iterations decreases.

	\subsection{Comparison with other federated learning methods}
We evaluate the performance of {CEPS} under two communication scenarios: ideal communication ({CEPS}-Perf) and communication using 1BCS ({CEPS}-1BCS) by comparing them with D-PSGD \cite{lian2017can}, DFedAvgM \cite{sun2022decentralized}, and DFedSAM \cite{shi2023improving}. Additionally, we also introduce two variants of D-PSGD: D-PSGD-DN and D-PSGD-PC. The former uses a dynamic mixing matrix network and the latter employs partial communication. To ensure fair comparison, all algorithms perform communication among nodes every 10 iterations. Finally, for D-PSGD and D-PSGD-DN, each node imposes all neighbours to join in training, while for the other five methods, each node selects $r|N_i|$ neighbour nodes to participate. We now assess the performance of these methods on  Examples \ref{eg5.1} and \ref{eg5.2}. 

    {\it 4) Effect of $m$.} 
	 Fig. \ref{ex_convergence} illustrates the convergence behavior for Example \ref{eg5.1} as the number of clients, $m$, varies over ${\{32,64,128\}}$. Generally, increasing the number of communication nodes reduces the required communication rounds. Notably, {CEPS} consistently requires the fewest communication rounds to converge across all scenarios. Plain D-PSGD converges second fastest, benefiting from the participation of all clients. Incorporating a dynamic network in D-PSGD-DN has only a minor impact on convergence speed. In contrast, partial communication in D-PSGD-PC results in slower convergence. While both DFedAvgM and DFedSAM outperform D-PSGD-PC, neither matches the performance of {CEPS}, which achieves the fastest convergence overall. 

	Figs. \ref{example1_cr}-\ref{example1_time} provide additional numerical performance metrics, including communication rounds (CR), data transmission volume (DTV), and computational time. The left column shows results for Example \ref{eg5.1}, the middle column corresponds to Example \ref{eg5.2} using the `qot' dataset, and the right column presents results for the `rls' dataset.
Fig. \ref{example1_cr} presents the expected trend: increasing  $m$ generally leads to fewer CR. Once again, {CEPS} outperforms all other methods across all scenarios. Fig. \ref{example1_data} highlights the communication efficiency of {CEPS}-1BCS during training. Compared to all other methods, it achieves a substantial reduction in data transmission volume, saving at least 90$\%$ of the communication overhead. Even without 1BCS, {CEPS}-Perf still outperforms the other methods in terms of DTV. Fig. \ref{example1_time} compares the computational time across the seven methods. {CEPS}-1BCS achieves faster computation compared to DFedAvgM and DFedSAM, although its computational time is slightly higher than that of D-PSGD. However, it is important to note that the reported time does not include communication time. In practical applications, communication time is typically much more expensive than the computational time. As a result, {CEPS}-1BCS remains highly competitive, as evidenced by the reduced CR shown in Fig. \ref{example1_cr}. Additionally, {CEPS}-Perf consistently consumes the shortest computational time across all scenarios. 

    {\it 5) Effect of $\varepsilon$.}  
	To see such an effect, we employ all methods to solve Example \ref{eg5.1} by fixing ${(m,n,r) = (64,1000,0.2)}$ but altering ${\varepsilon \in \{0.5,1,2\}}$. The results are presented in Table \ref{tabnoise}. One can observe that when $\varepsilon$ is small (i.e. the privacy budget is low), all the methods are impacted by the noise. Specifically, all D-PSGD variants fail to converge when ${\varepsilon = 0.5}$. However, as $\varepsilon$ increases, all methods show improved performance. Evidently,  {CEPS} yields the best results across all settings.
	
	     \begin{table}[!t]
		\renewcommand{\arraystretch}{1.25}\addtolength{\tabcolsep}{-2.5pt}
		\caption{Comparison of $\varepsilon$.}\vspace{-3mm}
		\begin{center}
			\begin{tabular}{lccccccccc}
				\hline
				& \multicolumn{3}{c}{$\varepsilon=0.5$} & \multicolumn{3}{c}{$\varepsilon=1$} & \multicolumn{3}{c}{$\varepsilon=2$}\\
				\cline{2-10} 
				& Obj& Iter & Time & Obj& Iter & Time & Obj& Iter & Time \\
				\hline
				CEPS-Perf & 0.124& 24 & 1.038 & 0.123& 22 &1.062 & 0.123 & 22 &1.026  \\
				CEPS-1BCS & 0.123& 24 & 2.997 &0.123 &22 & 3.111 & 0.123 & 22 & 2.161 \\
			DPSGD &--*&--&-- &  
                0.561 &  41 & 2.692& 0.129 & 81 & 3.872 \\
                DPSGD-DN &--&--&-- & 0.192 & 71 & 3.509& 0.129 & 91 & 4.114 \\
				DPSGD-PC &--&--&-- & 0.185 & 81 & 3.805& 0.132 & 101 & 4.492 \\
                DFedAvgM &0.577 &51 & 2.911 & 0.274 & 61 & 3.289& 0.156& 91 & 4.199 \\
                DFedSAM &0.636 &41 & 3.221 & 0.243 & 61 & 4.049& 0.150 & 81 & 4.914 \\
				\hline
			\end{tabular}
					\begin{tablenotes}
				* `--' means that the algorithm diverges.
			\end{tablenotes}
			\label{tabnoise}
		\end{center}
	\end{table}
	
    {\it 6) Effect of $r$.}  
		Again for Example \ref{eg5.1}, we fix ${(m,n,\varepsilon)} = {(64,1000,2)}$ but vary ${r \in \{0.2, 0.5, 0.8\}}$. The results are reported in Table \ref{tabr}. Generally, increasing $r$ leads to fewer iterations but higher  DTV. This is expected, as a higher participation rate incorporates more clients into the communication network, thereby increasing communication overhead. As always, {CEPS} stands out among all methods, maintaining relative stability while consistently achieving the highest communication efficiency.
	
	 \begin{table}[!t]
		\renewcommand{\arraystretch}{1.25}\addtolength{\tabcolsep}{-2.6pt}
		\caption{Comparison of $r$.}\vspace{-3mm}
		\begin{center}
			\begin{tabular}{lccccccccc}
				\hline
				& \multicolumn{3}{c}{$r = 0.2$} & \multicolumn{3}{c}{$r = 0.5$} & \multicolumn{3}{c}{$r = 0.8$}\\
				\cline{2-10} 
				& Obj& Iter & DTV* & Obj& Iter & DTV & Obj& Iter & DTV \\
				\hline
				CEPS-1BCS & 0.126& 38 & 0.78 & 0.126& 28 &1.36 & 0.126 & 26 &2.05  \\
				CEPS-Perf & 0.126& 28 & 3.65 &0.126 &28 & 9.13 & 0.126 & 24 & 12.52 \\
				NoDP-1BCS & 0.126 &29 & 0.63 & 0.126& 29 & 1.44 & 0.127 &23 & 1.95 \\
				NoDP-Perf & 0.125 &29 & 3.78 & 0.125 & 27 & 8.80 & 0.125 &21 & 10.96  \\
				DPSGD-PC &0.131 &81 & 13.27 & 0.128 & 61 & 24.99& 0.128 & 51 & 33.42 \\
				DFedAvgM &0.140 &61 & 9.99 & 0.128 & 51 & 20.89& 0.133& 31 & 20.32 \\
				DFedSAM &0.151 &61 & 9.99 & 0.166 & 51 & 20.89& 0.208 & 31 & 20.32 \\
				\hline
			\end{tabular}
			\begin{tablenotes}
				*The unit of DTV is $10^6$ bytes.
			\end{tablenotes}
			\label{tabr}
		\end{center}
	\end{table}

        \begin{figure*}[t]
		\centering
			\includegraphics[width=.325\textwidth]{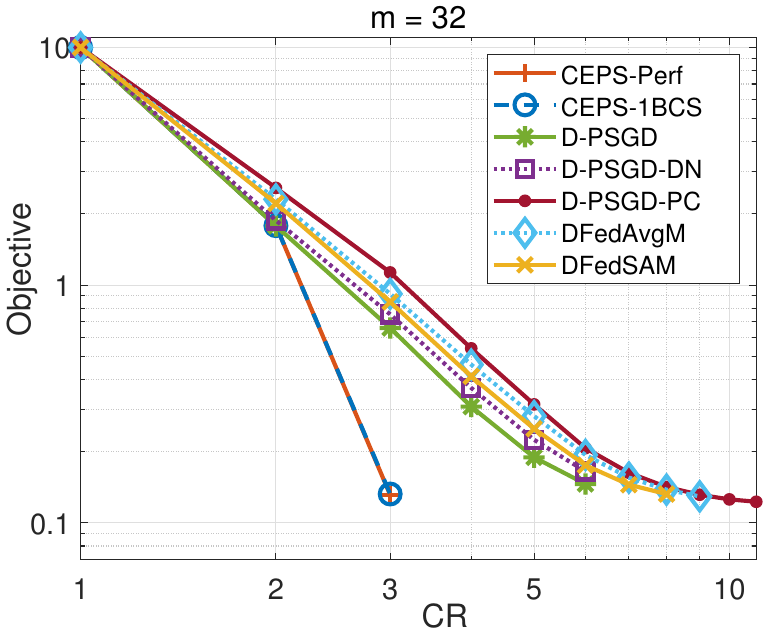}
				\includegraphics[width=.325\textwidth]{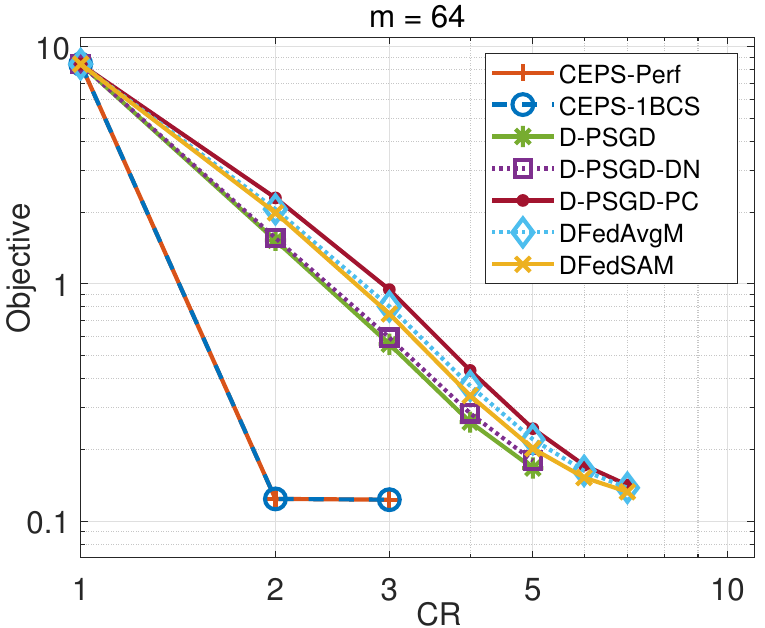}
					\includegraphics[width=.325\textwidth]{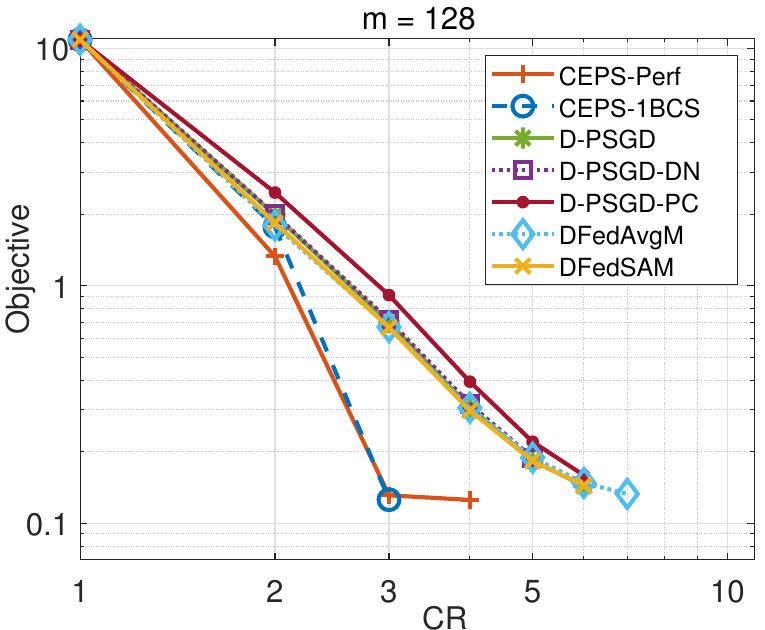}
		\caption{Objective v.s. CR.}
		\label{ex_convergence}
	\end{figure*}

	\begin{figure*}[!t]
		\centering 
			\includegraphics[width=0.325\textwidth]{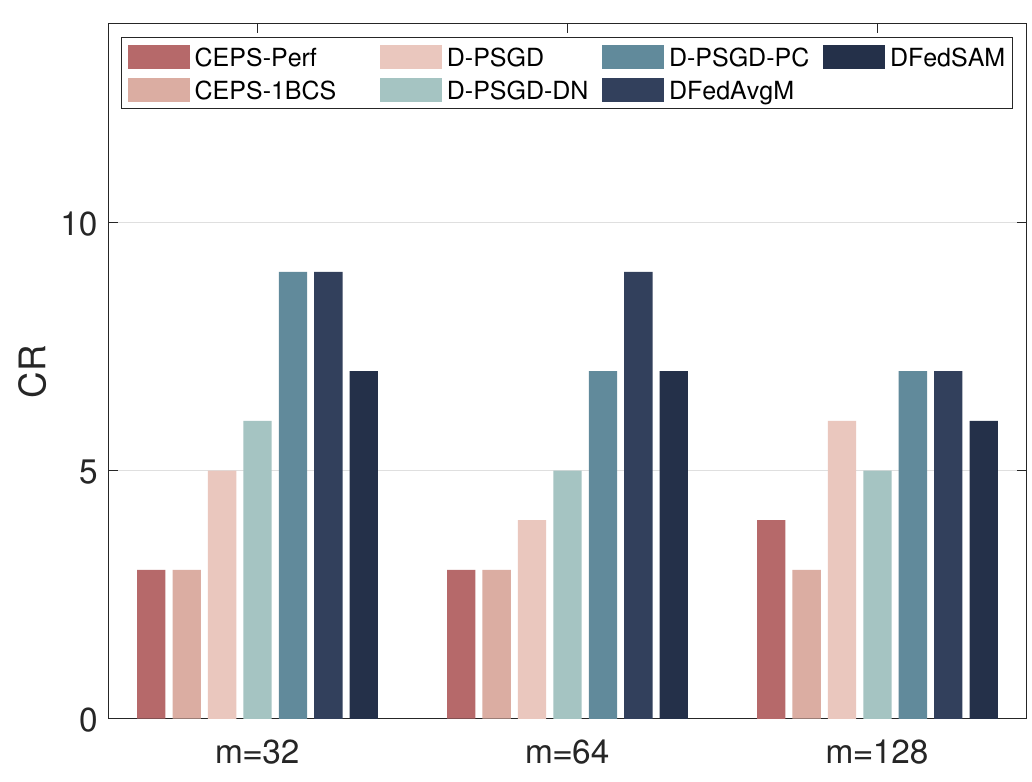}
			\includegraphics[width=0.325\textwidth]{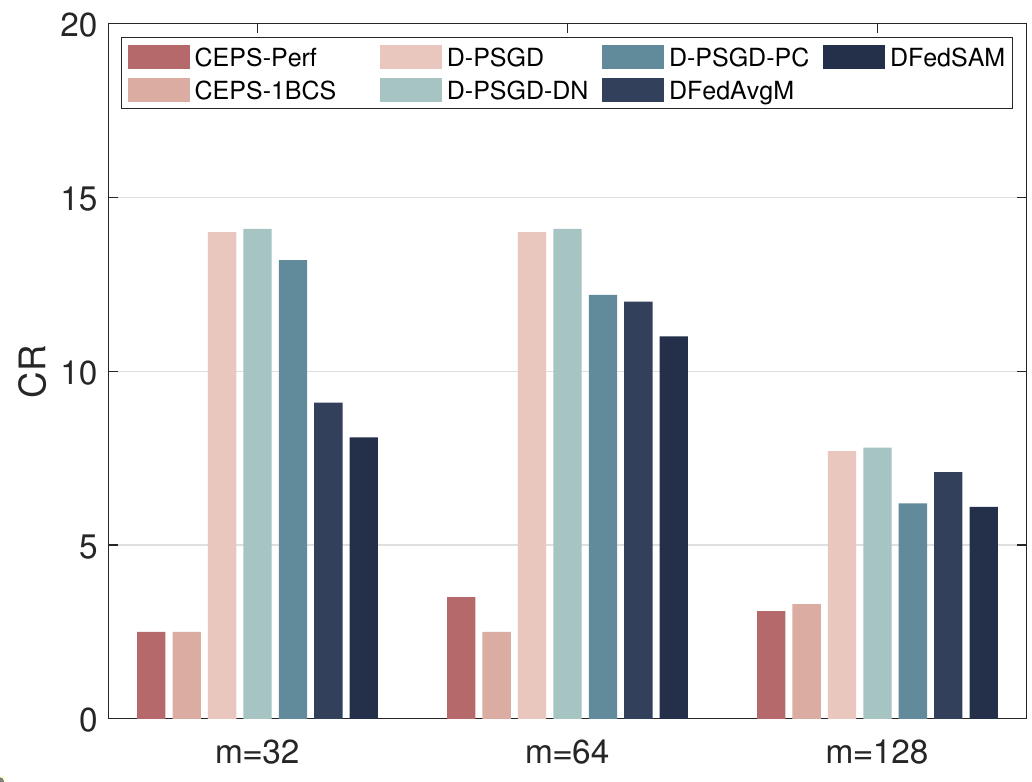}
			\includegraphics[width=0.325\textwidth]{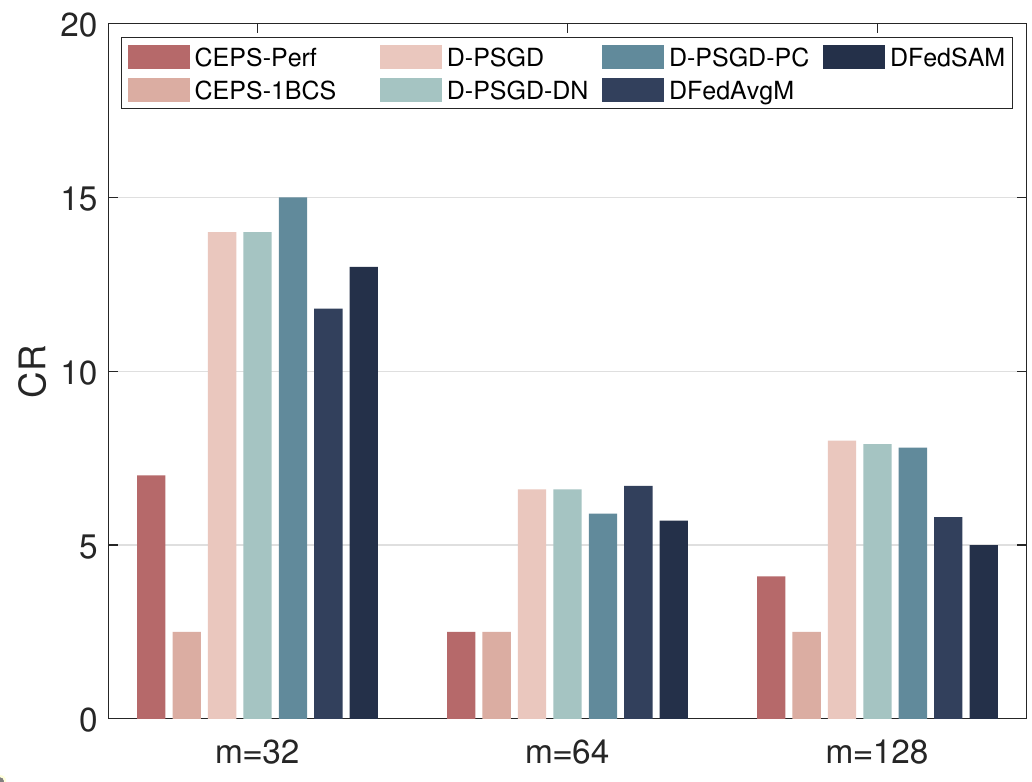} 
		\caption{CR v.s. $m$.}
		\label{example1_cr}
	\end{figure*}	

	\begin{figure*}[!t]
		\centering 
			\includegraphics[width=0.325\textwidth]{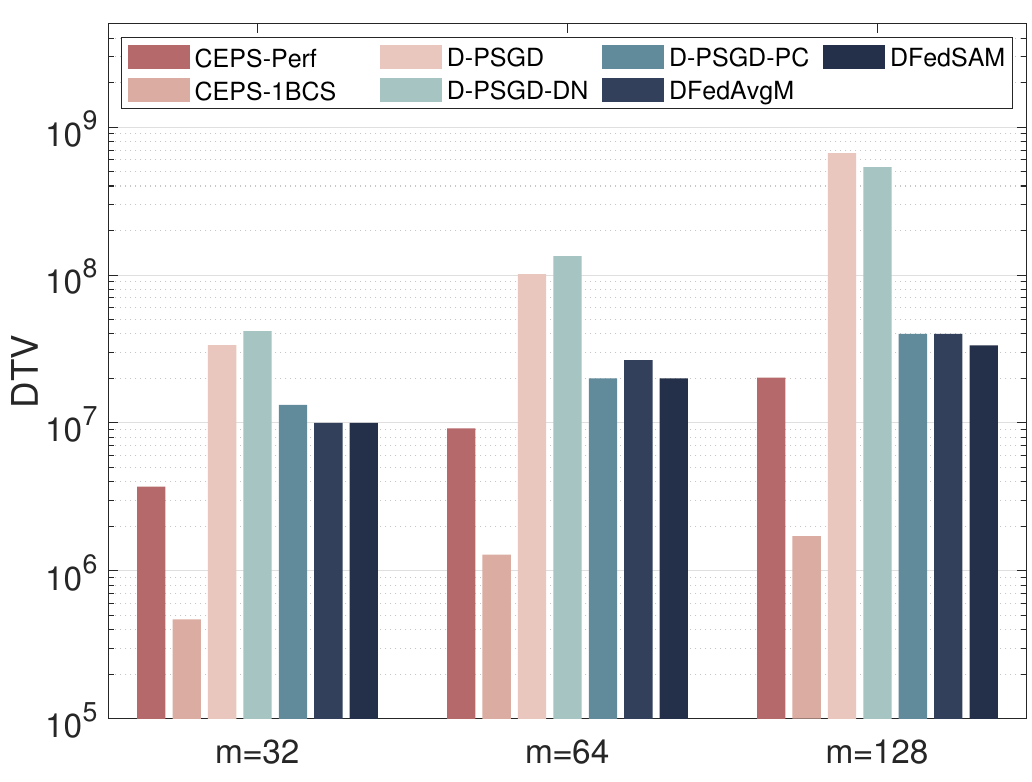}
			\includegraphics[width=0.325\textwidth]{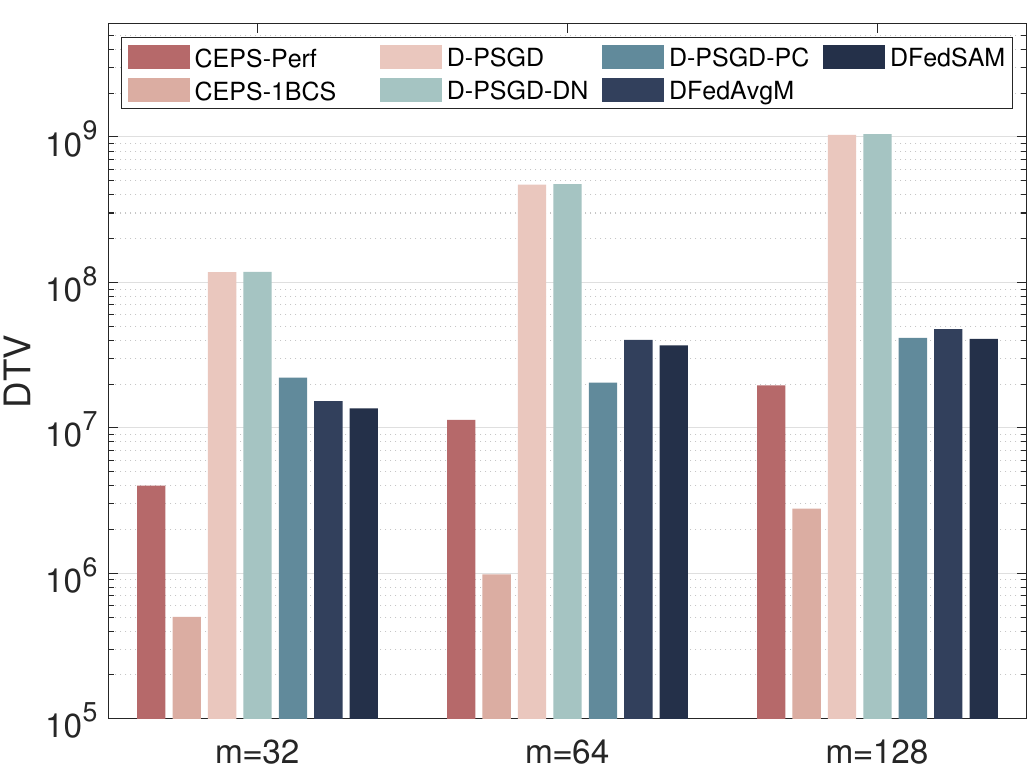}
			\includegraphics[width=0.325\textwidth]{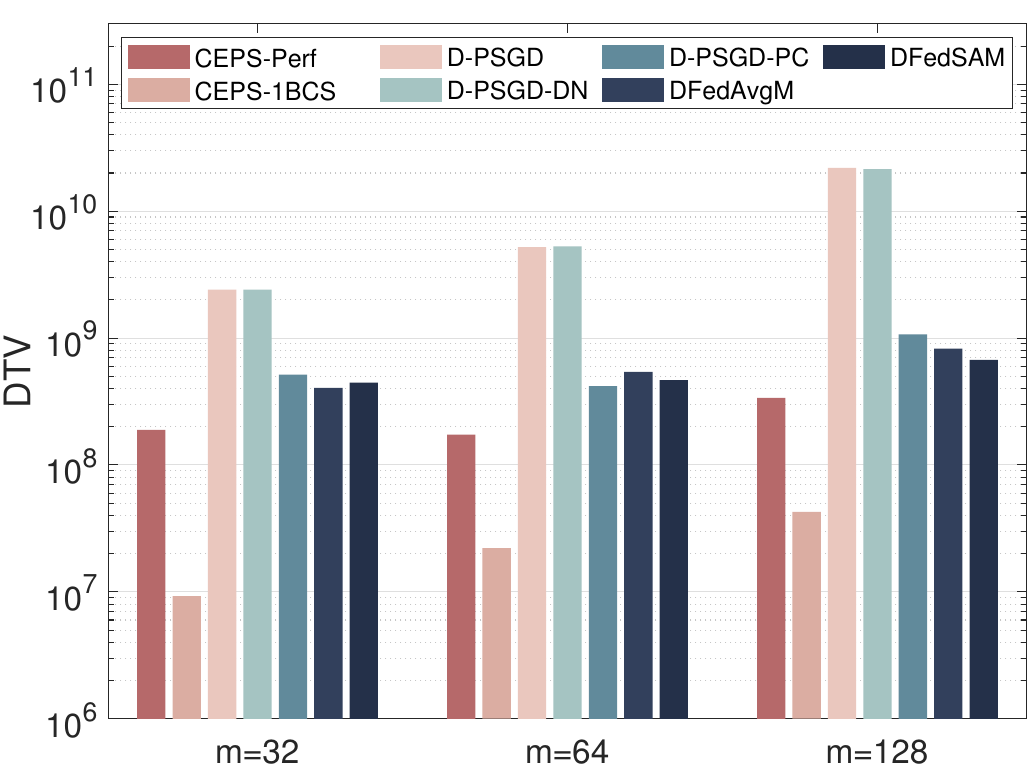} 
		\caption{Data transmission volume (DTV) v.s. $m$.}
		\label{example1_data}
	\end{figure*}	
	
		\begin{figure*}[!t]
		\centering 
			\includegraphics[width=0.325\textwidth]{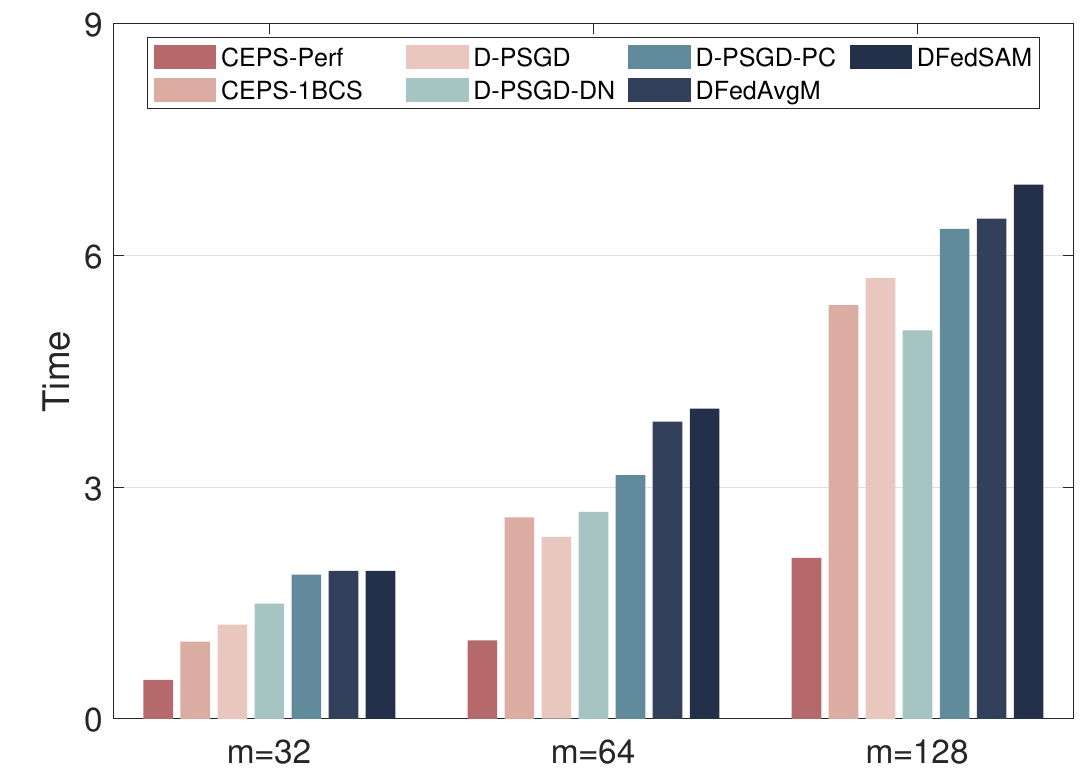}
			\includegraphics[width=0.325\textwidth]{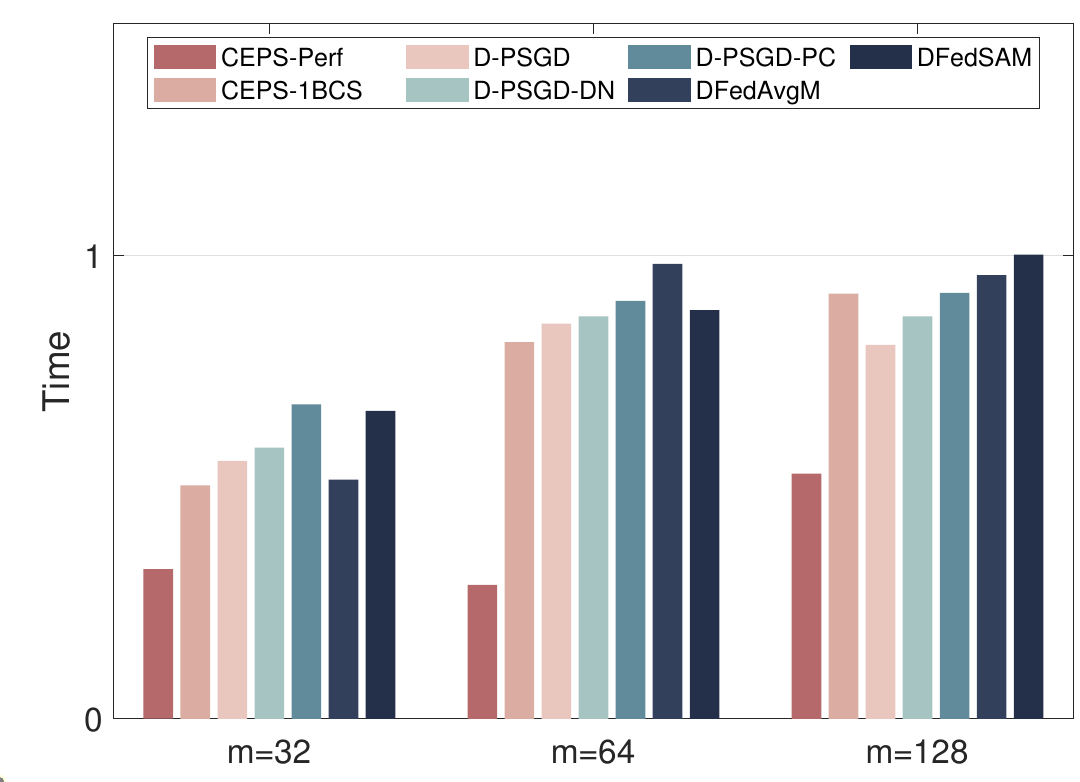}
			\includegraphics[width=0.325\textwidth]{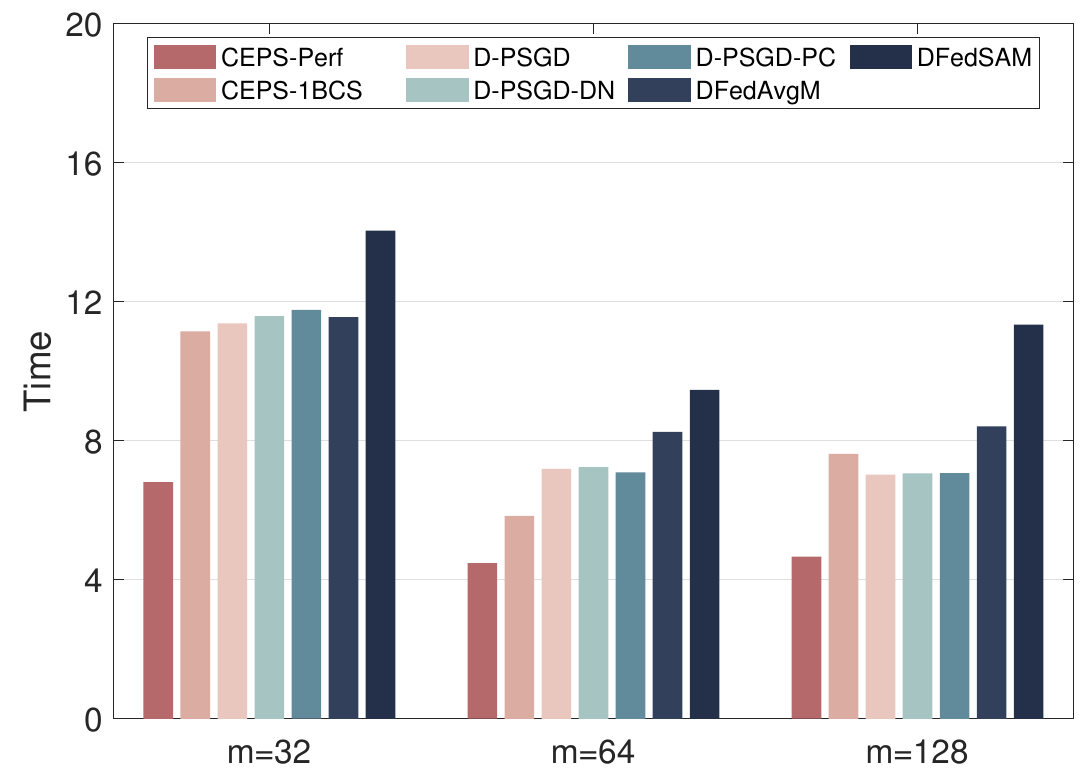} 
		\caption{Time v.s. $m$.}
		\label{example1_time}
	\end{figure*}	
	
	\section{Conclusion}
	\label{Conclusion}
	
	This paper introduces a novel algorithm, CEPS, for SDFL. By addressing critical challenges such as communication overhead, computational efficiency, robustness against stragglers, and privacy protection, CEPS provides a promising solution to improve the performance of DFL systems. 
    By integrating the 1BCS technique, the algorithm significantly reduces communication rounds and data transmission,  setting it apart from existing approaches. Moreover, CEPS leverages closed-form solutions for computational efficiency and incorporates the Gaussian mechanism for rigorous privacy protection. The theoretical convergence guarantee and numerical experiments further validate its effectiveness. Overall, CEPS stands as a promising approach for SDFL, particularly in challenging communication environments.
	
	\bibliographystyle{IEEEtran}
	\bibliography{CEPS}
	
	\newpage
	\renewcommand\thesubsection{\Alph{subsection}}
	\section*{Appendix}

	\subsection{Basic Inequalities}\label{appendixA}
		Under gradient Lipschitz assumption, for any $\w, \v$ we have 
		\begin{equation} \label{gralip}
		 \arraycolsep=1.5pt\def \arraystretch{1.25}
			\begin{array}{lcl}
			f_i\left(\w\right) \leq f_i\left(\v\right)+\left\langle\nabla f_i\left(\v\right), \w-\v\right\rangle + \frac{
				\ell}{2}\left\|\w-\v\right\|^2, 
		\end{array}\end{equation}
		and the same rule applies for $f$ with $m\ell$. For any $\w_i$ and $t>0$ 
		\begin{equation}\label{basiceq} 
		\arraycolsep=1.5pt\def\arraystretch{1.5}
			\begin{array}{l}
				\|\sum_{i=1}^{m} \w_i\|  \leq \sum_{i=1}^{m}\|\w_i\|,~\|\sum_{i=1}^m \w_i\|^2  \leq m \sum_{i=1}^m\|\w_i\|^2,\\
				2\left\langle\w_1, \w_2\right\rangle  \leq t\left\|\w_1\right\|^2+ \frac{1}{t}\left\|\w_2\right\|^2,\\ 
				\left\|\w_1+\w_2\right\|^2  \leq(1+t)\left\|\w_1\right\|^2+(1+1 / t)\left\|\w_2\right\|^2.
			\end{array}
		\end{equation}
To analyze the algorithm  globally, we define matrix $\mathbf{A}^{k}\in \mathbb{R}^{m \times m}$ to represent the communication topology by	
	\begin{equation}
		 \arraycolsep=1.5pt\def \arraystretch{1.25}
			\begin{array}{lcl}
			&	\mathbf{A}^{k}_{ji}:= \begin{cases} \frac{1}{t_i} & \text { if } j \in {N}_i^{k}, \\ 0 & otherwise,\end{cases} 
		\end{array} 
	\end{equation}
	and further define matrices $\mathbf{W}^{k},\overline{\mathbf{W}}^{k}, \mathbf{\Psi}^k,  \mathbf{\Xi}^k   \in \mathbb{R}^{n \times m}$ to represent the collection of local models, mixed models, gradients, and noise terms at step $k$:  
	\begin{equation}\label{def-WWPZ}
		 \arraycolsep=1.5pt\def \arraystretch{1.5}
			\begin{array}{lcl}
			\mathbf{W}^{k} &:=&\left(\mathbf{w}_1^{k},\mathbf{w}_2^{k}, \cdots, \mathbf{w}_m^{k}\right),\\	 
			\overline{\mathbf{W}}^{k}&:=&\left(\overline{\mathbf{w}}_1^{k}, \overline{\mathbf{w}}_2^{k},\cdots, \overline{\mathbf{w}}_m^{k}\right) = \mathbf{W}^{k}\mathbf{A}^{k}, \\
			\mathbf{\Psi}^k &:=&
			\left( \nabla f_1(\overline{\mathbf{w}}_1^{k}),  \nabla f_2(\overline{\mathbf{w}}_2^{k}), \cdots   \nabla f_m(\overline{\mathbf{w}}_m^{k}) \right), \\ 
			\mathbf{\Xi}^k &:=& (\boldsymbol{\xi}_1^k,\boldsymbol{\xi}_2^k,\cdots, \boldsymbol{\xi}_m^k) .
		\end{array}
	\end{equation}
    By the definition of average point
    \begin{equation}\label{avgpoint}
			 \arraycolsep=1.5pt\def \arraystretch{1.25}
			\begin{array}{lcl}
				\boldsymbol{\varpi}^{a k_0}  :=\frac{1}{m}\sum_{i=1}^m\mathbf{w}_i^{a k_0},
			\end{array}
	\end{equation}
we have 
		\begin{equation}\label{avgpoint-1}
			 \arraycolsep=1.5pt\def \arraystretch{1.25}
			\begin{array}{lcl}
				\boldsymbol{\varpi}^{\alpha}  =\frac{1}{m}\sum_{i=1}^m\mathbf{w}_i^{\alpha k_0}=\frac{\mathbf{W}^{\alpha k_0} \mathbf{1}_m}{m},~\text{where} ~\alpha &:= ak_0.
			\end{array}
	\end{equation} 
	For simplicity, for each $a=1,2,3,\ldots$, define
\begin{equation}\label{def-YPe}
			 \arraycolsep=1.5pt\def \arraystretch{1.25}
			\begin{array}{lcl}
				\U^{a k_0} &:= \left(\mathbf{u}_1^{a k_0}, \cdots, \mathbf{u}_m^{a k_0}\right)  {=} c \W^{a k_0}\A^{a k_0} - \Ps^{a k_0},
			\end{array}
		\end{equation}
        where the last equality holds by the updating step of Algorithm 1: $\mathbf{u}_i^{k}= \sigma_i m_i^{k} \overline{\mathbf{w}}_i^k-\nabla f_i(\overline{\mathbf{w}}_i^k)$.
		And let		 
				\begin{equation}\label{def-YPe-alpha}
			 \arraycolsep=1.5pt\def \arraystretch{1.25}
			\begin{array}{lcl}
				 \A^{i\to a} &:=& 
				\left\{
				\begin{array}{ll}
				\Pi_{t=(i+1) k_0}^{(a-1) k_0}\A^t,&i=0,\ldots, a-2,\\
				\textbf{I},~~&i=a-1,
				\end{array}
				\right.\\
				 \mathbf{d}_j &:=& \frac{\om}{m}-\mathbf{e}_j, 
			\end{array}
		\end{equation} 
	 		where $\mathbf{e}_j$ is the $j$th column of identity matrix.  One can verify that  $\boldsymbol{\phi}\om^\top \mathbf{d}_j =\boldsymbol{\phi}-\boldsymbol{\phi}=0$. Finally, recall we have defined
        $c_0 := \left(\frac{ 80m\sqrt{\ell} }{1-\tau}\right)^2$ and $ \tau :=\left(1-\frac{1}{m^{mB}}\right)^{\frac{1}{B}}$, then
				\begin{equation*}
				 \arraycolsep=1.5pt\def \arraystretch{1.25}
			\begin{array}{l}
			c  \geq c_0=\left(\frac{ 80m\sqrt{\ell} }{1-\tau}\right)^2= \frac{ 6400m^2 \ell  }{(1-\tau)^2}  
			 \geq   \frac{ 80m \ell }{1-\tau}, 
			\end{array}
		\end{equation*} 
		which gives rise to
		\begin{equation}\label{lower-bd-c-tau}
		 \arraycolsep=1.5pt\def \arraystretch{1.75}
			\begin{array}{l}
			c^2(1-{\tau})^2-384 m^2\ell^2 ~\geq~  \frac{c^2(1-{\tau})^2}{2},~ c ~\geq~  2\ell,\\
			 \frac{1}{8cm}  ~\geq~ \frac{800m\ell}{c^2(1-\tau)^2},~\frac{c}{2} ~\geq~ \frac{800m^2\ell}{3(1-\tau)^2}.
			\end{array}
		\end{equation}

	\subsection{Key Lemma}\label{appendixB}
	\begin{lemma}\label{lemmaproduct}
		The product of  matrix $\A^{i\to\alpha}$ is bounded by 
		\begin{equation}\label{productbound}
			 \arraycolsep=1.5pt\def \arraystretch{1.25}
			\begin{array}{lcl}
				\left\|\A^{i\to {a}}\mathbf{d}_j\right\|^2
				\leq 64m \tau^{2(a-i-1)}, ~~ \forall i \leq {a}-1.
			\end{array}
		\end{equation}
	\end{lemma}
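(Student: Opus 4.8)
The plan is to exploit two structural properties of the communication matrices $\A^k$ and then invoke a standard mixing argument for products of stochastic matrices over connected time windows. First I would record the elementary facts. Since $i\in{N}_i^k$ for every $i$ and the nonzero entries of column $i$ all equal $1/t_i$ (with exactly $t_i$ of them), each $\A^k$ is \emph{column-stochastic}, i.e. $\om^\top\A^k=\om^\top$, and has strictly positive diagonal $\A^k_{ii}=1/t_i>0$. Hence the product $\A^{i\to a}$ is again column-stochastic, so $\om^\top\A^{i\to a}=\om^\top$, and because $\om^\top\mathbf{d}_j=0$ the vector $\A^{i\to a}\mathbf{d}_j$ remains in the mean-zero subspace. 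Transposing, I would write $(\A^{i\to a}\mathbf{d}_j)^\top=\mathbf{d}_j^\top\Q_1\Q_2\cdots\Q_p$, where $\Q_l:=(\A^{(i+l)k_0})^\top$ are row-stochastic matrices, $p:=a-1-i$, and $\mathbf{d}_j^\top\om=0$.

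The core of the argument is the $\ell_1$-ergodicity (Dobrushin) coefficient $\tau_1(\Q):=\tfrac12\max_{r,r'}\sum_k|\Q_{rk}-\Q_{r'k}|$, which is submultiplicative and satisfies $\|\mathbf{z}^\top\Q\|_1\le\tau_1(\Q)\|\mathbf{z}\|_1$ for every $\mathbf{z}$ with $\mathbf{z}^\top\om=0$. Applying this with $\mathbf{z}=\mathbf{d}_j$ gives $\|\A^{i\to a}\mathbf{d}_j\|_1\le\tau_1(\Q_1\cdots\Q_p)\|\mathbf{d}_j\|_1$. I would then group the $p$ factors into $\lfloor p/B\rfloor$ blocks of $B$ consecutive communication rounds, the leftover rounds contributing only the trivial bound $\tau_1\le1$, and bound $\tau_1$ on each block. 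Here Assumption \ref{assump-communicate} enters: over $B$ consecutive rounds the union graph ${G}_t^B$ is connected, which together with the strictly positive diagonals forces each block product to be scrambling, so its ergodicity coefficient is at most $\tau^B=1-1/m^{mB}$. Submultiplicativity then yields $\|\A^{i\to a}\mathbf{d}_j\|_1\le\tau^{B\lfloor p/B\rfloor}\|\mathbf{d}_j\|_1$.

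It remains to pass to the Euclidean norm and tidy the constants. Using $\|\cdot\|_2\le\|\cdot\|_1$, the explicit value $\|\mathbf{d}_j\|_1=2(m-1)/m\le2$, and the elementary estimate $\tau^{-2B}=(1-1/m^{mB})^{-2}\le2$ (valid for all $m\ge2,B\ge1$) to absorb the gap between $B\lfloor p/B\rfloor$ and $p$, I would obtain $\|\A^{i\to a}\mathbf{d}_j\|_2^2\le 8\,\tau^{2p}\le 64m\,\tau^{2(a-i-1)}$, which is exactly the claimed bound, with the factor $64m$ deliberately loose. The degenerate case $i=a-1$, where $\A^{i\to a}=\I$, follows directly since $\|\mathbf{d}_j\|_2^2=1-1/m\le 64m$.

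I expect the main obstacle to be the per-block estimate $\tau_1(\text{block})\le\tau^B$. Converting the qualitative statement ``the union graph is connected over $B$ rounds'' into a quantitative lower bound on the overlap of any two rows of the block product requires a careful time-respecting-path argument through the time-varying graphs, using the positive diagonals to let information ``wait'' at a node across rounds; this is precisely the uniformly-strong-connectivity reasoning of \cite{nedic2014distributed}, and is where the specific constant $1/m^{mB}$ appearing in $\tau$ originates. Everything else in the proof is routine bookkeeping.
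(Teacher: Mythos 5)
Your overall strategy (column-stochasticity, mean-zero preservation, Dobrushin coefficient $\tau_1$ on the transposed product, blocking, then $\ell_1\to\ell_2$ conversion) is a genuinely different presentation from the paper's, which does none of this bookkeeping from scratch: the paper simply invokes \cite[Corollary 2]{nedic2014distributed} to get a stochastic vector $\boldsymbol{\phi}$ with the entrywise bound $\left|[\A^{i\to a}-\boldsymbol{\phi}\om^\top]_{rt}\right|\le 4\tau^{a-i-2}$, uses $\boldsymbol{\phi}\om^\top\mathbf{d}_j=0$ to subtract the rank-one limit, bounds each coordinate by $\max_t\left|[\A^{i\to a}-\boldsymbol{\phi}\om^\top]_{rt}\right|\cdot\|\mathbf{d}_j\|_1\le 8(m-1)\tau^{a-i-2}/m$, and absorbs the exponent shift via $\tau^2\ge (m-1)/m$. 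However, your sketch contains a quantitative error at exactly the step you flagged: the claim that each $B$-round block product is \emph{scrambling} with coefficient at most $\tau^B=1-1/m^{mB}$ is false. Connectivity of the union graph over a $B$-window, even with positive diagonals, only guarantees that the window product's support contains the union graph (your ``waiting at the diagonal'' argument); a connected support is generally not scrambling. Concretely, on a path graph with $m>2B+2$ nodes where all edges are active every round, after $B$ rounds the rows of the window product corresponding to the two endpoints have disjoint supports, so $\tau_1(\text{block})=1$ and no contraction occurs per $B$-window.

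The repair is to earn the contraction once per $\sim mB$ rounds, not per $B$ rounds: concatenating $m-1$ consecutive $B$-windows (each window product having support containing a connected graph plus self-loops, with minimum positive entry at least $m^{-B}$ since each factor's positive entries are $1/t_i\ge 1/m$) yields a strictly positive matrix with entries at least $m^{-mB}$, whence Hajnal's inequality gives a per-$mB$-block coefficient of at most $1-m\cdot m^{-mB}$. Crucially, to recover the paper's specific rate $\tau=(1-1/m^{mB})^{1/B}$ --- which is hard-wired into $c_0$ in \eqref{lower-bd-c} and hence into Theorem \ref{theorem2} --- you then need the elementary inequality $1-m\gamma\le(1-\gamma)^m$ with $\gamma=m^{-mB}$, i.e., $1-m^{1-mB}\le\tau^{mB}$, which converts the per-$mB$-block contraction into a per-round rate of exactly $\tau$. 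Without this step, your blocking as written (one factor of $1-m^{-mB}$ per $mB$ rounds) only yields the rate $(1-m^{-mB})^{1/(mB)}$, which is strictly larger than $\tau$, so your concluding estimate $8\tau^{2p}\le 64m\tau^{2(a-i-1)}$ would fail for large $a-i$. With the block length corrected to $mB$ and this Hajnal step supplied (your leftover-rounds absorption constant then becomes $\tau^{-2mB}=(1-m^{-mB})^{-2m}$, still $O(1)$), your argument goes through and is essentially a self-contained reproof of the cited result; the paper buys brevity by citation, while your route, once fixed, makes the origin of $\tau$ and of the constant transparent.
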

	\begin{proof}  When $i=a-1$, $\A^{i\to {a}}=\I$ which  suffices to
	\begin{equation}\label{productbound-a-1}
			 \arraycolsep=1.5pt\def \arraystretch{1.25}
			\begin{array}{lcl}
				\left\|\A^{i\to {a}}\mathbf{d}_j\right\|^2=\left\|\mathbf{d}_j\right\|^2 = \frac{m-1}{m}\leq 64m\tau^{2(\alpha-i-1)}.
			\end{array}
		\end{equation}
When $i< a-1$, by Assumption 4,  graph ${G}_t^B$ is connected and thus is strongly connected as it is  undirected. Then  by   \cite[Corollary 2]{nedic2014distributed}, there is a stochastic vector $\boldsymbol{\phi}$ such that 
		\begin{equation}
			\left|[\A^{i\to a}-\boldsymbol{\phi}\om^\top]_{rt}\right|=\left|[\A^{i\to a}]_{rt}-[\boldsymbol{\phi}]_{r}\right| \leq 4 \tau^{a-i-2},
			\label{columnstochasticinequal}
		\end{equation} 
for $\forall r,t\in V$. The above condition leads to
		\begin{equation*} \arraycolsep=1.5pt\def\arraystretch{1.5}
			\begin{array}{lcl}			
			\left|\left[(\A^{i\to a}-\boldsymbol{\phi}\om^\top)\mathbf{d}_j\right]_r\right|
			&\leq& \max_t\left|\left[\A^{i\to a}-\boldsymbol{\phi}\om^\top\right]_{rt}\right|\left\|\mathbf{d}_j\right\|_1\\
			&\leq &	
			 \frac{8(m-1)\tau^{a-i-2}}{m}, ~~\forall r\in V,
			\end{array}
		\end{equation*}
	where $\|\w\|_1$ stands for the 1-norm, and thus
		\begin{equation*}\arraycolsep=1.5pt\def\arraystretch{1.5}
			\begin{array}{lcl}
			\left\|\A^{i\to a}\mathbf{d}_j\right\|^2=\left\|(\A^{i\to a}-\boldsymbol{\phi}\om^\top)\mathbf{d}_j\right\|^2
			&\leq \frac{64(m-1)^2\tau^{2(a-i-2)}}{m}\\
			&\leq {64m\tau^{2(\alpha-i-1)}},
		\end{array}
		\end{equation*} 
		where the first equality is from $\boldsymbol{\phi}\om^\top\mathbf{d}_j=0$, the last inequality is from $m\geq2$ and
		$$\begin{array}{l}
		\tau^2 = \left(1-\frac{1}{m^{mB}}\right)^{\frac{2}{B}} \geq \left( 1-\frac{1}{m^{mB}} \right)^2\geq \left(1-\frac{1}{m^{m}} \right)^2\geq \frac{m-1}{m}. \end{array}
		$$
Together with \eqref{productbound-a-1}, we show the desired result.
	\end{proof}
\begin{lemma}\label{lemma-e-infty} Consider  $k_0$ consecutive iterations: $(a-1)k_0, (a-1)k_0+1, \ldots, ak_0-1$ for some integer $a>0$. For $t=1,\ldots,k_0-1,$ define 
		\begin{equation} \label{def-Q-P}
		\arraycolsep=1.5pt\def \arraystretch{1.75}
		\begin{array}{lcl}		
		\Q^{ak_0-k_0}&:=& \PO\left( \frac{\U^{({a}-1) k_0}+\mathbf{\Xi}^{({a}-1) k_0}}{c} \right) - \frac{\U^{({a}-1) k_0}}{c},\\ 
		 \Q^{ak_0-t}&:=&\PO\left(\frac{\U^{({a}-1) k_0}+\mu\W^{ak_0-t}}{c+\mu}\right)-\frac{\U^{({a}-1) k_0}+\mu\W^{ak_0-t}}{c+\mu},\\ 
		 \P^{({a}-1) k_0}&:=&\sum_{i=0}^{k_0-1}  \rho^{i} \Q^{ak_0-1-i},
		  ~\rho :=\frac{\mu}{c+\mu},  
		\end{array}
		\end{equation}
		and 	an error bound
		\begin{equation} \label{def-error-project}
				 \arraycolsep=1.5pt\def \arraystretch{1.25}
				 \begin{array}{ll}				
				e_{\infty}:= \underset{k\geq 1}{\sup}&\left\{\left\|\PO\left( {\U^{k}}+\mathbf{\Xi}^{k}\right) -{\U^{k}}\right\|^2, \right.\\
				&\left.~\left\|\PO\left({\U^{k}+\mu\W^{k}}\right)-\left({\U^{k}+\mu\W^{k}}\right)\right\|^2\right\}.
	\end{array}
		\end{equation}
	Then for any $a=1,2,3,\ldots,$ it holds that
					\begin{equation} \label{bd-E-P}
			 \arraycolsep=1.5pt\def \arraystretch{1.25}
			\begin{array}{lcl}
				\mathbb{E} \left\| \P^{({a}-1) k_0} \right\|^2 \leq   
				  \frac{k_0  e_{\infty}}{c^2}.
			\end{array}
		\end{equation}
		\end{lemma}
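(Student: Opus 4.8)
The plan is to reduce the whole quantity to the uniform error bound $e_\infty$ in \eqref{def-error-project} by exploiting two facts: the sparse projection $\PO$ is positively homogeneous, and the dual matrix $\U$ stays frozen throughout a block of non-communication iterations. First I would record that for any scalar $\lambda>0$ one has $\PO(\lambda\V)=\lambda\PO(\V)$, since multiplying by a positive constant preserves the ordering of the entrywise magnitudes and hence the support selected by the projection. Applying this with $\lambda=1/c$ to $\Q^{ak_0-k_0}$ and with $\lambda=1/(c+\mu)$ to $\Q^{ak_0-t}$ pulls the normalizing scalar out of the projection, so that the two families in \eqref{def-Q-P} become
$$
\Q^{ak_0-k_0}=\tfrac1c\big[\PO(\U^{(a-1)k_0}+\mathbf{\Xi}^{(a-1)k_0})-\U^{(a-1)k_0}\big],
$$
$$
\Q^{ak_0-t}=\tfrac{1}{c+\mu}\big[\PO(\U^{(a-1)k_0}+\mu\W^{ak_0-t})-(\U^{(a-1)k_0}+\mu\W^{ak_0-t})\big].
$$

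Next I would use the no-communication update \eqref{iceadmm-sub5}, which sets $\mathbf{u}_i^{k}=\mathbf{u}_i^{k-1}$, so that $\U$ is constant over the whole block, i.e. $\U^{ak_0-t}=\U^{(a-1)k_0}$ for $t=1,\dots,k_0-1$. With this identification the bracket in $\Q^{ak_0-t}$ is exactly $\PO(\U^{k}+\mu\W^{k})-(\U^{k}+\mu\W^{k})$ evaluated at $k=ak_0-t$, and the bracket in $\Q^{ak_0-k_0}$ is exactly $\PO(\U^{k}+\mathbf{\Xi}^{k})-\U^{k}$ evaluated at $k=(a-1)k_0$. Hence both brackets are dominated by the supremum defining $e_\infty$, giving the pointwise estimates $\|\Q^{ak_0-k_0}\|^2\le e_\infty/c^2$ and, since $\mu>0$, $\|\Q^{ak_0-t}\|^2\le e_\infty/(c+\mu)^2\le e_\infty/c^2$.

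Finally I would sum the geometric series. Splitting $\P^{(a-1)k_0}=\sum_{i=0}^{k_0-1}\rho^i\Q^{ak_0-1-i}$ into the single communication term ($i=k_0-1$) and the $k_0-1$ non-communication terms, the triangle inequality gives
$$
\|\P^{(a-1)k_0}\|\le \frac{\sqrt{e_\infty}}{c+\mu}\sum_{i=0}^{k_0-2}\rho^i+\frac{\sqrt{e_\infty}}{c}\,\rho^{k_0-1}.
$$
Because $1-\rho=c/(c+\mu)$, the prefactor of the geometric sum satisfies $\tfrac{1}{(c+\mu)(1-\rho)}=\tfrac1c$, and the two contributions telescope into $\|\P^{(a-1)k_0}\|\le\sqrt{e_\infty}/c$, whence $\|\P^{(a-1)k_0}\|^2\le e_\infty/c^2\le k_0 e_\infty/c^2$. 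Taking expectations preserves the inequality and yields the claim in \eqref{bd-E-P}; I note in passing that this route in fact delivers the tighter constant $1$ in place of $k_0$, so the stated factor leaves comfortable slack.

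The step I expect to require the most care is the index matching in the second paragraph: the bound hinges on recognizing that the frozen value $\U^{(a-1)k_0}$ written into the definition of $\Q^{ak_0-t}$ coincides with the running $\U^{k}$ inside the supremum for $e_\infty$, which holds only because communication occurs precisely at the multiples of $k_0$ and $\U$ is held fixed in between. Once this is secured, the homogeneity reduction and the geometric summation are routine; the treatment of the randomness needs only the observation that the Gaussian noise $\mathbf{\Xi}$ (and the node sampling) enters exclusively through the communication term, where it is already accounted for by the first quantity in the definition of $e_\infty$.
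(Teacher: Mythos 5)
Your proof is correct and follows essentially the same route as the paper: positive homogeneity of $\PO$ together with the frozen dual variable $\U^{ak_0-t}=\U^{(a-1)k_0}$ reduces each $\Q$ term to the two quantities defining $e_\infty$ (giving $\|\Q^{ak_0-k_0}\|^2\le e_\infty/c^2$ and $\|\Q^{ak_0-t}\|^2\le e_\infty/(c+\mu)^2$), after which a geometric sum in $\rho$ finishes the argument. The only difference is your final step: you apply the triangle inequality to $\|\P^{(a-1)k_0}\|$ and exploit $(c+\mu)(1-\rho)=c$, which correctly yields the sharper bound $e_\infty/c^2$, whereas the paper first invokes $\|\sum_{i}\mathbf{w}_i\|^2\le k_0\sum_{i}\|\mathbf{w}_i\|^2$ and then sums $\rho^{2i}$, incurring the extra factor $k_0$ appearing in the lemma's statement.
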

	\begin{proof} Denote $\U:= \U^{({a}-1) k_0}, \mathbf{\Xi}=\mathbf{\Xi}^{({a}-1) k_0}$, and $\W:=\W^{ak_0-t}$. It follows
		\begin{equation*}
			\begin{array}{ll}
				 \left\|\Q^{ak_0-k_0}\right\|^2= \left\|\PO\left( \frac{\U+\mathbf{\Xi}}{c} \right)- \frac{\U}{c} \right\|^2 =  \frac{\left\|\PO\left(  {\U+\mathbf{\Xi}}  \right)-  {\U}  \right\|^2}{c^2} \overset{\eqref{def-error-project}}{\leq} \frac{e_{\infty}}{c^2}.
			\end{array}
		\end{equation*}
	As $\U^{({a}-1) k_0}=\U^{ {a} k_0-t}$ for any $t=1,\ldots,k_0-1$, we have
			\begin{equation*}\arraycolsep=1.5pt\def\arraystretch{1.75}
			\begin{array}{lcl}
			 \left\|\Q^{ak_0-t}\right\|^2 &=& \left\|\PO\left( \frac{\U+\mu\W}{c+\mu} \right)- \frac{\U+\mu\W}{c+\mu}  \right\|^2 \\&= & \frac{\left\|\PO\left(  \U+\mu\W   \right)- \left(  \U+\mu\W   \right) \right\|^2}{(c+\mu)^2} ~\overset{\eqref{def-error-project}}{\leq} ~\frac{e_{\infty}}{(c+\mu)^2}.
			\end{array}
		\end{equation*}
		Using the above two conditions, we obtain	 
			\begin{equation*} 
			 \arraycolsep=1.5pt\def \arraystretch{1.75}
			\begin{array}{lcl}
				\mathbb{E} \left\| \P^{({a}-1) k_0} \right\|^2 
				&=& \mathbb{E}\left\|  \sum_{i=0}^{k_0-1} \rho^{i} \Q^{ak_0-1-i}\right\|^2 \\ 
				 &\leq &k_0\sum_{i=0}^{k_0-1}\rho^{2i}\mathbb{E}\left\|    \Q^{ak_0-1-i}\right\|^2\\
				 &\leq& k_0\sum_{i=0}^{k_0-2} \frac{\rho^{2i} e_{\infty}}{(c+\mu)^2} + \frac{k_0 \rho^{2(k_0-1)} e_{\infty}}{c^2}  \\
				&=& \frac{k_0 (1-\rho^{2(k_0-1)}) e_{\infty}}{(c+\mu)^2-\mu^2}  + \frac{k_0 \rho^{2(k_0-1)} e_{\infty}}{c^2} ~\leq~    \frac{k_0  e_{\infty} }{c^2}.
			\end{array}
		\end{equation*}
	The proof is completed. 
\end{proof}	 	
	Hereafter, for notational convenience,  let
		$\alpha:=a k_0$. 
	\begin{lemma} Under Assumption 1-2 and setting  ${c\geq c_0}$, it holds, 	 
		\begin{equation}	\label{lemmagradient}
			 \arraycolsep=1.5pt\def \arraystretch{1.75}
			\begin{array}{lcl}
				&&\frac{1}{2 c m} \sum_{a=0}^{T-1}\mathbb{E}\left\|\nabla f\left(\boldsymbol{\varpi}^{\alpha}\right)-\Ps^{\alpha} \om\right\|^2\\
				 
				&+&\frac{m\ell}{2}\sum_{a=0}^{T-1}\mathbb{E}\left\|\frac{\W^{ak_0}(\A^{\alpha}-\mathbb{E}{\A}^{\alpha})\om}{m}\right\|^2 \\
				&\leq&\frac{800m\ell}{c^2(1-\tau)^2}\left( {m^2\zeta^2T}  +    \sum_{a=0}^{T-1}\mathbb{E}\left\|\nabla f\left(\boldsymbol{\varpi}^{\alpha}\right)\right\|^2\right. \\ 
						&&\hspace{25mm} +  \left. \frac{mc^2}{3} \sum_{a=0}^{T-1}  \mathbb{E}\left\|\P^{\alpha}\right\|^2\right).
			\end{array}
		\end{equation}	 
	\end{lemma}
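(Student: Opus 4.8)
The plan is to collapse both terms on the left-hand side onto a single consensus quantity,
$S:=\sum_{a=0}^{T-1}\sum_{j}\mathbb{E}\|\W^{ak_0}\mathbf{d}_j\|^2$,
and then to control $S$ by unrolling the communication-step iterates back to $\W^0=\mathbf{0}$. First I would establish the per-round identity $\W^{ak_0}=\U^{(a-1)k_0}/c+\P^{(a-1)k_0}$ by composing the $k_0$ consecutive updates \eqref{def-w-bar-tilde}--\eqref{iceadmm-sub5}: during the $k_0-1$ non-communication steps $\U$ stays frozen at $\U^{(a-1)k_0}$ while the projection residuals accumulate geometrically into $\P^{(a-1)k_0}$ exactly as in \eqref{def-Q-P}. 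Substituting $\U^{(a-1)k_0}=c\W^{(a-1)k_0}\A^{(a-1)k_0}-\Ps^{(a-1)k_0}$ from \eqref{def-YPe} and iterating yields the telescoped representation $\W^{ak_0}=\sum_{b=0}^{a-1}(\P^{bk_0}-\Ps^{bk_0}/c)\,\A^{b\to a}$. Since $\om^\top\mathbf{d}_j=0$ and each $\A^t$ is column-stochastic, $\om^\top\A^{b\to a}\mathbf{d}_j=0$, so right-multiplying by $\mathbf{d}_j$ and invoking Lemma \ref{lemmaproduct} gives the geometric decay $\|\A^{b\to a}\mathbf{d}_j\|\le 8\sqrt{m}\,\tau^{a-b-1}$.

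Next I would reduce the two left-hand-side terms to $S$. For the first, $\nabla f(\boldsymbol{\varpi}^{\alpha})-\Ps^{\alpha}\om=\sum_i(\nabla f_i(\boldsymbol{\varpi}^{\alpha})-\nabla f_i(\overline{\w}_i^{\alpha}))$, so the Lipschitz bound \eqref{gralip} and $\|\sum_i\cdot\|^2\le m\sum_i\|\cdot\|^2$ from \eqref{basiceq} bound it by $m\ell^2\sum_i\|\boldsymbol{\varpi}^{\alpha}-\overline{\w}_i^{\alpha}\|^2$; using Assumption \ref{assump-1bcs} so that $\overline{\w}_i^{\alpha}=\tfrac{1}{t_i}\sum_{j\in N_i^{\alpha}}\w_j^{\alpha}$, I write $\overline{\w}_i^{\alpha}-\boldsymbol{\varpi}^{\alpha}=-\tfrac{1}{t_i}\sum_{j\in N_i^{\alpha}}\W^{\alpha}\mathbf{d}_j$ and apply Jensen to land on $\|\W^{\alpha}\mathbf{d}_j\|^2$-terms. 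For the second term, since $\om^\top(\A^{\alpha}-\mathbb{E}\A^{\alpha})\om=0$ I decompose $(\A^{\alpha}-\mathbb{E}\A^{\alpha})\om=-\sum_j r_j^{\alpha}\mathbf{d}_j$ with $\sum_j r_j^{\alpha}=0$, so $\W^{ak_0}(\A^{\alpha}-\mathbb{E}\A^{\alpha})\om=-\sum_j r_j^{\alpha}\,\W^{ak_0}\mathbf{d}_j$; conditioning on the history decouples the current-round fluctuation $r_j^{\alpha}$ from the history-measurable $\W^{ak_0}\mathbf{d}_j$, and Cauchy--Schwarz again reduces it to $S$.

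I would then bound $S$ itself. Inserting the telescoped representation into $\|\W^{\alpha}\mathbf{d}_j\|$, using $\|\mathbf{M}\mathbf{v}\|\le\|\mathbf{M}\|\|\mathbf{v}\|$, the decay above, and Cauchy--Schwarz with weights $\tau^{a-b-1}$ (whose sum is $\le 1/(1-\tau)$), then swapping the $a$- and $b$-sums, I expect a bound of the form $S\le \tfrac{64m^2}{(1-\tau)^2}\sum_{b}\mathbb{E}\|\P^{bk_0}-\Ps^{bk_0}/c\|^2$. The gradient factor $\|\Ps^{bk_0}\|^2=\sum_i\|\nabla f_i(\overline{\w}_i^{bk_0})\|^2$ splits, via \eqref{basiceq} and Assumption \ref{assump-gradientlip}, into a Lipschitz piece $3\ell^2\sum_i\|\overline{\w}_i^{bk_0}-\boldsymbol{\varpi}^{bk_0}\|^2$, a dissimilarity piece bounded by Assumption \ref{assump-variancebound} as $3m\zeta^2$ per round, and $\tfrac{3}{m}\|\nabla f(\boldsymbol{\varpi}^{bk_0})\|^2$.

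The main obstacle is precisely the Lipschitz piece: because gradients are evaluated at the mixed points $\overline{\w}_i^{bk_0}$ rather than at the common average, a consensus quantity reappears on the right, making the inequality self-referential. I would resolve this by summing over all rounds $a=0,\dots,T-1$, so that the self-term carries the coefficient $\tfrac{64m^2}{(1-\tau)^2}\cdot\tfrac{6\ell^2}{c^2}=\tfrac{384m^2\ell^2}{c^2(1-\tau)^2}$, and then invoking the first inequality of \eqref{lower-bd-c-tau} (available since $c\ge c_0$), which forces this coefficient to be at most $\tfrac12$; absorbing the self-term leaves a clean bound of $S$ in terms of $m^2\zeta^2T$, $\sum_a\mathbb{E}\|\nabla f(\boldsymbol{\varpi}^{\alpha})\|^2$, and $\sum_a\mathbb{E}\|\P^{\alpha}\|^2$. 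Substituting this back into the two reductions and tracking the constants (again using $c\ge 2\ell$ from \eqref{lower-bd-c-tau}) produces the stated coefficient $\tfrac{800m\ell}{c^2(1-\tau)^2}$, completing the argument.
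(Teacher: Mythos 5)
Your proposal is correct and follows essentially the same route as the paper's own proof: the same telescoped representation $\W^{ak_0}=\sum_{b=0}^{a-1}\left(\P^{bk_0}-\Ps^{bk_0}/c\right)\A^{b\to a}$ built from \eqref{facts-w-a-t} and \eqref{def-YPe}, the same geometric decay from Lemma \ref{lemmaproduct}, the same three-way split of $\mathbb{E}\|\Ps^{bk_0}\|^2$ into a Lipschitz piece, a $3m\zeta^2$ piece, and $\tfrac{3}{m}\mathbb{E}\|\nabla f(\boldsymbol{\varpi}^{bk_0})\|^2$, and the same absorption of the self-referential consensus term using the first inequality of \eqref{lower-bd-c-tau}, with matching constants ($384m^2\ell^2$ and the final $800$). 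The only local deviation is your treatment of the second left-hand-side term via the decomposition $(\A^{\alpha}-\mathbb{E}\A^{\alpha})\om=-\sum_j r_j^{\alpha}\mathbf{d}_j$ with conditioning and Cauchy--Schwarz, where the paper more directly uses double stochasticity of $\mathbb{E}\A^{\alpha}$ to write $\W^{\alpha}(\A^{\alpha}-\mathbb{E}\A^{\alpha})\om/m=\tfrac{1}{m}\sum_p\left(\overline{\w}_p^{\alpha}-\boldsymbol{\varpi}^{\alpha}\right)$ and applies Jensen, which is a simpler path to the same bound.
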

	\begin{proof}We begin with rewriting the update rule of Algorithm 1 when $\kappa_i=k_0, \forall i$. For $t=1,2,\ldots,k_0-1$		 
		\begin{equation} \label{facts-w-a-t}
			 \arraycolsep=1.5pt\def \arraystretch{1.25}
			\begin{array}{lcl}		
\W^{({a}-1) k_0+1}&{=} &~ \PO\left( \frac{\U^{({a}-1) k_0}+\mathbf{\Xi}^{({a}-1) k_0}}{c} \right)\\
&\overset{\eqref{def-Q-P}}{=} &~\Q^{({a}-1) k_0}+ \frac{\U^{({a}-1) k_0}}{c},\\
\W^{ak_0-t+1}&{=}&~\PO\left(\frac{\U^{({a}-1) k_0}+\mu\W^{ak_0-t}}{c+\mu}\right)\\
&\overset{\eqref{def-Q-P}}{=} &~ \Q^{ak_0-t} +\frac{\U^{({a}-1) k_0}+\mu\W^{ak_0-t
}}{c+\mu}.
			\end{array}
		\end{equation}	 
 Using the above condition, we have
	\begin{equation*}
				 \arraycolsep=1.5pt\def\arraystretch{1.5}
			\begin{array}{lcl}
				&&\W^{ak_0}	\overset{\eqref{facts-w-a-t}}{=}  \Q^{ak_0-1} +\frac{\U^{({a}-1) k_0}+\mu\W^{ak_0-1}}{c+\mu}\\[1ex]
				&{=}&\Q^{ak_0-1} +\frac{\U^{({a}-1) k_0} }{c+\mu}+\rho\W^{ak_0-1}\\	
					&\overset{\eqref{facts-w-a-t}}{=} &    \Q^{ak_0-1}+ \rho \Q^{ak_0-2} 
				 +\frac{\left(1+ \rho\right)\U^{({a}-1) k_0}}{c+\mu}+ \rho^2 \W^{ak_0-2}  \\[1ex]
					&{=}&\cdots\\
					&{=}& \sum_{i=0}^{k_0-2}  \left(\rho^{i} \Q^{ak_0-i-1}   +   \frac{\rho^{i}\U^{({a}-1) k_0}}{c+\mu}\right) +\rho^{ k_0-1}  \W^{({a}-1) k_0+1 }\\
						&\overset{\eqref{facts-w-a-t}}{=} & \sum_{i=0}^{k_0-2}  \rho^{i} \Q^{ak_0-i-1}    +  \frac{\left(1-\rho^{ k_0-1} \right)\U^{({a}-1) k_0}}{c}+  \frac{\rho^{ k_0-1} \U^{({a}-1) k_0 }}{c}\\					
						&\overset{\eqref{def-Q-P}}{=} & \P^{({a}-1) k_0}   +   \frac{ \U^{({a}-1) k_0}}{c},  							\end{array}
		\end{equation*} 	  
		which further results in 
		\begin{equation}
				 \arraycolsep=1.5pt\def \arraystretch{1.5}
			\begin{array}{lcl}
					&&\W^{{a} k_0}   {=}   	 \frac{ \U^{({a}-1) k_0}}{c}  + \mathbf{P}^{({a}-1)k_0} \\[1ex]
				 &\overset{\eqref{def-YPe}}{=} &  \W^{({a}-1) k_0}\A^{({a}-1)k_0} -\frac{1}{c}\Ps^{({a}-1)k_0}  + \mathbf{P}^{({a}-1)k_0} \\
				&{=} & \W^{({a}-2) k_0}\A^{({a}-2)k_0}\A^{({a}-1)k_0}\\
				&-&\frac{1}{c}\left(\Ps^{({a}-1)k_0}+\Ps^{({a}-2)k_0}\A^{({a}-1)k_0}\right)\\
				&+&\left(\P^{({a}-1)k_0}+\P^{({a}-2)k_0}\A^{({a}-1)k_0}\right)
				\\
				&{=} & \cdots \\
				&{=} &  \W^0\prod_{i=0}^{{a}-1}\A^{ik_0}   -\frac{1}{c} \sum_{i=0}^{{a}-1}\Ps^{ik_0}\prod_{q=i+1}^{{a}-1} \A^{q k_0} \\
				&+& \sum_{i=0}^{{a}-1}\mathbf{P}^{ik_0}\prod_{q=i+1}^{{a}-1} \A^{q k_0} ,
			\end{array}
			\label{commustep}
		\end{equation} 
		where $\prod_{q=i+1}^{{a}-1} \A^{qk_0} = \mathbf{I}$ when $i={a}-1$. With $\W^0 =0$, we perform two consecutive communication steps and obtain
		\begin{equation}\label{commuiter}
			 \arraycolsep=1.5pt\def \arraystretch{1.25}
			\begin{array}{lcl}
				\W^{\alpha }  \overset{(\ref{commustep},\ref{def-YPe-alpha})}{=}  -\frac{1}{c} \sum_{i=0}^{a-1}\Ps^{ik_0}\A^{i\to a}
				+ \sum_{i=0}^{a-1}\mathbf{P}^{ik_0}\A^{i\to a}.
			\end{array}
		\end{equation}
		We divide the proof into four parts:
		\vspace*{-1mm}
		\begin{proofpart}
			\label{part1}
			Direct verification enables us to derive the following chain of inequalities,		 
			\begin{equation}\label{parta-fact-1}
				 \arraycolsep=1.5pt\def \arraystretch{1.5}
			\begin{array}{lcl}
					&&\mathbb{E}\left\|\nabla f\left(\boldsymbol{\varpi}^{\alpha}\right)-\Ps^{\alpha } \om\right\|^2 \\
					&{=}&\mathbb{E}\left\|\sum_{i=1}^{m}\nabla f_i\left(\boldsymbol{\varpi}^{\alpha}\right) - \sum_{i=1}^{m}\nabla f_i\left(\overline{\w}_i^\alpha\right)\right\|^2 \\ 
					 &\overset{(\ref{basiceq})}{\leq}&
					m\sum_{i=1}^{m}\mathbb{E}\left\|\nabla f_i\left(\boldsymbol{\varpi}^{\alpha}\right) - \nabla f_i\left(\overline{\w}_i^\alpha\right)\right\|^2\\  
					&\overset{(\ref{gralip})}{\leq}&
					m \ell^2 \sum_{i=1}^m \mathbb{E}\left\|\boldsymbol{\varpi}^{\alpha}-\overline{\w}_i^{\alpha }\right\|^2\\
				&{=}& m \ell^2 \sum_{i=1}^{m}\mathbb{E}\|
					\frac{1}{t_i}\sum_{j \in {N}_i^{\alpha}}\left(\boldsymbol{\varpi}^{\alpha} - \w_j^\alpha\right)\|^2\\ 
					&\overset{(\ref{basiceq})}{\leq}& m \ell^2 \sum_{i=1}^{m}\frac{1}{t_i}\sum_{j \in {N}_i^{\alpha}}\mathbb{E}\left\|\boldsymbol{\varpi}^{\alpha} - \w_j^\alpha\right\|^2,
				\end{array}
			\end{equation} 
			where the second equality holds also due to $|{N}_i^\alpha|=m_i^\alpha = t_i $ for any $i$ and $\alpha.$
		\end{proofpart}
		\vspace*{-3mm}
		\begin{proofpart}
			\label{part3}
			By the global iteration rule, we have
			\begin{equation}
				 \arraycolsep=1.5pt\def \arraystretch{1.25}
			\begin{array}{lcl}
					&&\mathbb{E}\left\|\boldsymbol{\varpi}^{\alpha}-\w_j^{\alpha }\right\|^2
					\overset{(\ref{avgpoint})}{=}\mathbb{E}\left\|\frac{\W^{\alpha}\om}{m}-\w_j^{\alpha }\right\|^2\\
					&\overset{(\ref{def-WWPZ})}{=}& \mathbb{E}\left\|\frac{\W^{\alpha}\om}{m}-\W^{\alpha} \mathbf{e}_j\right\|^2 \\
					&\overset{(\ref{commuiter})}{=}&\mathbb{E}\left\|\left(	-\frac{1}{c} \sum_{i=0}^{a-1}\Ps^{ik_0}\A^{i\to a}
					+ \sum_{i=0}^{a-1}\mathbf{P}^{ik_0}\A^{i\to a}\right)
					\left(\frac{\om}{m}-\mathbf{e}_j\right)\right\|^2 \\
					&\overset{(\ref{basiceq})}{\leq}&\frac{2}{c^2}\mathbb{E}\left\|\sum_{i=0}^{a-1}\Ps^{ik_0}\A^{i\to a}\mathbf{d}_j\right\|^2
					+ 2\mathbb{E}\left\|\sum_{i=0}^{a-1} \P^{ik_0}\A^{i\to a}\mathbf{d}_j\right\|^2.
				\end{array}
				\label{threeterms}
			\end{equation}
		\end{proofpart}
		\begin{proofpart}
			\label{part4}
			Note that the terms in (\ref{threeterms}) share a similar structure, thus we give a detailed calculation for the first term, and the other one follows the same strategy. First, consider bounding the gradient matrix as follows:			 
			\begin{equation}\label{gradbound}
				 \arraycolsep=1.5pt\def \arraystretch{1.25}
			\begin{array}{lcl}
					&&\mathbb{E}\left\|\Ps^{ik_0}\right\|^2\\
					 &\overset{(\ref{basiceq})}{\leq} &3\mathbb{E}\left\|\Ps^{ik_0}-\mathbf{B}^{ik_0}\right\|^2 + 3\mathbb{E}\left\|\nabla f\left(\boldsymbol{\varpi}^{ik_0}\right)\frac{\om^\top}{m}\right\|^2  \\
					 &+& 3\mathbb{E}\left\|\mathbf{B}^{ik_0}-\nabla f\left(\boldsymbol{\varpi}^{ik_0}\right)\frac{\om^\top}{m}\right\|^2 \\
					 &\overset{(\ref{gralip})}{\leq}& 3\ell^2\sum_{p=1}^{m}\mathbb{E}\left\|\boldsymbol{\varpi}^{ik_0} - \overline{\w}_p^{ik_0}\right\|^2 + 3m\zeta^2 + \frac{3}{m}\mathbb{E}\left\|\nabla f\left(\boldsymbol{\varpi}^{ik_0}\right)\right\|^2,
				\end{array}
			\end{equation}	 
			where ${\mathbf{B}^{ik_0}:= (\nabla f_1(\boldsymbol{\varpi}^{ik_0}),\cdots, \nabla f_m(\boldsymbol{\varpi}^{ik_0}))}.$ Moreover, direct calculation derives that			 
			\begin{equation*}
				 \arraycolsep=1.5pt\def \arraystretch{1.75}
			\begin{array}{lcl}
					 &&\mathbb{E}\left\|\sum_{i=0}^{a-1}\Ps^{ik_0}\A^{i\to{a}}\mathbf{d}_j\right\|^2\\ 
					&{=}& \sum_{i=0}^{a-1}\mathbb{E}\left\|\Ps^{ik_0}\A^{i\to{a}}\mathbf{d}_j\right\|^2 \\
					&+&2\sum_{i=0}^{a-1}\sum_{{t}= i+1}^{a-1}\mathbb{E}\left\langle\Ps^{ik_0}\A^{i\to{a}}\mathbf{d}_j,\Ps^{{tk_0}}\A^{{t}\to a}\mathbf{d}_j\right\rangle \\
					&{\leq}&\sum_{i=0}^{a-1}\mathbb{E}\left\|\Ps^{ik_0}\right\|^2\left\|\A^{i\to{a}}\mathbf{d}_j\right\|^2 \\
					&+& 2\sum_{i=0}^{a-1}\sum_{{t}= i+1}^{a-1}\mathbb{E}\left\|\Ps^{ik_0}\right\|\left\|\Ps^{{tk_0}}\right\| \left\|\A^{i\to{a}}\mathbf{d}_j\right\| \left\|\A^{{t}\to a}\mathbf{d}_j\right\|\\
					&{\leq}& \sum_{i=0}^{a-1}\mathbb{E}\left\|\Ps^{ik_0}\right\|^2\left\|\A^{i\to{a}}\mathbf{d}_j\right\|^2 +\sum_{i=0}^{a-1}\sum_{{t}= i+1}^{a-1}\Big(\mathbb{E}\left\|\Ps^{ik_0}\right\|^2\\
					&+&\mathbb{E}\left\|\Ps^{{tk_0}}\right\|^2\Big) \left\|\A^{i\to{a}}\mathbf{d}_j\right\| \left\|\A^{{t}\to a}\mathbf{d}_j\right\|
					\\
					&{=}&
				 \sum_{i=0}^{a-1}\mathbb{E}\left\|\Ps^{ik_0}\right\|^2 \left\|\A^{i\to{a}}\mathbf{d}_j\right\| \sum_{{t}= 0}^{a-1}  \left\|\A^{{t}\to a}\mathbf{d}_j\right\|, 
				\end{array}
			\end{equation*}			 
		which results in
		\begin{equation}
		 \arraycolsep=1.5pt\def \arraystretch{1.75}
			\begin{array}{lcl}
			&&\mathbb{E}\left\|\sum_{i=0}^{a-1}\Ps^{ik_0}\A^{i\to{a}}\mathbf{d}_j\right\|^2\\ 
			&\overset{(\ref{productbound})}{\leq}&\sum_{i=0}^{a-1}\mathbb{E}\left\|\Ps^{ik_0}\right\|^2 64m\tau^{\alpha-i-1}\sum_{{t}=0}^{a-1}\tau^{a-t-1} \\
			&\leq& \sum_{i=0}^{a-1}\mathbb{E}\left\|\Ps^{ik_0}\right\|^2\frac{64m{\tau}^{a-i-1}}{1-{\tau}}.
		\end{array}
		\end{equation}
			Similar reasoning to show the above condition enables us to prove that			
			\begin{equation*}\label{firstterm-1}
				 \arraycolsep=1.5pt\def \arraystretch{1.25}
			\begin{array}{lcl}
					\mathbb{E}\left\|\sum_{i=0}^{a-1} \P^{ik_0}\A^{i\to{a}}\mathbf{d}_j\right\|^2 
					\leq \sum_{i=0}^{a-1}\mathbb{E}\left\|\P^{ik_0}\right\|^2\frac{64m{\tau}^{a-i-1}}{1-{\tau}}.
				\end{array}
			\end{equation*}			
			\begin{proofpart}
				\label{part5}
				The above two conditions immediately give rise to	 
				\begin{equation*} 
					 \arraycolsep=1.5pt\def \arraystretch{2}
			\begin{array}{lcl}
						 &&\sum_{a=0}^{T-1}\sum_{q=1}^m \mathbb{E}\left\|\boldsymbol{\varpi}^{\alpha}-\overline{\w}_q^{\alpha }\right\|^2  
						\\
						& \overset{(\ref{basiceq})}{\leq} &  \sum_{a=0}^{T-1}\sum_{q=1}^{m}\frac{1}{t_q}\sum_{j \in \mathrm{N}_q^{\alpha}}\mathbb{E}\left\|\boldsymbol{\varpi}^{\alpha} - \w_j^{\alpha}\right\|^2 \\
						&\overset{(\ref{threeterms})}{\leq}&
						\sum_{a=0}^{T-1}\sum_{q=1}^{m}\frac{1}{t_q}\sum_{j \in \mathrm{N}_q^{\alpha}}\Big(\frac{2}{c^2}\mathbb{E}\left\|\sum_{i=0}^{a-1}\Ps^{ik_0}\A^{i\to a}\mathbf{d}_j\right\|^2 \\
						&+& 2\mathbb{E}\left\|\sum_{i=0}^{a-1} \P^{ik_0}\A^{i\to a}\mathbf{d}_j\right\|^2\Big)\\
						 &{\leq}&
						\sum_{a=0}^{T-1}\sum_{q=1}^{m}\frac{1}{t_q}\sum_{j \in \mathrm{N}_q^{\alpha}}\sum_{i=0}^{a-1}(\frac{1}{c^2}\mathbb{E}\left\|\Ps^{ik_0}\right\|^2\\
						&+&  \mathbb{E}\left\|\P^{ik_0}\right\|^2)\frac{128m{\tau}^{a-i-1}}{1-{\tau}}\\	
						&{=}&
						\sum_{a=0}^{T-1} m\sum_{i=0}^{a-1}\left(\frac{1}{c^2}\mathbb{E}\left\|\Ps^{ik_0}\right\|^2+  \mathbb{E}\left\|\P^{ik_0}\right\|^2\right)\frac{128m{\tau}^{a-i-1}}{1-{\tau}}\\
						&{\leq}&
						 \frac{128m^2 }{ (1-{\tau})^2} \sum_{i=0}^{T-1}   \left(\frac{1}{c^2}\mathbb{E}\left\|\Ps^{ik_0}\right\|^2+\mathbb{E}\left\|\P^{ik_0}\right\|^2\right)  \\
					&\overset{(\ref{gradbound})}{\leq}  &
						 \frac{128m^2 }{c^2(1-{\tau})^2} \sum_{i=0}^{T-1} \left(3\ell^2\sum_{p=1}^{m}\mathbb{E}\left\|\boldsymbol{\varpi}^{ik_0} - \overline{\w}_p^{ik_0}\right\|^2 + 3m\zeta^2 \right.\\
						 &+&\left. \frac{3}{m}\mathbb{E}\left\|\nabla f\left(\boldsymbol{\varpi}^{ik_0}\right)\right\|^2\right)  
						+ \frac{128m^2}{(1-{\tau})^2} \sum_{i=0}^{T-1}  \mathbb{E}\left\|\P^{ik_0}\right\|^2,
					\end{array}
				\end{equation*}			 
				which further leads to
				\begin{equation*} 
					 \arraycolsep=1.5pt\def \arraystretch{1.75}
			\begin{array}{lcl}
					&&384m^3\zeta^2T +  384m  \sum_{a=0}^{T-1}\mathbb{E}\left\|\nabla f\left(\boldsymbol{\varpi}^{\alpha}\right)\right\|^2  
						\\
						&+&  {128m^2c^2} \sum_{a=0}^{T-1}  \mathbb{E}\left\|\P^{\alpha}\right\|^2 \\
						&{\geq}&\Big(c^2(1-{\tau})^2-384 m^2\ell^2 \Big)\sum_{a=0}^{T-1}\sum_{q=1}^m \mathbb{E}\left\|\boldsymbol{\varpi}^{\alpha}-\overline{\w}_q^{\alpha }\right\|^2\\   
						  &\overset{\eqref{lower-bd-c-tau}}{\geq}& \frac{c^2(1-{\tau})^2}{2}\sum_{a=0}^{T-1}\sum_{q=1}^m \mathbb{E}\left\|\boldsymbol{\varpi}^{\alpha}-\overline{\w}_q^{\alpha }\right\|^2. 
					\end{array}
				\end{equation*}
This results in
				\begin{equation}\label{lemma2p1}
					 \arraycolsep=1.5pt\def \arraystretch{1.75}
			\begin{array}{lcl}
						 &&\sum_{\alpha=0}^{T-1}\sum_{p=1}^m \mathbb{E}\left\|\boldsymbol{\varpi}^{\alpha}-\overline{\w}_p^{\alpha }\right\|^2\\ 
						&\leq &  \frac{800m}{c^2(1-\tau)^2} ( {m^2\zeta^2T}  +    \sum_{a=0}^{T-1}\mathbb{E}\left\|\nabla f\left(\boldsymbol{\varpi}^{\alpha}\right)\right\|^2 \\ 
						&+&  \frac{mc^2}{3} \sum_{a=0}^{T-1}  \mathbb{E}\left\|\P^{\alpha}\right\|^2 ).
					\end{array}
				\end{equation}				 
			\end{proofpart}
\end{proofpart}		
\noindent	Finally, we note $ \mathbb{E}\A^\alpha$ is a doubly stochastic matrix. Consequently,  $\frac{\W^{\alpha}\mathbb{E}\A^\alpha\om}{m} = \frac{\W^{\alpha}\om}{m}=\boldsymbol{\varpi}^\alpha$, which results in		 
			\begin{equation}\label{lemma2p2}
				 \arraycolsep=1.5pt\def \arraystretch{1.25}
			\begin{array}{lcl}
					 \frac{m \ell}{2}\mathbb{E}\left\|\frac{\W^{\alpha}(\A^{\alpha}-\mathbb{E}\A^\alpha)\om}{m}\right\|^2
					&{=}&\frac{m \ell}{2}\mathbb{E}\left\|\frac{\W^\alpha\A^\alpha\om}{m}-\frac{\W^\alpha\om}{m}\right\|\\
					&{=}&\frac{m \ell}{2}\mathbb{E}\left\|\frac{1}{m}\sum_{p=1}^{m}\left(\boldsymbol{\varpi}^{\alpha}-\overline{\mathbf{w}}_p^{\alpha }\right)\right\|^2 \\
					 &\overset{(\ref{basiceq})}{\leq}& \frac{\ell}{2}\sum_{p=1}^{m}\mathbb{E}\left\|\boldsymbol{\varpi}^{\alpha}-\overline{\mathbf{w}}_p^{\alpha }\right\|^2.
				\end{array}
			\end{equation}	
Combining (\ref{lemma2p1}), we obtain		 
		\begin{equation*}
			 \arraycolsep=1.5pt\def \arraystretch{1.75}
			\begin{array}{lcl}
				&&\frac{1}{2 c m} \sum_{a=0}^{T-1}\mathbb{E}\left\|\nabla f\left(\boldsymbol{\varpi}^{\alpha}\right)-\Ps^{\alpha } \om\right\|^2\\ 
				&+&\frac{m\ell}{2}\sum_{a=0}^{T-1}\mathbb{E}\left\|\frac{\W^{\alpha}(\A^{\alpha}-\mathbb{E}{\A}^{\alpha})\om}{m}\right\|^2 \\
				&\overset{(\ref{parta-fact-1}),(\ref{lemma2p2})}{\leq} & \left(\frac{\ell^2}{2c}+\frac{\ell}{2}\right)
				\sum_{a=0}^{T-1}\sum_{p=1}^m \mathbb{E}\left\|\boldsymbol{\varpi}^{\alpha}-\overline{\w}_p^{\alpha }\right\|^2\\  &\overset{(\ref{lower-bd-c-tau})}{\leq}& \ell \sum_{a=0}^{T-1}\sum_{p=1}^m \mathbb{E}\left\|\boldsymbol{\varpi}^{\alpha}-\overline{\w}_p^{\alpha }\right\|^2   \\
				&\overset{(\ref{lemma2p1})}{\leq}& 
				\frac{800m\ell}{c^2(1-\tau)^2} ( {m^2\zeta^2T}  +    \sum_{a=0}^{T-1}\mathbb{E}\left\|\nabla f\left(\boldsymbol{\varpi}^{\alpha}\right)\right\|^2\\  
						&+&  \frac{mc^2}{3} \sum_{a=0}^{T-1}  \mathbb{E}\left\|\P^{\alpha}\right\|^2).
			\end{array}
		\end{equation*}
	 
		The whole proof is finished.
	\end{proof}

	\begin{lemma}\label{lemmalast} Under Assumption 1 and setting  ${c\geq c_0}$, it holds
		\begin{equation}
			 \arraycolsep=1.5pt\def \arraystretch{1.5}
			\begin{array}{lcl}
			 &&\frac{1}{4c m} \mathbb{E}\left\|\nabla f\left(\boldsymbol{\varpi}^{\alpha}\right)\right\|^2 \\
			  &\leq&	\mathbb{E}f\left(\boldsymbol{\varpi}^{\alpha}\right)  - \mathbb{E} f\left(\boldsymbol{\varpi}^{\alpha+1}\right)+ \frac{3c}{2}\mathbb{E}\left\|\P^{\alpha }\right\|^2\\
				&+&\frac{1}{2 c m} \mathbb{E}\left\|\nabla f\left(\boldsymbol{\varpi}^{\alpha}\right)-\Ps^{\alpha } \om\right\|^2+\frac{m\ell}{2}\left\|\frac{\W^{\alpha}\triangle\A^{\alpha}\om}{m}\right\|^2.
			\end{array}
			\label{presummation}
		\end{equation}
	\end{lemma}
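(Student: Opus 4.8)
The plan is to derive a one-step sufficient-decrease inequality between the two consecutive communication averages $\boldsymbol{\varpi}^{\alpha}$ and $\boldsymbol{\varpi}^{\alpha+1}$ (here $\alpha=ak_0$) by applying the descent form of the gradient-Lipschitz bound \eqref{gralip} to $f$ with modulus $m\ell$, and then turning the resulting inner product into a negative multiple of $\|\nabla f(\boldsymbol{\varpi}^{\alpha})\|^2$. First I would pin down the increment exactly. Shifting the one-step relation in the second line of \eqref{commustep} by one communication block gives $\W^{(a+1)k_0}=\W^{\alpha}\A^{\alpha}-\tfrac1c\Ps^{\alpha}+\P^{\alpha}$; right-multiplying by $\om/m$ and using that $\mathbb{E}\A^{\alpha}$ is doubly stochastic, so $\W^{\alpha}(\mathbb{E}\A^{\alpha})\om/m=\boldsymbol{\varpi}^{\alpha}$, yields
\[
\boldsymbol{\varpi}^{\alpha+1}-\boldsymbol{\varpi}^{\alpha}=-\tfrac{1}{cm}\Ps^{\alpha}\om+\tfrac{\W^{\alpha}\triangle\A^{\alpha}\om}{m}+\tfrac{\P^{\alpha}\om}{m},
\]
with $\triangle\A^{\alpha}:=\A^{\alpha}-\mathbb{E}\A^{\alpha}$.

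Next, writing $g:=\nabla f(\boldsymbol{\varpi}^{\alpha})$ and $p:=\Ps^{\alpha}\om$, I would substitute this increment into the descent bound and expand the leading inner product $-\tfrac1{cm}\langle g,p\rangle$ through the polarization identity $\langle g,p\rangle=\tfrac12(\|g\|^2+\|p\|^2-\|g-p\|^2)$. This single move simultaneously supplies the dominant negative term $-\tfrac1{2cm}\|g\|^2$, a benign $-\tfrac1{2cm}\|p\|^2$, and precisely the mismatch term $+\tfrac1{2cm}\|g-p\|^2=\tfrac1{2cm}\|\nabla f(\boldsymbol{\varpi}^{\alpha})-\Ps^{\alpha}\om\|^2$ required on the right-hand side. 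Taking full expectation then removes the sampling cross term, $\mathbb{E}\langle g,\W^{\alpha}\triangle\A^{\alpha}\om/m\rangle=0$, since $g$ is measurable with respect to the history before the step-$\alpha$ neighbour selection whereas $\triangle\A^{\alpha}$ is centred; this is exactly why the companion term $\tfrac{m\ell}{2}\|\W^{\alpha}\triangle\A^{\alpha}\om/m\|^2$ is retained here (to be cancelled later against \eqref{lemmagradient}) instead of being absorbed.

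It then remains to dispose of the cross term $\langle g,\P^{\alpha}\om/m\rangle$ and the quadratic descent term $\tfrac{m\ell}{2}\|\boldsymbol{\varpi}^{\alpha+1}-\boldsymbol{\varpi}^{\alpha}\|^2$. For the former I would use Young's inequality \eqref{basiceq} with a weight chosen to split off $\tfrac14\|g\|^2/(cm)$, which lowers the net coefficient of $\|g\|^2$ from $\tfrac1{2cm}$ to the claimed $\tfrac1{4cm}$, and leaves a multiple of $\|\P^{\alpha}\om/m\|^2$; the bound $\|\P^{\alpha}\om/m\|^2\le\|\P^{\alpha}\|^2/m$ from \eqref{basiceq} converts this into a multiple of $\|\P^{\alpha}\|^2$. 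For the quadratic term I would expand the squared norm of the three-term increment, keep the $\|\W^{\alpha}\triangle\A^{\alpha}\om/m\|^2$ contribution at coefficient $\tfrac{m\ell}{2}$, and invoke the lower bounds on $c$ in \eqref{lower-bd-c-tau} (notably $c\ge 2\ell$, and in fact $c\ge c_0\gg\ell$) so that the $\|p\|^2$ contribution is dominated by the available $-\tfrac1{2cm}\|p\|^2$ and every $\|\P^{\alpha}\|^2$ piece collects into at most $\tfrac{3c}{2}\|\P^{\alpha}\|^2$. Rearranging produces the asserted inequality.

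The main obstacle is the constant accounting in this last stage. I must tune the Young weight and organize the quadratic expansion so that three things hold at once: the surviving coefficient of $\|g\|^2$ is exactly $\tfrac14\cdot\tfrac1{cm}$; the positive $\|\Ps^{\alpha}\om\|^2$ produced by the quadratic term (scaled by $\ell/(c^2m)$) is cancelled by the negative $-\tfrac1{2cm}\|\Ps^{\alpha}\om\|^2$ from polarization, which is where $c\ge c_0$ is genuinely used; and the sampling-deviation term is not inflated beyond coefficient $\tfrac{m\ell}{2}$ by the cross terms arising inside $\|-\tfrac{p}{cm}+\W^{\alpha}\triangle\A^{\alpha}\om/m+\P^{\alpha}\om/m\|^2$. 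Handling those cross terms—whose two factors both depend on the fresh step-$\alpha$ randomness—is the delicate point, and it forces the conditional-expectation argument and the large-$c$ regime to be used together.
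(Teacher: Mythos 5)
Your skeleton is the same as the paper's: the increment identity $\boldsymbol{\varpi}^{\alpha+1}-\boldsymbol{\varpi}^{\alpha}=-\frac{1}{cm}\Ps^{\alpha}\om+\frac{\W^{\alpha}\triangle\A^{\alpha}\om}{m}+\frac{\P^{\alpha}\om}{m}$, the polarization of $-\frac{1}{cm}\langle\nabla f(\boldsymbol{\varpi}^{\alpha}),\Ps^{\alpha}\om\rangle$ supplying $-\frac{1}{2cm}\|\nabla f\|^2$, $-\frac{1}{2cm}\|\Ps^{\alpha}\om\|^2$ and $+\frac{1}{2cm}\|\nabla f-\Ps^{\alpha}\om\|^2$, a Young split lowering the gradient coefficient to $\frac{1}{4cm}$, and the $c\geq c_0$ bookkeeping. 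But there is a genuine gap at exactly the point you flag as delicate, and your proposed fix does not close it. Because you apply the descent bound \eqref{gralip} \emph{once} to the full three-term increment, you must expand $\frac{m\ell}{2}\|\boldsymbol{\varpi}^{\alpha+1}-\boldsymbol{\varpi}^{\alpha}\|^2$, which produces the cross terms $-\frac{\ell}{c}\,\mathbb{E}\langle\Ps^{\alpha}\om,\mathbf{d}\rangle$ and $m\ell\,\mathbb{E}\langle\frac{\P^{\alpha}\om}{m},\mathbf{d}\rangle$ with $\mathbf{d}:=\frac{\W^{\alpha}\triangle\A^{\alpha}\om}{m}$. Neither is centred: $\Ps^{\alpha}$ is built from $\overline{\w}_i^{\alpha}$, which depends on the step-$\alpha$ neighbour selection, and $\P^{\alpha}$ carries the step-$\alpha$ noise and selection as well, so conditioning on the pre-$\alpha$ history annihilates only $\mathbb{E}\langle\nabla f(\boldsymbol{\varpi}^{\alpha}),\mathbf{d}\rangle$ (where your measurability argument is correct — indeed cleaner than the paper's analogous step, which drops $\mathbb{E}\langle\nabla f(\boldsymbol{\nu}^{\alpha}),\mathbf{d}\rangle$ even though $\boldsymbol{\nu}^{\alpha}$ is correlated with $\triangle\A^{\alpha}$). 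For the two non-centred cross terms the only remaining tool is Young \eqref{basiceq}, and any split $2\langle\cdot,\mathbf{d}\rangle\leq t^{-1}\|\cdot\|^2+t\|\mathbf{d}\|^2$ strictly inflates the coefficient of $\|\mathbf{d}\|^2$ above the stated $\frac{m\ell}{2}$. The $\langle\frac{\P^{\alpha}\om}{m},\mathbf{d}\rangle$ term carries no $1/c$ prefactor, so taking $c$ large only shrinks the inflation to $\frac{m\ell}{2}(1+O(\ell/c))$ after dumping the companion piece into the $\frac{3c}{2}\|\P^{\alpha}\|^2$ budget; it never restores the exact constant. This matters downstream: in the proof of Theorem 1 the term $\frac{m\ell}{2}\|\mathbf{d}\|^2$ is cancelled against the left-hand side of \eqref{lemmagradient}, which supplies exactly $\frac{m\ell}{2}$, so an inflated coefficient leaves an unabsorbed residual (it could be re-absorbed through \eqref{lemma2p1}, but then you would be proving a different lemma with different constants, not \eqref{presummation}).

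The paper avoids this obstruction structurally, by applying \eqref{gralip} in \emph{nested} fashion rather than once: first from $\boldsymbol{\nu}^{\alpha}:=\boldsymbol{\varpi}^{\alpha}-\frac{\Ps^{\alpha}\om}{cm}+\frac{\P^{\alpha}\om}{m}$ to $\boldsymbol{\varpi}^{\alpha+1}$, then from $\boldsymbol{\varpi}^{\alpha}-\frac{\Ps^{\alpha}\om}{cm}$ to $\boldsymbol{\nu}^{\alpha}$, and finally the polarization step \eqref{descent}. Each application has a single increment, so every quadratic remainder is a pure square ($\frac{m\ell}{2}\|\mathbf{d}\|^2$, $\frac{m\ell}{2}\|\frac{\P^{\alpha}\om}{m}\|^2$, $\frac{m\ell}{2}\|\frac{\Ps^{\alpha}\om}{cm}\|^2$) and no $\mathbf{d}$-cross terms ever arise; the cross terms instead appear as inner products of gradients at intermediate points with single increments, handled by \eqref{E-f-avg-w}, whose extra Lipschitz-deviation $\frac{\ell^2}{c^3m}\|\Ps^{\alpha}\om\|^2$ is neutralized by the polarization surplus $\frac{\ell-c}{2c^2m}\|\Ps^{\alpha}\om\|^2$ via \eqref{two-contants} — this is where $c\geq 2\ell$ from \eqref{lower-bd-c-tau} is genuinely used, not to tame $\|\mathbf{d}\|^2$-inflation. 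If you reorganize your argument into this telescoped form, your remaining steps (the $\frac{1}{4cm}$ Young split, $\|\P^{\alpha}\om/m\|^2\leq\|\P^{\alpha}\|^2/m$, and the $\frac{3c}{2}$ accounting) go through; as written, the one-shot expansion cannot deliver \eqref{presummation} with its stated coefficients.
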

	\begin{proof} Firstly, we have 
	\begin{equation}\label{E-f-avg-w}
			 \arraycolsep=1.5pt\def \arraystretch{1.75}
			\begin{array}{lcl}
				&&\mathbb{E}\left\langle\nabla f\left(\boldsymbol{\varpi}^{\alpha}
				- \frac{\Ps^{\alpha }\om}{c m}\right),\frac{\P^{\alpha }\om}{m}\right\rangle \\
				&\overset{(\ref{basiceq})}{\leq}& \frac{5cm}{4}\mathbb{E}\left\|\frac{\P^{\alpha }\om}{m}\right\|^2 + \frac{1}{5cm}\mathbb{E}\left\|\nabla f\left(\boldsymbol{\varpi}^{\alpha}
				- \frac{\Ps^{\alpha }\om}{c m}\right)\right\|^2
				 \\
				&\overset{(\ref{basiceq})}{\leq}&\frac{5cm}{4}\mathbb{E}\left\|\frac{\P^{\alpha }\om}{m}\right\|^2 
				 +\frac{1}{4cm}\mathbb{E}\left\|\nabla f\left(\boldsymbol{\varpi}^{\alpha}\right)\right\|^2 \\
				& +&\frac{1}{cm}\mathbb{E}\left\|\nabla f\left(\boldsymbol{\varpi}^{\alpha} - \frac{\Ps^{\alpha }\om}{c m}\right)-\nabla f\left(\boldsymbol{\varpi}^{\alpha}\right)\right\|^2  \\ 
				&{\leq}&\frac{5cm}{4}\mathbb{E}\left\|\frac{\P^{\alpha }\om}{m}\right\|^2 +\frac{1}{4cm}\mathbb{E}\left\|\nabla f\left(\boldsymbol{\varpi}^{\alpha}\right)\right\|^2\\
				&+& \frac{1}{c}\sum_{i=1}^m\mathbb{E}\left\|\nabla f_i\left(\boldsymbol{\varpi}^{\alpha} - \frac{\Ps^{\alpha }\om}{c m}\right)-\nabla f_i\left(\boldsymbol{\varpi}^{\alpha}\right)\right\|^2 
			  \\
				&{\leq}&\frac{5cm}{4}\mathbb{E}\left\|\frac{\P^{\alpha }\om}{m}\right\|^2 + \frac{\ell^2}{c^3m} \mathbb{E}\left\|{\Ps^{\alpha }\om}\right\|^2 
				+\frac{1}{4cm}\mathbb{E}\left\|\nabla f\left(\boldsymbol{\varpi}^{\alpha}\right)\right\|^2.
			\end{array}
		\end{equation}  
	Then the Lipschitz continuity enables the following conditions, 
		\begin{equation}\label{descent}
			 \arraycolsep=1.5pt\def \arraystretch{1.75}
			\begin{array}{lcl}
				&&\mathbb{E}f (\boldsymbol{\varpi}^{\alpha}-\frac{\Ps^\alpha\om}{c m} ) \\ 
				&\overset{(\ref{gralip})}{\leq}&
				\mathbb{E}f\left(\boldsymbol{\varpi}^{\alpha}\right)  - \frac{1}{c m}\mathbb{E} \left\langle\nabla f\left(\boldsymbol{\varpi}^{\alpha}\right), \Ps^\alpha \om\right\rangle
				+ \frac{m\ell}{2} \mathbb{E}\left\|\frac{\Ps^\alpha\om}{c m}\right\|^2 \\
				&{=}&
				\mathbb{E}f\left(\boldsymbol{\varpi}^{\alpha}\right)+ \frac{m\ell}{2} \mathbb{E}\left\|\frac{\Ps^\alpha\om}{c m}\right\|^2+\frac{1}{2c m}\mathbb{E}\left\|\nabla f\left(\boldsymbol{\varpi}^{\alpha}\right) - \Ps^\alpha\om\right\|^2 \\ 
				&-&  \frac{1}{2c m}\mathbb{E}\left\|\nabla f(\boldsymbol{\varpi}^{\alpha})\right\|^2
				-\frac{1}{2c m}\mathbb{E}\left\|\Ps^\alpha \om\right\|^2		
				\\
			 &{=}& \mathbb{E}f\left(\boldsymbol{\varpi}^{\alpha}\right) + \frac{\ell-c }{2c^2 m}\mathbb{E}\left\|\Ps^\alpha\om\right\|^2 - \frac{1}{2c m}\mathbb{E}\left\|\nabla f\left(\boldsymbol{\varpi}^{\alpha}\right)\right\|^2 \\
			 &+& \frac{1}{2c m}\mathbb{E}\left\|\nabla f\left(\boldsymbol{\varpi}^{\alpha}\right) - \Ps^\alpha\om\right\|^2.
			\end{array}
		\end{equation}
		In addition, one can check that	 
			\begin{equation}\label{two-contants}
			 \arraycolsep=1.5pt\def \arraystretch{1.75}
			\begin{array}{lcl}
&&\frac{\ell^2}{c^3m}+\frac{\ell-c}{2c^2m}  
\overset{\eqref{lower-bd-c-tau}}{\leq}  \frac{1}{2c^2m} \left(\frac{2\ell^2}{2\ell}  +\ell-2\ell \right) 
=	0 ,\\ 
&&\frac{5mc+2m\ell}{4}\overset{\eqref{lower-bd-c-tau}}{\leq}  \frac{5mc+m c}{4} \leq \frac{3mc}{2}.
			\end{array} 
		\end{equation}
		Moreover, by observing that $\frac{1}{m} \W^{\alpha}\mathbb{E}\A^\alpha\om= \frac{1}{m}\W^{\alpha}\om=\boldsymbol{\varpi}^\alpha$ and $\triangle\A^{\alpha}:=\A^{\alpha}-\mathbb{E}{\A}^{\alpha}$, it follows
			\begin{equation*}
			 \arraycolsep=1.5pt\def \arraystretch{1.5}
			\begin{array}{lcl}
				&&\boldsymbol{\varpi}^{\alpha+1}\overset{(\ref{avgpoint-1})}{=}  \frac{ \W^{\alpha+1} \om}{m}  \overset{(\ref{commustep})}{=} ~ \frac{\W^{\alpha}\A^{\alpha} \om}{m} - \frac{\Ps^{\alpha }\om}{c m}+\frac{\mathbf{\P}^{\alpha }\om}{m}  		
				\\
			 &{=}& \frac{\mathbf{\W^{\alpha}\mathbb{E}{\A}}^{\alpha }\om}{m} - \frac{\Ps^{\alpha }\om}{c m}+\frac{\mathbf{\P}^{\alpha }\om}{m} + \frac{\W^{\alpha}\triangle\A^{\alpha}\om}{m}				
				\\
				 &{=}&\boldsymbol{\varpi}^\alpha- \frac{\Ps^{\alpha }\om}{c m}+\frac{\mathbf{\P}^{\alpha }\om}{m} + \frac{\W^{\alpha}\triangle\A^{\alpha}\om}{m},
			\end{array} 
		\end{equation*}
		which by $\boldsymbol{\nu}^\alpha:=\boldsymbol{\varpi}^\alpha- \frac{\Ps^{\alpha }\om}{c m}+\frac{\mathbf{\P}^{\alpha }\om}{m}$ and $\mathbb{E} \triangle\A^{\alpha}=0$ yields	 
		\begin{equation*}
			 \arraycolsep=1.5pt\def \arraystretch{1.75}
			\begin{array}{lcl}
				 &&\mathbb{E}f\left(\boldsymbol{\varpi}^{\alpha+1}\right) \\
				&\overset{(\ref{gralip})}{\leq}& \mathbb{E}f\left(\boldsymbol{\nu}^\alpha \right) 
				+\mathbb{E} \langle\nabla f\left(\boldsymbol{\nu}^\alpha\right), \frac{\W^{\alpha}\triangle\A^{\alpha}\om}{m} \rangle
				+ \frac{m\ell}{2}\left\|\frac{\W^{\alpha}\triangle\A^{\alpha}\om}{m}\right\|^2  \\
				& {=}& \mathbb{E}f\left(\boldsymbol{\nu}^\alpha \right) 
				 + \frac{m\ell}{2}\left\|\frac{\W^{\alpha}\triangle\A^{\alpha}\om}{m}\right\|^2  \\
				&\overset{(\ref{gralip})}{\leq}&\mathbb{E}f\left(\boldsymbol{\varpi}^{\alpha} - \frac{\Ps^{\alpha }\om}{c m}\right) 
				+ \mathbb{E}\left\langle\nabla f\left(\boldsymbol{\varpi}^{\alpha} - \frac{\Ps^{\alpha }\om}{c m}\right),\frac{\P^{\alpha }\om}{m}\right\rangle  \\ 
				&+&\frac{m\ell}{2}\left\|\frac{\W^{\alpha}\triangle\A^{\alpha}\om}{m}\right\|^2 + \frac{m\ell}{2}\mathbb{E}\left\|\frac{\P^{\alpha }\om}{m}\right\|^2 \\
				&\overset{(\ref{descent})}{\leq}&\mathbb{E} f\left(\boldsymbol{\varpi}^{\alpha}\right)+\frac{\ell-c}{2 c^2 m} \mathbb{E}\left\|\Ps^{\alpha } \om\right\|^2-\frac{1}{2 c m} \mathbb{E}\left\|\nabla f\left(\boldsymbol{\varpi}^{\alpha}\right)\right\|^2 \\
				&+&\frac{1}{2 c m} \mathbb{E}\left\|\nabla f\left(\boldsymbol{\varpi}^{\alpha}\right)-\Ps^{\alpha } \om\right\|^2 + \frac{m\ell}{2}\mathbb{E}\left\|\frac{\P^{\alpha }\om}{m}\right\|^2
				 \\
				 & +&
				 \mathbb{E}\left\langle\nabla f\left(\boldsymbol{\varpi}^{\alpha} - \frac{\Ps^{\alpha }\om}{c m}\right),\frac{\P^{\alpha }\om}{m}\right\rangle
				 +\frac{m\ell}{2}\left\|\frac{\W^{\alpha}\triangle\A^{\alpha}\om}{m}\right\|^2  \\
				&\overset{(\ref{E-f-avg-w})}{\leq}&\mathbb{E} f\left(\boldsymbol{\varpi}^{\alpha}\right)
				+\left(\frac{\ell^2}{c^3m}+\frac{\ell-c}{2c^2m}\right) \mathbb{E}\left\|\Ps^{\alpha } \om\right\|^2\\
				&-&\frac{1}{4c m} \mathbb{E}\left\|\nabla f\left(\boldsymbol{\varpi}^{\alpha}\right)\right\|^2 + \frac{1}{2 c m} \mathbb{E}\left\|\nabla f\left(\boldsymbol{\varpi}^{\alpha}\right)-\Ps^{\alpha } \om\right\|^2\\
				& +&
				\frac{5mc+2m\ell}{4}\mathbb{E}\left\|\frac{\P^{\alpha }\om}{m}\right\|^2 +\frac{m\ell}{2}\left\|\frac{\W^{\alpha}\triangle\A^{\alpha}\om}{m}\right\|^2\\
				&\overset{(\ref{two-contants})}{\leq}&\mathbb{E} f\left(\boldsymbol{\varpi}^{\alpha}\right) -\frac{1}{4c m} \mathbb{E}\left\|\nabla f\left(\boldsymbol{\varpi}^{\alpha}\right)\right\|^2 +\frac{3c}{2}\mathbb{E}\left\|\P^{\alpha }\right\|^2  \\
				&+&\frac{1}{2 c m} \mathbb{E}\left\|\nabla f\left(\boldsymbol{\varpi}^{\alpha}\right)-\Ps^{\alpha } \om\right\|^2
				+\frac{m\ell}{2}\left\|\frac{\W^{\alpha}\triangle\A^{\alpha}\om}{m}\right\|^2,
			\end{array} 
		\end{equation*} 
finishing the proof.
	\end{proof}
	\subsection{Proof of Theorem 1}\label{appendixC}
	\begin{proof}
		Summing (\ref{presummation}) from $\alpha=0$ to $\alpha = T-1$ gives
		\begin{equation*}
			 \arraycolsep=1.5pt\def \arraystretch{1.75}
			\begin{array}{lcl}
				&& \frac{1}{4c m} \sum_{a=0}^{T-1} \mathbb{E}\left\|\nabla f\left(\boldsymbol{\varpi}^{\alpha}\right)\right\|^2 \\ 
				&\overset{(\ref{presummation})}{\leq}& \sum_{a=0}^{T-1}  \Big(\mathbb{E}f\left(\boldsymbol{\varpi}^{\alpha}\right) -\mathbb{E}f\left(\boldsymbol{\varpi}^{\alpha+1}\right)+
				\frac{3c}{2}\mathbb{E}\left\|\P^{\alpha }\right\|^2\\
				&+& \frac{1}{2 c m} \mathbb{E}\left\|\nabla f\left(\boldsymbol{\varpi}^{\alpha}\right)-\Ps^{\alpha } \om\right\|^2 
				+\frac{m\ell}{2}\left\|\frac{\W^{\alpha}\triangle\A^{\alpha}\om}{m}\right\|^2 \Big)\\ 	
				&{=}& \mathbb{E}f\left(\boldsymbol{\varpi}^{0}\right) -\mathbb{E}f\left(\boldsymbol{\varpi}^{T}\right)+\sum_{a=0}^{T-1} \Big(  
				\frac{3c}{2}\mathbb{E}\left\|\P^{\alpha }\right\|^2 \\
				&+& \frac{1}{2 c m} \mathbb{E}\left\|\nabla f\left(\boldsymbol{\varpi}^{\alpha}\right)-\Ps^{\alpha } \om\right\|^2  +\frac{m\ell}{2}\left\|\frac{\W^{\alpha}\triangle\A^{\alpha}\om}{m}\right\|^2\Big)\\ 	
				&\overset{(\ref{lemmagradient})}{\leq} & f\left(0\right) - f ^* +\frac{3c}{2}\sum_{a=0}^{T-1}  
				\mathbb{E}\left\|\P^{\alpha }\right\|^2+\frac{800m\ell}{c^2(1-\tau)^2}\Big( {m^2\zeta^2T}  \\
				&+&    \sum_{a=0}^{T-1}\mathbb{E}\left\|\nabla f\left(\boldsymbol{\varpi}^{\alpha}\right)\right\|^2  
				+  \frac{mc^2}{3} \sum_{a=0}^{T-1}  \mathbb{E}\left\|\P^{\alpha}\right\|^2\Big) \\ 				
				&\overset{(\ref{lower-bd-c-tau})}{\leq}& f\left(0\right) - f ^* +    2c\sum_{a=0}^{T-1} \mathbb{E}\left\|\P^{\alpha }\right\|^2\\
				&+&  \frac{c_0m^2\zeta^2T}{8mc^2} +   \frac{1}{8cm}\sum_{a=0}^{T-1}\mathbb{E}\left\|\nabla f\left(\boldsymbol{\varpi}^{\alpha}\right)\right\|^2  
				,
			\end{array}
		\end{equation*}
which immediately delivers that
				\begin{equation*} 
			 \arraycolsep=1.5pt\def \arraystretch{1.75}
			\begin{array}{lcl}
				&& \frac{1}{8c m} \sum_{a=0}^{T-1} \mathbb{E}\left\|\nabla f\left(\boldsymbol{\varpi}^{\alpha}\right)\right\|^2 \\ 
				&{\leq}& f\left(0\right) - f ^* +  \frac{c_0m^2\zeta^2T}{8mc^2}   +  2c\sum_{a=0}^{T-1} \mathbb{E}\left\|\P^{\alpha }\right\|^2\\
				 &\overset{\eqref{bd-E-P}}{\leq }& f\left(0\right) - f ^* +  \frac{c_0m^2\zeta^2T}{8mc^2}   +  \frac{2k_0  e_{\infty} }{c}.
			\end{array}
		\end{equation*} 
This condition suffices to
				\begin{equation*} 
			 \arraycolsep=1.5pt\def \arraystretch{1.5}
			\begin{array}{lcl}
				&& \frac{1}{  T} \sum_{a=0}^{T-1} \mathbb{E}\left\|\nabla f\left(\boldsymbol{\varpi}^{\alpha}\right)\right\|^2\\  
				 &{\leq}&  \frac{8cm(f\left(0\right) - f ^*)}{T}+   \frac{c_0m^2\zeta^2+16mck_0  e_{\infty}}{c}, 
			\end{array}
		\end{equation*} 	
	as desired. 
	\end{proof}

    \subsection{Proof of Theorem 2}\label{appendixD}
    \begin{proof}
    		At each iteration $k$, node $i$ accesses its raw dataset only during the gradient computation $\nabla f_i(\overline{\mathbf{w}}_i^k)$, i.e., at steps $ak_0$. By Assumption 5, the sensitivity of this computation is $u_i$. According to the Gaussian mechanism \cite[Theorem 3.22]{dwork2014algorithmic}, adding Gaussian noise with variance $\varrho_i$ ensures that the gradient computation satisfies $(\varepsilon, \delta)$-differential privacy. Now let $\mathcal{A}^k({D})$ be the operations of each iteration $k$ of Algorithm 1. Then it can be decomposed as a series of parallel operations ${\{\mathcal{A}_{i}^k({D}_i):i\in{V}\}}$ on all nodes, namely, ${\mathcal{A}^k({D}) = (\mathcal{A}_{1}^k({D}_1),\mathcal{A}_{2}^k({D}_2),\cdots,\mathcal{A}_{m}^k({D}_m))}$. By the parallel composition theorem \cite{mcsherry2009privacy}, the overall privacy budget is not affected by nodes' number $m$ because each data point is accessed only once across the local operations.  As a result,  each iteration $\mathcal{A}^k({D})$  is also $(\varepsilon, \delta)$-differential private.
    	\end{proof}
        \subsection{Proof of Theorem 3}\label{appendixE}
	\begin{proof}
		During $ak_0$ iterations, the noises are added when only on steps ${k=0, k_0, \ldots, (a-1)k_0}$. Therefore, the algorithm with $ak_0$ steps can be considered as an adaptive composition of $(\mathcal{A}_0({D}),\mathcal{A}_1({D}),\cdots,\mathcal{A}_{a-1}({D}))$, which by \cite[Theorem 3.2]{dwork2014algorithmic}   enables the conclusion.
	\end{proof}	
\end{document}